\newcommand{\inner}[3]{\left\langle #1,#2\right\rangle_{#3}}
\newcommand{\athroisma}[4]{\sum_{#1=#2}^{#3}#4}
\newcommand{\ar}{\mbf{R}}
\newenvironment{proofof}[1]{\begin{proof}[{#1}]}{\end{proof}}
\definecolor{ethblue}{cmyk}{1.0,0.7,0,0.3}
\definecolor{tangored}{HTML}{A40000}
\definecolor{tangogreen}{HTML}{4E9A06}
\tikzstyle{block} = [draw, line width=1.0, black, fill=white, minimum size=1.0cm, rounded corners=0.1cm]
\tikzstyle{vecArrow} = [
\tikzstyle{vecArrowDouble} = [
\newcommand{\expert}{{\pi_{\textup{E}}}}
\newcommand{\apprentice}{\pi_{\textup{A}}}
\newcommand{\mbs}{\boldsymbol}
\newcommand{\cost}{\mbf{c}}
\newcommand{\true}{\mbf{c_{\textup{true}}}}
\newcommand{\weight}{\mbf{w}}
\newcommand{\uv}{\mbf{u}}
\newcommand{\mv}{\mbs{\mu}}
\newcommand{\op}{\mbf{T}_{\gamma}}
\newcommand{\initial}{\mbs{\nu}_0}
\newcommand{\val}{\mbf{V}}
\newcommand{\RL}[1]{(\textbf{\textup{RL}}_{#1})}
\newcommand{\LfD}{(\textbf{\textup{LfD}}_{\pi_{\textup{E}}})}
\newcommand{\primal}{(\textbf{\textup{P}}_{\pi_{\textup{E}}})}
\newcommand{\dual}{(\textbf{\textup{D}}_{\pi_{\textup{E}}})}
\newcommand{\thv}{\mbs{\theta}}
\newcommand{\lv}{\mbs{\lambda}}
\newcommand{\lag}{\mcf{L}}
\newcommand{\rlag}{{\mcf{L}_r}}
\newcommand{\mlag}{\bar{\mcf{L}}_r}
\newcommand{\elag}{\widehat{\mcf{L}}_r}
\newcommand{\phim}{\mbs{\Phi}}
\newcommand{\psim}{\mbs{\Psi}}
\newcommand{\cma}{\mbf{C}}
\newcommand{\psiv}{\mbs{\psi}}
\newcommand{\phiv}{\mbs{\phi}}
\newcommand{\fev}{\mbs{\rho}_{\cma}(\expert)}
\newcommand{\efev}{\mbs{\rho}_{\cma}(\widehat{\expert})}
\newcommand{\mth}{\mv_{\thv}}
\newcommand{\ul}{\uv_{\lv}}
\newcommand{\cw}{\cost_{\weight}}
\newcommand{\cbg}{C_{\beta,\gamma}}
\newcommand{\mexp}{\mv_{\expert}}
\newcommand{\zv}{\mbf{z}}
\newcommand{\pth}{\pi_{\thv}}
\newcommand{\pem}{\pi_{\mv}}
\newcommand{\tha}{\thv_{\textup{A}}}
\newcommand{\la}{\lv_{\textup{A}}}
\newcommand{\wa}{\weight_{\textup{A}}}
\newcommand{\ut}{\tilde{\uv}}
\newcommand{\lt}{\tilde{\lv}}
\newcommand{\ma}{\mv_{\textup{A}}}
\newcommand{\ua}{\uv_{\textup{A}}}
\newcommand{\mmat}{\mbf{M}}
\newcommand{\bmat}{\mbf{B}}
\newcommand{\pmat}{\mbf{P}}
\newcommand{\eith}{\mbf{e}_{i_{\thv}}}
\newcommand{\cith}{\cost_{i_{\thv}}}
\newcommand{\mpa}{\mv_{\apprentice}}
\newcommand{\weirdv}{\val_{\cith}^{\pth}}
\newcommand{\weirdrho}{\rho_{\cith}(\pth)}
 \newcommand{\thN}{\widehat{\thv}_N}
 \newcommand{\lN}{\widehat{\lv}_N}
 \newcommand{\wN}{\widehat{\weight}_N}
 \newcommand{\pN}{\widehat{\pi}_N}
 \newcommand{\zN}{\widehat{\zv}_N}
 \newcommand{\fn}{\mathcal{F}_{n-1}}
 \newcommand{\hi}{\hat{\iota}}
 \newcommand{\hx}{\hat{x}}
 \newcommand{\ha}{\hat{a}}
 \newcommand{\hy}{\hat{y}}
 \newcommand{\ti}{\tilde{\iota}}
 \newcommand{\tx}{\tilde{x}}
 \newcommand{\ta}{\tilde{a}}
 \newcommand{\ty}{\tilde{y}}
 \newcommand{\txp}{\tilde{x}'}
 \newcommand{\bi}{\bar{\iota}}
 \newcommand{\bx}{\bar{x}}
 \newcommand{\ba}{\bar{a}}
 \newcommand{\gv}{\mbf{g}}
 \newcommand{\gth}{\mbf{g}_{\thv}}
 \newcommand{\gl}{\mbf{g}_{\lv}}
 \newcommand{\gw}{\mbf{g}_{\weight}}
 \newcommand{\sadav}{\epsilon_{\textup{sad}}(\zN)}
 \newcommand{\mpm}{\mv_{\pem}}
 \newcommand{\sad}{\epsilon_{\textup{sad}}(\boldsymbol{\theta},\boldsymbol{\lambda},\weight)}
 \newcommand{\lvz}{\lv_0}
 \newcommand{\ulz}{\uv_{\lvz}}
 \newcommand{\wtrue}{\weight_{\textup{true}}}
\newcommand{\iid}{i.i.d.\ }
\newcommand{\mcf}{\mathcal}
\newcommand{\mbf}{\mathbf}
\DeclareMathOperator{\Var}{\mbf{Var}}
\DeclareMathOperator{\Exp}{\mbf{E}}
\DeclareMathOperator{\Prob}{\mbf{P}}
\newcommand{\ones}{\mbf{1}}
\renewcommand{\Re}{\mbf{R}}
\newcommand{\sspace}{\mathcal{X}}     
\newcommand{\aspace}{\mathcal{A}}     
\newcommand{\abs}[1]{|#1|}
\newcommand{\norm}[1]{\lVert#1\rVert}
\newcommand{\innerprod}[2]{\left\langle{#1},{#2}\right\rangle}
\newtheorem{definition}{Definition}
\newtheorem{lemma}{Lemma}
\newtheorem{proposition}{Proposition}
\newtheorem{theorem}{Theorem}
\newtheorem{corollary}{Corollary}
\newtheorem{assumption}{Assumption}
	\icmltitlerunning{Efficient Performance Bounds for Primal-Dual Reinforcement Learning from Demonstrations}
\begin{document}
		
		\twocolumn[
		\icmltitle{Efficient Performance Bounds for Primal-Dual Reinforcement\\ Learning from Demonstrations}
		
		
		
		\icmlsetsymbol{equal}{*}
		
		\begin{icmlauthorlist}
			\icmlauthor{Angeliki Kamoutsi}{to}
			\icmlauthor{Goran Banjac}{to}
			\icmlauthor{John Lygeros}{to}
		\end{icmlauthorlist}
		
		\icmlaffiliation{to}{Automatic Control Laboratory, ETH Zurich, Switzerland}
		
		\icmlcorrespondingauthor{Angeliki Kamoutsi}{kamoutsa@ethz.ch}
		
		\icmlkeywords{Machine Learning, ICML}
		
		\vskip 0.3in
		]
		
		
		
		\printAffiliationsAndNotice{}  
		
		\begin{abstract}
			We consider large-scale Markov decision processes with an unknown cost function and address the problem of learning a policy from a finite set of expert demonstrations.
			We assume that the learner is not allowed to interact with the expert and has no access to reinforcement signal of any kind.
			Existing inverse reinforcement learning methods come with strong theoretical guarantees, but are computationally expensive, while state-of-the-art policy optimization algorithms achieve significant empirical success, but are hampered by limited theoretical understanding.
			To bridge the gap between theory and practice, we introduce a novel bilinear saddle-point framework using Lagrangian duality.
			The proposed primal-dual viewpoint allows us to develop a model-free provably efficient algorithm through the lens of stochastic convex optimization. The method enjoys the advantages of simplicity of implementation, low memory requirements, and computational and sample complexities independent of the number of states. We further present an equivalent no-regret online-learning interpretation.
		\end{abstract}
		
		\section{Introduction}
		\label{submission}
		Reinforcement learning (RL) is an area in machine learning with connections to control and optimization that has shown tremendous success in large-scale real-world applications, such as robotics, aritificial intelligence, cognitive autonomy, operations research, and healthcare~\cite{Tesauro:2002,Mnih:2015,Chen:2017,Vamvoudakis:2020}. It studies the problem of learning to act optimally in a sequential decision-making problem, while interacting with an unknown environment~\cite{Bertsekas:1996,Sutton:1998}. 
		
		In the standard RL setting a cost signal is given to instruct agents how to complete the desired task. However, oftentimes encoding preferences using demonstrations provided by an expert, is easier than designing a cost function~\cite{Pomerleau:1991,Russell:1998,Bagnell:2015}. In such cases, the 
		goal of \emph{learning from demonstrations} (LfD) or \emph{imitation learning} (IL)  is to learn a policy that achieves or even surpasses the performance of the expert policy.

		A lot of methods have been proposed to solve the LfD problem. The most straightforward approach is \emph{behavior cloning}, which casts the problem as a supervised learning problem, in which the goal is to learn a map from states to optimal actions~\cite{Pomerleau:1991}. Although behavior cloning is simple and easy to implement, the crucial i.i.d. assumption made in supervised learning is violated. As a result, the approach suffers from the problem of \emph{cascading errors}, which is related to \emph{covariate shift}~\cite{Bagnell:2015,Ho:2016}.  Later works~\cite{ross2011reduction} eliminate this distribution mismatch by formulating the problem as a no-regret online learning problem, but require interaction with the expert. On the contrary, in this paper, we consider the \emph{batch} LfD scenario, i.e., the learner can observe only a finite set of expert demonstrations, is not allowed to query the expert for more data while training, and is not provided any reinforcement signal.
		
		Inverse reinforcement learning (IRL)~\cite{Abbeel:2004} is a prevalent approach to LfD. In this paradigm, the learner first infers the unknown cost function that the expert tries to minimize and then uses it to reproduce the optimal behavior. IRL algorithms do not suffer from the problem of cascading errors because the training takes place over entire expert trajectories, rather than individual actions. In addition, since the recovered cost function ``explains'' the expert behavior, they can easily generalize to unseen states or even new environments. Note however, that most existing IRL algorithms \cite{Abbeel:2004,Ratliff:2006,Syed:2007,Neu:2007,Ziebart:2008,Abbeel:2008,Levine:2010,Levine:2011} are computationally expensive because they use RL as a subroutine.
		
		On the other hand, under the assumption of a linearly parameterized cost class, one can frame the problem as a single convex program~\cite{syed2008apprenticeship}, bypassing the intermediate step of learning the cost function. Although the associated program can be solved exactly for small-sized MDPs, the approach suffers from the \emph{curse of dimensionality}, making it intractable for large-scale problems. 
		
		The formulations and reasoning in~\cite{syed2008apprenticeship} formed the ground and inspired later state-of-the-art policy optimization algorithms~\cite{Ho:2016,Ho:2016b}. In particular, the authors in~\cite{Ho:2016} consider the case of linearly paremeterized cost classes and present policy gradient algorithms, which are parallel to those proposed in \cite{Williams:1992,Schulman:2015} for RL. Moreover, \cite{Ho:2016b} propose a generative adversarial imitation learning (GAIL) method for the case of nonlinear costs. They do so, by formulating the problem as minimax optimization and drawing a connection to generative adversarial networks \cite{Goodfellow:2014}. In particular, GAIL solves IL with alternating updates of both policy and cost functions. These approaches are model-free and achieve significant empirical success in	challenging benchmark tasks. However, in general the associated minimax problem is highly nonconvex-nonconcave and as a result remains hampered by limited theoretical understanding. 
		
		Indeed, the global convergence properties of alternating policy gradient schemes for the minimax formulation of GAIL have been studied only for linear (although infinite-dimensional) Markov decision processes (MDPs) and linear or linearizable costs~\cite{Cai:2019,Zhang:2020}. It however remains unclear whether such neural policy gradient methods converge to the optimal policy or if they converge at all, for the case of nonlinear dynamics. 
		As a result, provably efficient policy optimization schemes for the LfD problem beyond the linear setting remain largely unexplored. 
		
		 \textbf{Contributions.} In an attempt to tackle this longstanding question, in this work we present a convex-analytic viewpoint of the LfD problem.
         To this end, we adopt the apprenticeship learning (AL) formalism which carries the assumption that the unknown true cost function can be
         represented as a weighted combination of some known basis functions, where the true unknown weights specify how different desiderata should be traded-off~\cite{Abbeel:2004,Syed:2007,syed2008apprenticeship,Ho:2016b,Brown:2020a}. In particular, we make no restrictive assumptions on the MDP model (linearity or ergodicity). 
         
         Following the recent line of works~\cite{Chen:2018,Lee:2019a,Wang:2019,Bas-Serrano:2020b,Cheng:2020,Jin:2020,Shariff:2020} on approximate linear programming (ALP) for large-scale MDPs, we formulate the LfD problem as a bilinear saddle-point problem in light of Lagrangian duality. We study primal-dual optimality conditions and prove relations between saddle points of the Lagrangian function and optimal solutions to the LfD problem. In particular, we show that under the expert optimality assumption, the set of solutions to the dual linear program (LP) characterizes the set of solutions to the inverse problem, i.e., the set of cost functions for which the expert is optimal. Moreover, in this case, we show that the expert policy, the true cost function, and the true optimal value function form a saddle point of the proposed Lagrangian.  
		 
		Analogous to ALP, we obtain a linearly-relaxed saddle-point formulation by limiting our search to a linear subspace defined by a small number of features. We exhibit a formal link between approximate saddle points of the reduced Lagrangian and the optimality gap of our problem.  
		
		The aforementioned analysis lays theoretical foundations for a provably efficient stochastic primal-dual algorithm. By using linear function approximators, we propose a mirror-descent-based algorithm with a generative model and derive explicit probabilistic performance bounds on the quality of the extracted policy. A salient feature of the algorithm is that its sample complexity does not depend on the size of the state space but instead on the number of approximation features. We note however that our algorithm degrades with the approximation error due to the linear approximation architecture. We consider  both the case of weak and strong linear features~\cite{Laksh:2018,Shariff:2020}. Finally, we present an equivalent no-regret online-learning interpretation of our primal-dual algorithm. This kind of reduction has been studied up to now only for the online IL setting~\cite{ross2011reduction}, where interaction with the expert is allowed.

		\textbf{Related works} Our work is related to the LP approach to AL~\cite{syed2008apprenticeship} and the theoretical analysis of GAIL made in~\cite{Cai:2019,Chen:2020a,Zhang:2020}. Unlike~\cite{syed2008apprenticeship}, who consider the case of tabular MDPs and known dynamics, we consider large-scale problems and only access to a simulator for the MDP model. We revisit the LP approach in~\cite{syed2008apprenticeship} and focus instead on its saddle-point formulation due to its potential for scalable algorithms with theoretical guarantees~\cite{Cheng:2020,Nachum:2020}. The authors in~\cite{Cai:2019} study the global convergence properties of GAIL for the linear quadratic regulator problem by extending the results in~\cite{Fazel:2018}, while the authors in~\cite{Zhang:2020} consider the case of infinite-dimensional linear MDPs and linearizable cost functions. Similar to~\cite{Wang:2020} the policies and cost functions are approximated by overparameterized ReLU neural networks. In our work, we consider general nonlinear MDPs without any restrictive assumption such as linearity or ergodicity, and employ linear function approximators. The authors in~\cite{Chen:2020a} study the convergence and generalization of GAIL for general MDPs. However they only prove convergence to a stationary point. On the contrary, our algorithm produces a nearly-optimal policy.
		
		Our work builds upon a vast line of works on ALP for forward  RL~\cite{DeFarias:2003,Abbasi-Yadkori:2014,Chen:2018,Laksh:2018,MohajerinEsfahani:2018,Wang:2019,Lee:2019a,Bas-Serrano:2020a,Cheng:2020,Jin:2020,Shariff:2020,Bas-Serrano:2020b}. The LP approach to MDPs, which dates back to~\cite{Manne:1960,Hernandez-Lerma:1996,Borkar:1988}, has recently gained traction as an alternative to dynamic programming techniques, for its advantage to lead to problem formulations that are directly amenable to modern large-scale stochastic optimization methods. Moreover, this optimization-based approach can tackle unconventional problems involving additional safety constraints or secondary costs, where traditional dynamic programming techniques are not applicable \cite{Hernandez-Lerma:2003,Dufour:2013, Shafieepoorfard:2013}. In this paper, we develop an ALP framework for the LfD problem. While the forward policy optimization problem tries to minimize the total expected cost, the LfD problem is a minimax problem that tries to match the expert across a given cost class. In our saddle-point formulation
	    the cost function is not fixed but is itself a decision variable. Thus, the variation of the cost function during the algorithm
	    continuously changes the Lagrangian making the analysis of the problem more challenging. 
		
		\section{Preliminaries and Problem Setup}
			\subsection{Basic Definitions and Notations}
		We denote by $\ar^n_+$ and $\ar^n_{++}$ the sets of $n$-dimensional vectors with nonnegative and positive real elements, respectively.
		For a matrix $\mbf{A}\in\ar^{m\times n}$, its $p$-norm is defined by $\norm{\mbf{A}}_p\triangleq\sup\{\norm{\mbf{Ax}}_p\mid\norm{\mbf{x}}_p=1\}$.
		For vectors $\mbf{x}$ and $\mbf{y}$, we denote by $\inner{\mbf{x}}{\mbf{y}}{}$ the usual inner product. Moreover, $\mbf{x}\le \mbf{y}$ denotes elementwise inequality, i.e., $x_i\le y_i$ for all $i$. We use $\mbf{1}$ and $\mbf{0}$ to denote vectors with all elements equal to one and zero, respectively. The set of probability distributions on a finite set $\mcf{S}$ is denoted by $\Delta_{\mcf{S}}$, i.e., $\Delta_{\mcf{S}}\triangleq\{\mbf{p}\in\ar_+^{|\mcf{S}|}\mid\sum_{s\in\mcf{S}}p(s)=1\}$, where $|\mcf{S}|$ is the cardinality of $\mcf{S}$. Sums spanning over the spaces $\sspace$ and $\aspace$ will be simply denoted by $\sum_x$ and $\sum_a$, respectively. For $\mv,\mv'\in\Delta_{\mathcal{X}}$ their \emph{Kullback–Leibler divergence} is given by $\textup{KL}(\mv||\mv')\triangleq\sum_{x}\mu(x)\log\tfrac{\mu(x)}{\mu'(x)}$. For a nonempty closed convex set $\Theta$, the Euclidean projection of $\mbf{x}$ onto $\Theta$ is given by $\Pi_{\Theta}(\mbf{x})\triangleq\arg\min_{\mbf{y}\in\Theta}\norm{\mbf{x}-\mbf{y}}_2$.
		\subsection{Reinforcement Learning}
		A finite MDP is given by a tuple $\big( \sspace,\aspace,P,\initial,\cost,\gamma\big)$, where $\sspace$ is the state space, $\aspace$ is the action space, $P:\sspace\times\aspace\to \Delta_{\sspace}$ is the transition law, $\initial\in\Delta_{\sspace}$ is the initial state distribution, $\cost\in[-1,1]^{|\sspace||\aspace|}$ is the one-stage cost, and $\gamma\in(0,1)$ is the discount factor. We focus on problems where $\sspace$ and $\aspace$ are too large to be enumerated. 
		
		The MDP models a controlled discrete-time stochastic system with initial state $x_0\sim\initial$.
		At each round $t$, if the system is in state $x_t=x\in\sspace$ and  the action $a_t=a\in\aspace$ is taken, then a cost $c(x,a)$ is incurred, and the system transitions to the next state $x_{t+1}\sim P(\cdot|x,a)$. 
		
		A \emph{stationary Markov policy} is a map $\pi\colon\sspace\to\Delta_{\aspace}$, and $\pi(a|x)$ denotes the probability of choosing action $a$, while being in state $x$. We denote the space of stationary Markov policies by $\Pi_0$.
		
		The \emph{value function} $\val_\cost^\pi\in\ar^{|\sspace|}$ of $\pi$, given a cost $\cost$, is defined by
		$
		V_\cost^{\pi}(x) \triangleq(1-\gamma)\Exp_x^{\pi}\Big[\sum_{t=0}^\infty \gamma^t c(x_t, a_t)\Big],\label{V-function}
		$
		where $\Exp^{\pi}_{x}$ denotes the expectation with respect to the trajectories generated by $\pi$ starting from $x_0=x$.
		
		The goal of RL is to solve the following optimal control problem
		\begin{equation}\label{MDP}
		\rho_\cost^\star\triangleq\min_{\pi\in\Pi_0}\rho_\cost(\pi),
		\tag*{$\RL{\cost}$}
		\end{equation}
		where $\rho_\cost(\pi)=\innerprod{\initial}{\val^\pi_\cost}$ is the \emph{total expected cost} of $\pi$. 
		
		For every policy $\pi$, we define the \emph{normalized state-action occupancy measure} $\mv_\pi\in\Delta_{\sspace\times\aspace}$, by
		$
		\mu_\pi(x,a) \triangleq (1-\gamma) \sum_{t=0}^\infty \gamma^t \Prob_{\initial}^{\pi}\left[x_t=x,a_t=a\right],
		$
		where $\Prob_{\initial}^{\pi}[\cdot]$ denotes the probability of an event when following $\pi$ starting from $x_0\sim\initial$.
		The occupancy measure can be interpreted as the discounted visitation frequency of state-action pairs. This allows us to write $\rho_{\cost}(\pi)=\inner{\mv_\pi}{\cost}{}$.
		
		The \emph{optimal value function} $\val_\cost^\star\in\ar^{|\sspace|}$ is defined by
		$
		V_\cost^\star(x) \triangleq \min_{\pi\in\Pi_0}V_\cost^\pi(x).
		$
		For clarity, when the cost $\cost$ is a parameterized function, written as $\cost_\weight$ for a parameter vector $\weight$, we will replace $\cost$ by $\weight$
		in the notation of the aforementioned quantities, e.g., $\rho_\weight(\pi)$, $\val^\pi_\weight$, etc.

		\subsection{Learning from Demonstrations}
		
		The goal of LfD is to learn a policy that outperforms the expert policy $\expert$ for an unknown true cost function $\cost_{\textup{true}}$. We assume that the learner is given only a finite set of truncated expert sample trajectories and is not allowed to interact or query the expert for more data while training.
		
		Although the MDP model is not known, we assume access to a \emph{generative-model oracle} which, given a state-action pair $(x,a)$, outputs the next state $x'\sim P(\cdot|x,a)$.
		Moreover we can sample $x_0\sim\initial$.
		This is also known as the simulator-defined MDP~\cite{Szorenyi:2014,Taleghan:2015}.
		
		To address the LfD problem, we adopt the AL formalism~\cite{Abbeel:2004,syed2008apprenticeship,Ho:2016,Ho:2016b}, which carries the assumption that $\true$ belongs to a class of cost functions $\mcf{C}$. We then seek a policy that performs better than the expert across $\mcf{C}$ by solving the following minimax optimization problem
		\begin{equation}\label{AL}
		\alpha^\star\triangleq
		\min_{\pi\in\Pi_0} \max_{\cost\in\mcf{C}}\rho_\cost(\pi)-\rho_\cost(\expert).
		\tag*{$\LfD$}
		\end{equation}
		Equivalently, we can write $\alpha^\star=\min_\pi \delta_\mcf{C}(\pi,\expert)$, where
		$
		\delta_\mcf{C}(\pi,\expert)\triangleq\max_{\cost\in\mcf{C}} \big(\rho_\cost(\pi)-\rho_\cost(\expert)\big)
		$
		denotes the $\mcf{C}$-distance between $\pi$ and $\expert$~\cite{Ho:2016,Chen:2020a,Zhang:2020}.
		An optimal solution $\apprentice$ to $\LfD$ is called an \emph{apprentice policy} and satisfies $\rho_{\true}(\apprentice)\le\rho_{\true}(\expert)+\alpha^\star$ with $\alpha^\star$ being always nonpositive.
		
		Intuitively, the cost class $\mcf{C}$ distinguishes the expert from other policies. The maximization in~$\LfD$ assigns high total cost to non-expert policies and low total cost to $\expert$~\cite{Ho:2016}, while the minimization aims to find the policy that matches the expert as close as possible with respect to the $\mcf{C}$-distance.
		
		In this work, we assume that the true cost function can be  represented  as a convex combination  of  some  known features, where the true unknown weights specify how different desiderata should be traded-off.
		In particular, we consider the following cost function class~\cite{Syed:2007,syed2008apprenticeship,Ho:2016}
		$$
		\mcf{C}=\mathcal{C}_{\rm conv}\triangleq\{\cost_{\weight}\triangleq\sum_{i=1}^{n_{{c}}} w_i \cost_i \mid w_i\geq 0,\;\sum_{i=1}^{n_\cost}w_i=1\},
		$$	
		where $\{\cost_i\}_{i=1}^{n_\cost}\subset\Re^{\abs{\sspace}\abs{\aspace}}$ are fixed cost vectors, such that $\norm{\cost_i}_\infty \le 1$ for all $i=1,\ldots,n_\cost$.
		In this case, $\delta_{\mathcal{C}}(\pi,\pi_{\expert})=\max_{i\in[n_c]}\big(\rho_{\cost_i}(\pi)-\rho_{\cost_i}(\expert)\big)$.
		
		Note that this assumption is not necessarily restrictive as usually in practice the true cost function depends on just a few key properties, but the desirable weighting is unknown~\cite{Abbeel:2004}. Moreover, these features can be
		arbitrarily complex nonlinear functions and can be obtained via unsupervised learning
		from raw state observations~\cite{Brown:2020b,Chen:2020b}.
		
		We highlight that one can consider other linearly parameterized cost classes, e.g., $\mcf{C}_{\rm lin} = \{\sum_{i=1}^{n_\cost} w_i \psi_i \mid \norm{w}_2 \le 1\}$ \cite{Abbeel:2004}, leading to similar reasoning and analysis.

\subsection{Technique Overview}
 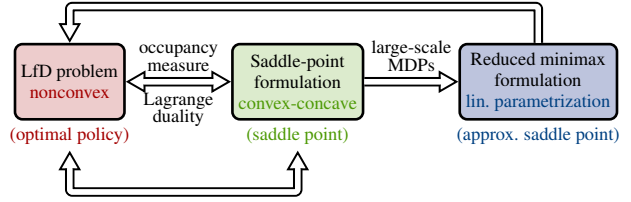
\begin{figure}
    \centering
	\begin{tikzpicture}
      \scriptsize
      \node[block, align=center,fill=tangored!20] (lfd) {LfD problem \\ \textcolor{tangored}{nonconvex}};
      \node[block,right=1.4 of lfd.east, align=center,fill=tangogreen!20] (saddle) {Saddle-point \\ formulation \\ \textcolor{tangogreen}{convex-concave}};
      \node[block,right=1.25 of saddle.east,align=center,fill=ethblue!20] (reduced) {Reduced minimax \\ formulation \\ \textcolor{ethblue}{lin.\ parametrization}};
    
      \node[below=0 of lfd] (text1) {\textcolor{tangored}{(optimal policy)}};
      \node[below=0 of saddle] (text2) {\textcolor{tangogreen}{(saddle point)}};
      \node[below=0 of reduced] (text3) {\textcolor{ethblue}{(approx.\ saddle point)}};
        
      \draw[vecArrowDouble] (lfd) -- node[above=.2em,align=center] {occupancy \\[-.2em] measure} node[below=.2em,align=center] {Lagrange \\[-.2em] duality} (saddle);
      \draw[vecArrow] (saddle) -- node[above=.2em,align=center] {large-scale \\[-.2em] MDPs} (reduced);
      \draw[vecArrow,rounded corners=1pt] (reduced) -- ++ (0,1) -| (lfd);
      \draw[vecArrowDouble,rounded corners=1pt] (text1) -- ++ (0,-.75) -| (text2);
    \end{tikzpicture}
	\caption{Main building blocks of our analysis.}
	\label{fig:diagram}
  \end{figure}                                                                  Before diving into technical details, we provide a technique overview by formalizing the main building blocks of our analysis, as described informally in the introduction. Figure~\ref{fig:diagram} illustrates the big picture behind our reduction. 

The main difficulty in $\LfD$ comes  from its nonconvex-nonconcave structure. In Section~\ref{sec:convex_optimization_view}, by using Lagrangian duality, we handle this difficulty by transforming the original nonconvex-nonconcave optimization program $\LfD$ into the bilinear saddle-point problem~(\ref{eq:full}). The new decision variables are occupancy measures $\mv$, value functions $\uv$ and cost weights $\weight$. The two formulations are equivalent, since the primal optimizer $\ma$ directly maps to that of the original problem $\apprentice$ and vice versa. Moreover, under the assumption of expert optimality the triplet $(\mv_{\expert},\mbf{V}^\star_{\mbf{w_{\textup{true}}}},\weight_{\textup{true}})$ is a saddle-point of~(\ref{eq:full}).  

For large-scale MDPs the saddle-point formulation~(\ref{eq:full}) is intractable. To mitigate this difficulty, in Section~\ref{sec:linearlyrelaxed} we consider the reduced saddle-point problem~(\ref{LRALP}) by introducing parameterized families $\{\mv_{\thv}\mid\thv\in\Theta\}$ and $\{\ul\mid\lv\in\Lambda\}$, for appropriately chosen parameter sets $\Theta$ and $\Lambda$. We then relate the saddle-point residual ${\epsilon}_{\text{sad}}(\boldsymbol{\theta},\boldsymbol{\lambda},\weight)$~(\ref{SPR}) of an approximate saddle-point $(\thv,\lv,\weight)$ to the suboptimality gap of the extracted policy $\pth$ for the original~$\LfD$. In particular we show that $\delta_{\mathcal{C}}(\pth,\expert)-\delta_{\mathcal{C}}(\apprentice,\expert)\le k\epsilon_{\textup{sad}}(\boldsymbol{\theta},\boldsymbol{\lambda},\weight)+\varepsilon_{\textup{approx}}$, where $\varepsilon_{\textup{approx}}$ is a measure of expressivity of the function approximators and $k\in\{1,3\}$. In this way, it is apparent that $\LfD$ is decoupled in two parts: minimization of ${\epsilon}_{\text{sad}}(\boldsymbol{\theta},\boldsymbol{\lambda},\weight)$  and function approximation. By choosing linear function approximators (LFA), we preserve the convexity-concavity of the Lagrangian and so we are able to design a scalable algorithm with sublinear convergence rate to a neighborhood of $\apprentice$, where the representation power of LFA determines the convergence error.

In particular, in Section~\ref{sec:Algorithm}, we present a  variance reduced stochastic mirror  descent algorithm~\cite{Nemirovski:2009}  under local norms and coordinate-wise gradient estimators~\cite{Carmon:2019,Jin:2020} to solve the $\LfD$ problem  with a generative-model oracle. After $N$ iterations and by using $m$ expert demonstrations we extract a policy $\pN$ such that $\rho_{\true}(\widehat{\pi}_N)-\rho_{\true}(\expert)\le\mathcal{O}\left(\tfrac{\log{\tfrac{1}{\delta}}}{\sqrt{N}}\right)+\mathcal{O}\left(\tfrac{\log{\tfrac{1}{\delta}}}{\sqrt{m}}\right)+\varepsilon_{\textup{approx}}$, with probability at least $1-\delta$. This translates to a sample complexity that scales linearly with the number of features  and does not depend on the number of states and actions.
  

		\section{The Convex Optimization View}\label{sec:convex_optimization_view}
		
		In this section, we revisit the LP approach to the LfD problem~\cite{syed2008apprenticeship} and introduce a novel linear duality framework, which will be later exploited to lay theoretical foundations for an efficient stochastic primal-dual algorithm. The details on the formulations and the proofs of this section can be found in Appendix~\ref{A}.
		\subsection{Primal-Dual Linear Programming Formulations}\label{subsec:primal-dual_LP}
		Assuming that  $\mcf{C}=\mcf{C}_{\rm conv}$, one can frame the  $\LfD$ problem as an LP over occupancy measures~\cite{syed2008apprenticeship}. We first review briefly the rationale behind the formulation.
		
		The set of occupancy measures can be characterized in terms of linear constraint satisfaction. To this aim, let
		$
		\mathfrak{F} \triangleq \left\lbrace \mv\in\Re^{|\sspace||\aspace|} \mid \op\mv=\initial, \; \mv\ge \boldsymbol{0} \right\rbrace,
		$
		where $\op\colon\ar^{|\sspace||\aspace|}\to\ar^{|\sspace|}$ is a linear operator given by
		$
		\op\mv=\frac{1}{1-\gamma}(\bmat-\gamma \pmat)^\intercal\mv.
		$
		Here, $\pmat$ is the vector form of $P$, i.e., $P_{(x,a),x'}\triangleq P(x'|x,a)$, and $\bmat$ is a binary matrix defined by $B_{(x,a),x'}\triangleq 1$ if $x=x'$, and $B_{(x,a),x'}\triangleq 0$ otherwise. 
		\begin{proposition}{\cite{Puterman:1994}}\label{eq:occup_meas_set}
			It holds that,
			$
			\mathfrak{F} = \left\lbrace \mv_\pi \mid \pi\in\Pi_0 \right\rbrace.
			$
			Indeed, for every $\pi\in\Pi_0$, we have that $\mv_\pi \in \mathfrak{F}$.
			Moreover, for every feasible solution $\mv\in \mathfrak{F}$, we can obtain a stationary Markov policy  $\pi_{\mv}\in\Pi_0$ by
			$
			\pi_{\mv}(a|x) \triangleq \frac{\mv(x,a)}{\sum_{a'\in\aspace}\mv(x,a')}.
			$
			Then, the induced occupancy measure is exactly $\mv$.
		\end{proposition}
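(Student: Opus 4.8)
The plan is to prove the two inclusions separately: the inclusion ``$\supseteq$'' (every $\mv_\pi$ is feasible) by directly unrolling the geometric series defining $\mv_\pi$, and the inclusion ``$\subseteq$'' (every feasible $\mv$ arises from a policy), together with the ``moreover'' statement, by a fixed-point/contraction argument.

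\textbf{Feasibility of $\mv_\pi$.} Fix $\pi\in\Pi_0$. Nonnegativity of $\mv_\pi$ is immediate from its definition. To check $\op\mv_\pi=\initial$, I would evaluate, for each state $x'$,
\[
((\bmat-\gamma\pmat)^\intercal\mv_\pi)(x')=\sum_a\mu_\pi(x',a)-\gamma\sum_{(x,a)}P(x'|x,a)\,\mu_\pi(x,a).
\]
Substituting $\mu_\pi(x,a)=(1-\gamma)\sum_{t\ge0}\gamma^t\Prob^\pi_{\initial}[x_t=x,a_t=a]$ and using the Markov property $\sum_{(x,a)}P(x'|x,a)\Prob^\pi_{\initial}[x_t=x,a_t=a]=\Prob^\pi_{\initial}[x_{t+1}=x']$, both series telescope in $t$, leaving exactly $(1-\gamma)\Prob^\pi_{\initial}[x_0=x']=(1-\gamma)\nu_0(x')$; dividing by $1-\gamma$ gives $\op\mv_\pi=\initial$, so $\mv_\pi\in\mathfrak{F}$. (Summing over $x'$ and using $(\bmat-\gamma\pmat)\mathbf{1}=(1-\gamma)\mathbf{1}$ shows in passing $\mathbf{1}^\intercal\mv_\pi=1$, consistent with $\mv_\pi\in\Delta_{\sspace\times\aspace}$.)

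\textbf{Every feasible $\mv$ comes from $\pi_\mv$.} Given $\mv\in\mathfrak{F}$, set $d(x)\triangleq\sum_a\mu(x,a)$ and define $\pi_\mv$ as in the statement where $d(x)>0$ and arbitrarily elsewhere; since $\mv\ge\mathbf{0}$, $d(x)=0$ forces $\mu(x,a)=0$ for all $a$, so $\mu(x,a)=d(x)\,\pi_\mv(a|x)$ holds at every pair $(x,a)$. The key fact is that, for any fixed $\pi\in\Pi_0$, the occupancy measure $\mv_\pi$ is the \emph{unique} solution of the affine fixed-point equation $\mv=\Phi_\pi(\mv)$, where
\[
\Phi_\pi(\mv)(x,a)\triangleq\pi(a|x)\Big[(1-\gamma)\nu_0(x)+\gamma\sum_{(x',a')}P(x|x',a')\,\mu(x',a')\Big];
\]
indeed $\Phi_\pi$ is a $\gamma$-contraction in the $\ell_1$-norm (equivalently $\mathrm{I}-\gamma P_\pi^\intercal$ is invertible, $P_\pi$ being the induced state-to-state kernel), and the computation in the first part shows $\mv_\pi$ is a fixed point. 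It then suffices to verify that $\mv$ itself is a fixed point of $\Phi_{\pi_\mv}$: rearranging $\op\mv=\initial$ gives $d(x)=(1-\gamma)\nu_0(x)+\gamma\sum_{(x',a')}P(x|x',a')\mu(x',a')$ for every $x$, and multiplying by $\pi_\mv(a|x)$ and using $\mu(x,a)=d(x)\pi_\mv(a|x)$ yields precisely $\mv=\Phi_{\pi_\mv}(\mv)$. By uniqueness $\mv=\mv_{\pi_\mv}$, which simultaneously shows $\mv\in\{\mv_\pi\mid\pi\in\Pi_0\}$ and that the occupancy measure induced by $\pi_\mv$ equals $\mv$.

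The main obstacle is the uniqueness of the fixed point of $\Phi_\pi$ (equivalently, invertibility of $\mathrm{I}-\gamma P_\pi$); everything else is mechanical manipulation of the geometric series and the flow constraint. I would establish it via the estimate $\norm{\Phi_\pi(\mv)-\Phi_\pi(\mv')}_1\le\gamma\norm{\mv-\mv'}_1$, which uses only that $P$ and $\pi$ are stochastic and $\gamma<1$, and conclude by the Banach fixed-point theorem (or a Neumann-series argument for $(\mathrm{I}-\gamma P_\pi^\intercal)^{-1}$). Two minor points also need care: the degenerate states with $d(x)=0$, handled above via $\mv\ge\mathbf{0}$, and confirming $\mv\in\Delta_{\sspace\times\aspace}$ in the reverse direction, which follows for free from $\mathbf{1}^\intercal\initial=1$ and $(\bmat-\gamma\pmat)\mathbf{1}=(1-\gamma)\mathbf{1}$.
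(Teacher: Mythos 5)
Your proof is correct. The paper itself gives no proof of this proposition --- it is cited directly from Puterman (1994) --- and your argument (direct telescoping of the geometric series for the forward inclusion; the identity $\mu(x,a)=d(x)\pi_{\mv}(a|x)$ plus uniqueness of the fixed point of $\Phi_\pi$, i.e.\ invertibility of $\mathrm{I}-\gamma P_\pi^\intercal$, for the converse) is exactly the standard textbook argument, with the degenerate case $d(x)=0$ handled properly.
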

		Using the epigraph reformulation and Proposition~\ref{eq:occup_meas_set}, it follows that $\LfD$ is equivalent to the following primal LP:
		\begin{equation}
		\begin{array}{ll}
		\min\limits_{(\mv,\varepsilon)} & \varepsilon \\
		\textup{s.t.} & \innerprod{\mv-\mv_\expert}{\cost_i}\le\varepsilon,\;i\in[n_c],\\
		& \mv\in\mathfrak{F}. 
		\end{array}
		\tag*{$\primal$}
		\end{equation}
		The linear program $\primal$ resembles the dual
		LP for solving forward MDPs~\cite{Puterman:1994}. Its optimization
		variables are occupancy measures and a maximum per-feature cost component.
		
		The constraints that define the set $\mathfrak{F}$ are also known as \emph{Bellman flow constraints} and ensure that $\mv$ is an occupancy measure generated by a stationary Markov policy. In particular, $(\boldsymbol{\mu}_{\pi_{\textup{A}}},\alpha^\star)$ is a primal optimizer. Conversely, by an optimal occupancy measure $\mv_{\textup{A}}$, an apprentice policy can be extracted as $\pi_{\mv_{\textup{A}}}$.

		The main objective of this subsection is to shed light to the dual of $\primal$ and interpret the dual optimizers.
		
		It holds that $\innerprod{\initial}{\uv}=\innerprod{\op\mv_\expert}{\uv}=\innerprod{\mv_\expert}{\op^*\uv}$, where $\op^*\colon\Re^{|\sspace|}\to\Re^{|\sspace||\aspace|}$, given by $\op^*\uv=\frac{1}{1-\gamma}(\bmat-\gamma\pmat)\uv$, is the adjoint operator of $\op$. We then get by standard linear duality that the dual LP is given by 
		\begin{equation}
		\begin{array}{ll}
		\max\limits_{(\uv,\weight)}&  \innerprod{\mv_\expert}{\op^*\uv-\cost_\weight}\\
		\textup{s.t.}& \cost_\weight-\op^*\uv \geq \mbf{0},\\[.25em]
		& \uv\in\Re^{|\sspace|},\quad\weight\in\Delta_{[n_c]}.
		\end{array}
		\tag*{$\dual$}
		\end{equation}
		The inequality constraint in~$\dual$ is a relaxation of the Bellman optimality conditions. Therefore, $\dual$ resembles the primal LP for solving forward MDPs but its optimization variables are both value functions and cost weights.

		Next, we derive optimality conditions for our LP formulations. 
		We first assume that the expert is optimal for the true cost. In this case, the expert policy $\expert$ is an optimal solution to $\LfD$.
		\begin{lemma}\label{lemma:expert}
			Assume that $\expert$ is optimal for $\RL\true$.
			Then $\expert$ is optimal for $\LfD$ with optimal value $\alpha^\star=0$. Equivalently, $(\mv_\expert,0)$ is optimal for $\primal$.
		\end{lemma}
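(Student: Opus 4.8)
The plan is to pin down $\alpha^\star=0$ by a two-sided estimate and then translate the conclusion to $\primal$. The upper bound $\alpha^\star\le 0$ is immediate from the definition of $\LfD$: taking $\pi=\expert$ in the outer minimization gives $\alpha^\star\le\max_{\cost\in\mcf{C}}\big(\rho_\cost(\expert)-\rho_\cost(\expert)\big)=0$, which is the ``always nonpositive'' remark recorded earlier. So the only real content is the reverse inequality $\alpha^\star\ge 0$.

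For that I would use the standing apprenticeship-learning modelling assumption $\true\in\mcf{C}$, i.e.\ $\true=\cost_{\weight_{\textup{true}}}$ for some $\weight_{\textup{true}}\in\Delta_{[n_c]}$ (recall $\mcf{C}=\mcf{C}_{\rm conv}$). Then for \emph{every} $\pi\in\Pi_0$ the worst-case objective dominates the gap evaluated at the true cost,
\[
\delta_{\mcf{C}}(\pi,\expert)=\max_{\cost\in\mcf{C}}\big(\rho_\cost(\pi)-\rho_\cost(\expert)\big)\ \ge\ \rho_{\true}(\pi)-\rho_{\true}(\expert),
\]
and the hypothesis that $\expert$ is optimal for $\RL\true$ means precisely that $\rho_{\true}(\expert)=\min_{\pi'\in\Pi_0}\rho_{\true}(\pi')\le\rho_{\true}(\pi)$. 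Hence $\delta_{\mcf{C}}(\pi,\expert)\ge 0$ for all $\pi\in\Pi_0$, so $\alpha^\star=\min_\pi\delta_{\mcf{C}}(\pi,\expert)\ge 0$. Combining the two bounds, $\alpha^\star=0$; and since $\pi=\expert$ attains the value $\delta_{\mcf{C}}(\expert,\expert)=0=\alpha^\star$, the expert policy is an optimal solution of $\LfD$.

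It remains to move to the LP side, which is immediate from the equivalence already established before the lemma: via the epigraph reformulation and Proposition~\ref{eq:occup_meas_set}, the pair $(\mv_{\apprentice},\alpha^\star)$ is optimal for $\primal$ for any apprentice policy $\apprentice$, and conversely any primal optimizer induces an apprentice policy through $\pi_{\mv}$. Applying this with the apprentice policy $\expert$ and $\alpha^\star=0$ yields that $(\mv_\expert,0)$ is optimal for $\primal$, which is the asserted equivalence.

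I do not expect a genuine obstacle here: the argument is a one-line sandwich once the modelling assumption is in place, and the main thing to be careful about is bookkeeping — namely that $\expert$ is a stationary Markov policy (implicit in the phrase ``$\expert$ is optimal for $\RL\true$''), so that $\rho_{\true}(\expert)$ is comparable against $\min_{\pi\in\Pi_0}$ and $\mv_\expert$ is well-defined and feasible for $\mathfrak{F}$ by Proposition~\ref{eq:occup_meas_set} — and that the inner maximization in $\LfD$ genuinely ranges over a class containing $\true$, which is exactly the apprenticeship-learning assumption $\mcf{C}=\mcf{C}_{\rm conv}\ni\true$.
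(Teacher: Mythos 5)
Your proof is correct and follows essentially the same route as the paper: both establish $\alpha^\star\le 0$ by evaluating at $\pi=\expert$, and both derive $\alpha^\star\ge 0$ from the inequality $\rho_{\true}(\pi)-\rho_{\true}(\expert)\le\delta_{\mcf{C}}(\pi,\expert)$ together with $\true\in\mcf{C}$ and the optimality of $\expert$ for $\RL\true$ (the paper phrases this step as a proof by contradiction applied to the apprentice policy, but it is the same estimate). The translation to $\primal$ via the epigraph reformulation and Proposition~\ref{eq:occup_meas_set} is likewise the same.
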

		The following Proposition gives necessary and sufficient conditions for dual optimality. The proof is based on Lemma~\ref{lemma:expert} (if $\expert$ is optimal, then $\alpha^\star=0$) and complementary slackness conditions for the LP approach to forward MDPs.
		\begin{proposition}[Optimal expert]\label{prop:primal-dual-exactoptimality}
			Assume that $\expert$ is optimal for $\RL\true$.
			A pair $(\uv_{\textup{A}},\weight_{\textup{A}})$ is optimal for $\dual$ if and only if $\weight_{\textup{A}}\in\Delta_{[n_c]}$, $\expert$ is optimal for $\RL{\cost_{\weight_{\textup{A}}}}$ and $\uv_{\textup{A}}=\mbf{V}_{\weight_{\textup{A}}}^\star$. In particular, $(\mbf{V}^\star_{\mbf{w_{\textup{true}}}},\mbf{w_{\textup{true}}})$ is an optimal solution to~$\dual$.
		\end{proposition}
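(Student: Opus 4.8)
\emph{Proof plan.} The approach is to invoke strong linear-programming duality between $\primal$ and $\dual$ together with the complementary-slackness conditions, and then to translate the resulting algebraic statements into facts about the forward MDP with cost $\cost_{\wa}$. By Lemma~\ref{lemma:expert}, under the expert-optimality hypothesis $\primal$ has optimal value $\alpha^\star=0$, attained at $(\mexp,0)$. Since $\primal$ is a feasible and bounded finite-dimensional LP, strong duality gives that $\dual$ also has optimal value $0$; hence a dual-\emph{feasible} pair $(\ua,\wa)$ is optimal \emph{iff} its objective $\innerprod{\mexp}{\op^*\ua-\cost_{\wa}}$ equals $0$, which is exactly the complementary-slackness condition $\innerprod{\mexp}{\cost_{\wa}-\op^*\ua}=\mbf{0}$ between $(\ua,\wa)$ and the primal optimizer $(\mexp,0)$.

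First I would rewrite the dual objective and constraint in MDP terms. Using $\op\mexp=\initial$ from Proposition~\ref{eq:occup_meas_set}, the adjoint identity $\innerprod{\mexp}{\op^*\uv}=\innerprod{\op\mexp}{\uv}=\innerprod{\initial}{\uv}$, and $\innerprod{\mexp}{\cost_{\weight}}=\rho_{\cost_\weight}(\expert)$, the dual objective equals $\innerprod{\initial}{\ua}-\rho_{\cost_{\wa}}(\expert)$; thus the complementary-slackness/optimality condition reads $\innerprod{\initial}{\ua}=\rho_{\cost_{\wa}}(\expert)$. Writing the constraint $\cost_{\wa}-\op^*\ua\ge\mbf{0}$ coordinatewise gives $u_{\textup{A}}(x)\le(1-\gamma)\,c_{\wa}(x,a)+\gamma\sum_{x'}P(x'|x,a)\,u_{\textup{A}}(x')$ for every $(x,a)$, i.e.\ $\ua\le\mathcal{T}^\star_{\cost_{\wa}}\ua$, where $\mathcal{T}^\star_{\cost}$ denotes the (normalized) Bellman optimality operator $(\mathcal{T}^\star_{\cost}\uv)(x)=\min_{a}\{(1-\gamma)c(x,a)+\gamma\sum_{x'}P(x'|x,a)\uv(x')\}$, a $\gamma$-contraction in the sup-norm with unique fixed point $\val^\star_{\cost}$. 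By monotonicity of $\mathcal{T}^\star_{\cost_{\wa}}$ and value iteration, $\ua\le\mathcal{T}^\star_{\cost_{\wa}}\ua$ forces $\ua\le\val^\star_{\wa}$ componentwise. I would also record the elementary identity $\innerprod{\initial}{\val^\star_{\cost}}=\rho^\star_{\cost}$, which follows from $\rho_\cost(\pi)=\innerprod{\initial}{\val^\pi_\cost}$ and the existence of a stationary policy attaining $\val^\star_\cost$ simultaneously at every state.

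With these facts the two implications are short. ($\Leftarrow$) If $\wa\in\Delta_{[n_c]}$, $\expert$ is optimal for $\RL{\cost_{\wa}}$, and $\ua=\val^\star_{\wa}$, then $\ua$ is the fixed point of $\mathcal{T}^\star_{\cost_{\wa}}$ and hence dual-feasible, while $\innerprod{\initial}{\ua}=\innerprod{\initial}{\val^\star_{\wa}}=\rho^\star_{\cost_{\wa}}=\rho_{\cost_{\wa}}(\expert)$, so the dual objective vanishes and $(\ua,\wa)$ is optimal. ($\Rightarrow$) Conversely, optimality implies dual feasibility, so $\wa\in\Delta_{[n_c]}$ and $\ua\le\val^\star_{\wa}$, and the vanishing objective gives $\innerprod{\initial}{\ua}=\rho_{\cost_{\wa}}(\expert)\ge\rho^\star_{\cost_{\wa}}=\innerprod{\initial}{\val^\star_{\wa}}\ge\innerprod{\initial}{\ua}$, the last inequality using $\initial\ge\mbf{0}$ and $\ua\le\val^\star_{\wa}$. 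Equality throughout yields $\rho_{\cost_{\wa}}(\expert)=\rho^\star_{\cost_{\wa}}$, i.e.\ $\expert$ is optimal for $\RL{\cost_{\wa}}$, together with $\innerprod{\initial}{\val^\star_{\wa}-\ua}=0$; since $\val^\star_{\wa}-\ua\ge\mbf{0}$ and $\initial$ charges every state, this gives $\ua=\val^\star_{\wa}$. The ``in particular'' claim then follows by taking $\wa=\wtrue$ in the ($\Leftarrow$) direction: $\wtrue\in\Delta_{[n_c]}$ because $\true=\cost_{\wtrue}\in\mcf{C}$, and $\expert$ is optimal for $\RL{\cost_{\wtrue}}=\RL\true$ by assumption.

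The routine parts are strong duality and the contraction/monotonicity of $\mathcal{T}^\star_{\cost}$. The step I expect to require the most care is the final one, promoting $\innerprod{\initial}{\val^\star_{\wa}-\ua}=0$ to $\ua=\val^\star_{\wa}$ on \emph{all} of $\sspace$: the LP constraint alone only pins $\ua$ down to a Bellman sub-solution, so the conclusion needs $\initial$ to have full support (equivalently, it can be argued through complementary slackness for the \emph{forward-MDP} LP pair with cost $\cost_{\wa}$, using that optimality of $\expert$ makes $\mexp$ an optimal occupancy measure there). I would make sure the ambient assumption $\initial>\mbf{0}$ is in force, or else state the conclusion up to states unreachable from $\initial$.
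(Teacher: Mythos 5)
Your proof is correct and takes essentially the same route as the paper's: both reduce dual optimality for $\dual$ to optimality in the forward-MDP LP pair with cost $\cost_{\wa}$, using Lemma~\ref{lemma:expert} to pin the optimal value at $0$ and the vanishing dual objective to force the chain $\innerprod{\initial}{\ua}=\rho^\star_{\cost_{\wa}}=\innerprod{\mexp}{\cost_{\wa}}$. The only cosmetic difference is that you re-derive the sub-solution bound $\ua\le\val^\star_{\wa}$ via monotonicity of the Bellman optimality operator where the paper instead invokes weak duality and the known characterization of $\val^\star_{\cost}$ as the (unique, for $\initial\in\Re^{|\sspace|}_{++}$) optimizer of the forward dual LP; your closing caveat about full support of $\initial$ correctly matches the paper's standing assumption in Appendix~\ref{A}.
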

		
		Proposition~\ref{prop:primal-dual-exactoptimality} states that, under the expert optimality assumption, the set of dual opimizers characterizes the set of solutions to the IRL problem, i.e, the set of costs in $\mathcal{C}_{\rm{conv}}$  for which the expert is optimal. Indeed, a weight vector $\weight_{\textup{A}}\in\Delta_{[n_u]}$ is dual optimal if and only if the expert policy $\expert$ is optimal for the forward RL problem with cost $\cost_{\weight_{\textup{A}}}$. In this case, $\uv_{\textup{A}}$ coincides with the corresponding optimal value function\footnote{To be precise, this is the case if $\initial\in\Re^{|\sspace|}_{++}$, otherwise they coincide $\initial$-almost surely.}. In particular, the true weights $\mbf{w_{\textup{true}}}$ and the true optimal value function $\mbf{V}^\star_{\mbf{w_{\textup{true}}}}$ are dual optimizers.
		
		Is is worth noting that the linear program~$\dual$ is different from the LP formulation of IRL proposed in~\cite{Ng:2000,Komanduru:2019}, which is based on dynamic programming principles. In contrast, our characterization of solutions to the inverse problem is based on duality arguments in the same line as~\cite{Ahuja:2001,Iyengar:2005,Pauwels:2016}. Moreover, the dual LP formulation~$\dual$ is independent of the complexity of the expert policy $\expert$, which can be even history-dependent. 
		
		In the general case of a suboptimal expert, we have the following necessary conditions for dual optimality.
		\begin{proposition}[Suboptimal expert]\label{prop:primal-dual-suboptimality}
			If $(\uv_{\textup{A}},\weight_{\textup{A}})$ is optimal for $\dual$, then any apprentice policy $\pi_{\text{A}}$ is optimal for $\RL{\cost_{\weight_{\textup{A}}}}$ and $\uv_{\textup{A}}=\mbf{V_{\weight_{\textup{A}}}^\star}$. 
		\end{proposition}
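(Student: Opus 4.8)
The plan is to combine linear-programming strong duality between $\primal$ and $\dual$ with the classical LP duality for the forward control problem $\RL{\cost_{\weight_{\textup{A}}}}$. First I would pass from policies to occupancy measures: for any apprentice policy $\apprentice$, set $\mv_{\textup{A}}\triangleq\mv_{\apprentice}$; by Proposition~\ref{eq:occup_meas_set} we have $\mv_{\textup{A}}\in\mathfrak{F}$, and since $\max_{i\in[n_c]}\innerprod{\mv_{\textup{A}}-\mv_\expert}{\cost_i}=\delta_{\mcf{C}}(\apprentice,\expert)=\alpha^\star$, the pair $(\mv_{\textup{A}},\alpha^\star)$ is optimal for $\primal$. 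Because $\primal$ is feasible (Proposition~\ref{eq:occup_meas_set}) and bounded, strong LP duality holds, so the optimal value of $\dual$ also equals $\alpha^\star$; in particular the given dual optimizer satisfies $\innerprod{\mv_\expert}{\op^*\uv_{\textup{A}}-\cost_{\weight_{\textup{A}}}}=\alpha^\star$.

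Next I would write the standard weak-duality estimate and observe that strong duality makes every step tight. Writing $w_{\textup{A},i}$ for the components of $\weight_{\textup{A}}$ and using $\sum_i w_{\textup{A},i}=1$, $w_{\textup{A},i}\ge 0$, primal feasibility $\innerprod{\mv_{\textup{A}}-\mv_\expert}{\cost_i}\le\alpha^\star$, dual feasibility $\cost_{\weight_{\textup{A}}}-\op^*\uv_{\textup{A}}\ge\mbf{0}$ with $\mv_{\textup{A}}\ge\mbf{0}$, the adjoint identity $\innerprod{\op\mv}{\uv}=\innerprod{\mv}{\op^*\uv}$, and $\op\mv_{\textup{A}}=\op\mv_\expert=\initial$, one gets
\[
\alpha^\star=\sum_i w_{\textup{A},i}\alpha^\star\ \ge\ \innerprod{\mv_{\textup{A}}-\mv_\expert}{\cost_{\weight_{\textup{A}}}}\ \ge\ \innerprod{\op\mv_{\textup{A}}}{\uv_{\textup{A}}}-\innerprod{\mv_\expert}{\cost_{\weight_{\textup{A}}}}=\innerprod{\mv_\expert}{\op^*\uv_{\textup{A}}-\cost_{\weight_{\textup{A}}}},
\]
whose right-hand side equals $\alpha^\star$ by the previous step. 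Hence both inequalities are equalities, and the crucial consequence of the second one is the complementary-slackness identity $\innerprod{\mv_{\textup{A}}}{\cost_{\weight_{\textup{A}}}}=\innerprod{\mv_{\textup{A}}}{\op^*\uv_{\textup{A}}}=\innerprod{\initial}{\uv_{\textup{A}}}$, i.e. $\rho_{\weight_{\textup{A}}}(\apprentice)=\innerprod{\initial}{\uv_{\textup{A}}}$.

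Finally I would close the argument via forward duality and then identify $\uv_{\textup{A}}$. Since $\op^*\uv_{\textup{A}}\le\cost_{\weight_{\textup{A}}}$, the vector $\uv_{\textup{A}}$ is feasible for the dual LP of $\RL{\cost_{\weight_{\textup{A}}}}$, so weak duality gives $\innerprod{\initial}{\uv_{\textup{A}}}\le\rho_{\weight_{\textup{A}}}^\star$; together with $\rho_{\weight_{\textup{A}}}^\star\le\rho_{\weight_{\textup{A}}}(\apprentice)=\innerprod{\initial}{\uv_{\textup{A}}}$ (valid since $\mv_{\textup{A}}\in\mathfrak{F}$) this forces $\rho_{\weight_{\textup{A}}}(\apprentice)=\rho_{\weight_{\textup{A}}}^\star$, i.e. $\apprentice$ is optimal for $\RL{\cost_{\weight_{\textup{A}}}}$ and $\uv_{\textup{A}}$ is an optimal solution of the corresponding forward dual LP. To conclude $\uv_{\textup{A}}=\mbf{V}^\star_{\weight_{\textup{A}}}$ I would invoke the monotonicity/contraction of the Bellman optimality operator, which identifies $\mbf{V}^\star_{\weight_{\textup{A}}}$ as the pointwise-largest solution of $\op^*\uv\le\cost_{\weight_{\textup{A}}}$: thus $\uv_{\textup{A}}\le\mbf{V}^\star_{\weight_{\textup{A}}}$, and combined with $\innerprod{\initial}{\uv_{\textup{A}}}=\rho_{\weight_{\textup{A}}}^\star=\innerprod{\initial}{\mbf{V}^\star_{\weight_{\textup{A}}}}$ equality follows. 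The one genuinely delicate point is the same as that flagged in the footnote to Proposition~\ref{prop:primal-dual-exactoptimality}: this last equality holds literally only when $\initial\in\Re^{|\sspace|}_{++}$, and otherwise only $\initial$-almost surely; apart from that caveat the proof is just careful bookkeeping on the two-sided duality estimate, with the weights $\weight_{\textup{A}}$ entering solely through the single inequality $\op^*\uv_{\textup{A}}\le\cost_{\weight_{\textup{A}}}$.
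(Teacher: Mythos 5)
Your proof is correct and follows essentially the same route as the paper's: both arguments hinge on extracting the complementarity identity $\innerprod{\mv_{\apprentice}}{\cost_{\weight_{\textup{A}}}-\op^*\uv_{\textup{A}}}=0$ from dual feasibility, optimality of $\apprentice$ for $\LfD$, and the fact that the optimal value of $\dual$ equals $\alpha^\star$, and then transfer this to the primal-dual LP pair for the forward problem $\RL{\cost_{\weight_{\textup{A}}}}$. Your explicit largest-subsolution argument identifying $\uv_{\textup{A}}=\mbf{V}^\star_{\weight_{\textup{A}}}$ (with the $\initial\in\Re^{|\sspace|}_{++}$ caveat) merely spells out what the paper invokes as the standard uniqueness of the optimizer of the forward dual LP.
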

	
		Proposition~\ref{prop:primal-dual-suboptimality} states that, in general, the apprentice policy $\pi_{\textup{A}}$ is an optimal solution to the forward RL problem with cost $\cost_{\weight_\textup{A}}$, where $\weight_{\textup{A}}$ is an optimal dual variable. In addition, $\uv_{\textup{A}}$ coincides with the corresponding optimal value function. 
		
		A similar result to Proposition~\ref{prop:primal-dual-suboptimality} has been obtained in~\cite{Ho:2016b} for the convex-concave formulation of maximum causal entropy IRL, by using the Sion minimax theorem~\cite{Sion:1958}. 
		
		\subsection{Saddle-Point Formulation}
		A serious limitation of the primal and dual LP formulations in Section~\ref{subsec:primal-dual_LP} is that they have an intractable number of constraints, which in addition may be not satisfied when working with function approximators.
		To mitigate this difficulty, we propose a more tractable unconstrained formulation by using the Lagrangian 
		\[ 
		\mcf{L}(\mv,\uv,\weight)=\innerprod{\mv-\mv_\expert}{\cost_\weight-\op^*\uv}.
		\] 
		We can then frame the problem as a bilinear saddle-point problem
		\begin{equation}\label{eq:full}
	      \alpha^\star = \min_{\mv\in\Delta_{\sspace\times\aspace}} \,\max_{(\uv,\weight)\in\mathcal{U}\times\Delta_{[n_c]}} \mcf{L}(\mv,\uv,\weight),
		\end{equation}
		
		where
	$
		\mcf{U}\triangleq\{\uv\in\Re^{|\sspace|}\mid\norm{\uv}_\infty\le 1\}.
	$
		Note that we have added extra constraints $\mv\in\Delta_{\sspace\times\aspace}$ and $\uv\in\mcf{U}$ for the sake of analysis. These constraints do not change the problem optimality, but will considerably accelerate the convergence of the algorithm by considering smaller domains.
		
		Indeed, note that the constraint $\mv\in\Delta_{\sspace\times\aspace}$ is redundant since it is satisfied for all primal feasible solutions. Moreover, by Proposition~\ref{prop:primal-dual-suboptimality}, for all dual optimizers $(\uv_{\textup{A}},\weight_{\textup{A}})$ it holds that $\uv_{\textup{A}}=\val^\star_{\weight_{\textup{A}}}$. Thus, we obtain the bound $\norm{\uv_{\textup{A}}}_\infty\le 1$.  
		
		The next Corollary follows from Propositions~\ref{prop:primal-dual-exactoptimality}--\ref{prop:primal-dual-suboptimality}.
		\begin{corollary}\label{cor:saddle-points}
			Suppose that the expert is optimal for $\true$. Then, $(\mv_{\expert},\val^\star_{\true},\weight_{\textup{true}})$ is a saddle-point to the  minimax problem~(\ref{eq:full}). In the general case, for a saddle-point $(\mv_{\textup{A}},\uv_{\textup{A}},\weight_{\textup{A}})$ it holds that: (i) $\pi_{\mv_{\textup{A}}}$ is optimal for $\LfD$, and (ii) $\pi_{\ma}$ is optimal for $\cost_{\weight_{\textup{A}}}$ and $\uv_{\textup{A}}=\val^\star_{\weight_{\textup{A}}}$.
		\end{corollary}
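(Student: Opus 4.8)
The plan is to set up a dictionary between saddle points of~(\ref{eq:full}) and primal–dual optimal pairs of $(\primal,\dual)$, and then read off both assertions from Lemma~\ref{lemma:expert} and Propositions~\ref{prop:primal-dual-exactoptimality}--\ref{prop:primal-dual-suboptimality}. First I would rewrite the Lagrangian, using $\innerprod{\mv}{\op^*\uv}=\innerprod{\op\mv}{\uv}$ and $\op\mv_\expert=\initial$, as $\mcf{L}(\mv,\uv,\weight)=\innerprod{\mv-\mv_\expert}{\cost_\weight}-\innerprod{\op\mv-\initial}{\uv}$, so that for fixed $\mv\in\Delta_{\sspace\times\aspace}$ the inner problem separates into $\max_{\weight\in\Delta_{[n_c]}}\innerprod{\mv-\mv_\expert}{\cost_\weight}=\max_{i\in[n_c]}\innerprod{\mv-\mv_\expert}{\cost_i}$ and $\max_{\uv\in\mcf{U}}(-\innerprod{\op\mv-\initial}{\uv})=\norm{\op\mv-\initial}_1$; on $\mathfrak{F}$ this collapses to the objective of $\primal$. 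I would then argue that the resulting problem is an \emph{exact} $\ell_1$-penalty reformulation of $\primal$ with common value $\alpha^\star$: the bound ``$\le\alpha^\star$'' is witnessed by any primal-optimal occupancy measure, which lies in $\mathfrak{F}\subseteq\Delta_{\sspace\times\aspace}$; the bound ``$\ge\alpha^\star$'' follows by evaluating $\mcf{L}(\mv,\cdot,\cdot)$ at a dual-optimal $(\uv^\star,\weight^\star)$ and using $\cost_{\weight^\star}-\op^*\uv^\star\ge\mbf{0}$ together with $\mv\ge\mbf{0}$. The point here is that by Proposition~\ref{prop:primal-dual-suboptimality} every dual optimum satisfies $\norm{\uv^\star}_\infty\le1$, so it belongs to $\mcf{U}\times\Delta_{[n_c]}$ and the penalty weight $1$ hidden in the $\mcf{U}$-ball is large enough. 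Standard bilinear saddle-point theory on these compact convex domains then gives: $(\mv^\star,\uv^\star,\weight^\star)$ is a saddle point of~(\ref{eq:full}) if and only if $(\mv^\star,\alpha^\star)$ is optimal for $\primal$ and $(\uv^\star,\weight^\star)$ is optimal for $\dual$.

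For the optimal-expert case, Lemma~\ref{lemma:expert} gives that $(\mv_\expert,0)$ is optimal for $\primal$ and Proposition~\ref{prop:primal-dual-exactoptimality} gives that $(\val^\star_{\true},\weight_{\textup{true}})$ is optimal for $\dual$; since $\norm{\cost_i}_\infty\le1$ and $\weight_{\textup{true}}\in\Delta_{[n_c]}$ force $\norm{\true}_\infty\le1$, the normalized value function obeys $\norm{\val^\star_{\true}}_\infty\le1$, so the triple lies in the domain of~(\ref{eq:full}), and the dictionary yields that $(\mv_\expert,\val^\star_{\true},\weight_{\textup{true}})$ is a saddle point. Alternatively this can be checked by hand: $\mcf{L}(\mv_\expert,\cdot,\cdot)\equiv0$, while for $\mv\in\Delta_{\sspace\times\aspace}$ one has $\mcf{L}(\mv,\val^\star_{\true},\weight_{\textup{true}})=\innerprod{\mv-\mv_\expert}{\true-\op^*\val^\star_{\true}}=\innerprod{\mv}{\true-\op^*\val^\star_{\true}}\ge0$, using $\mv\ge\mbf{0}$, the Bellman inequality $\true-\op^*\val^\star_{\true}\ge\mbf{0}$, and $\innerprod{\mv_\expert}{\true-\op^*\val^\star_{\true}}=\rho_{\true}(\expert)-\innerprod{\initial}{\val^\star_{\true}}=0$ by expert optimality.

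For a general saddle point $(\mv_{\textup{A}},\uv_{\textup{A}},\weight_{\textup{A}})$ of~(\ref{eq:full}), the dictionary gives that $(\mv_{\textup{A}},\alpha^\star)$ is optimal for $\primal$; in particular $\mv_{\textup{A}}\in\mathfrak{F}$, so by Proposition~\ref{eq:occup_meas_set} $\pi_{\mv_{\textup{A}}}$ is a stationary Markov policy whose occupancy measure is exactly $\mv_{\textup{A}}$, and optimality of $(\mv_{\textup{A}},\alpha^\star)$ for $\primal$ makes $\pi_{\mv_{\textup{A}}}$ an apprentice policy, i.e., optimal for $\LfD$ --- claim~(i). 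The dictionary also gives that $(\uv_{\textup{A}},\weight_{\textup{A}})$ is optimal for $\dual$, so Proposition~\ref{prop:primal-dual-suboptimality} applies: any apprentice policy, in particular $\pi_{\mv_{\textup{A}}}=\pi_{\ma}$, is optimal for $\RL{\cost_{\weight_{\textup{A}}}}$, and $\uv_{\textup{A}}=\val^\star_{\weight_{\textup{A}}}$ --- claim~(ii).

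The main obstacle is the first step: showing that shrinking the minimization from $\mathfrak{F}$ to $\Delta_{\sspace\times\aspace}$ and the value-function maximization from $\Re^{|\sspace|}$ to the unit $\ell_\infty$-ball $\mcf{U}$ changes neither the optimal value nor the set of saddle points. This is an exact-penalization statement whose validity rests on the a priori bound $\norm{\uv_{\textup{A}}}_\infty\le1$ for dual optimizers supplied by Proposition~\ref{prop:primal-dual-suboptimality}; once that is granted, the rest is routine linear-programming duality and minimax bookkeeping.
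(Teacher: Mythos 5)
Your proposal takes essentially the same route as the paper: the paper's entire proof is the one-line observation that $(\ma,\ua,\wa)$ is a saddle point of~(\ref{eq:full}) if and only if $(\ma,\alpha^\star)$ is optimal for $\primal$ and $(\ua,\wa)$ is optimal for $\dual$, followed by an appeal to Lemma~\ref{lemma:expert} and Propositions~\ref{prop:primal-dual-exactoptimality}--\ref{prop:primal-dual-suboptimality} --- which is precisely your ``dictionary'' plus the same citations. You go beyond the paper only in supplying a justification for that dictionary (the $\ell_1$-exact-penalty computation together with the a priori bound $\norm{\uv_{\textup{A}}}_\infty\le 1$ from Proposition~\ref{prop:primal-dual-suboptimality}) and in verifying the expert-optimal saddle point by hand, but the underlying argument is the same.
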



		\section{A Linearly-Relaxed Saddle-Point Problem}\label{sec:linearlyrelaxed}
		
		Optimizing directly over $\mv$ and $\uv$ is impractical since their dimensions scale linearly with the size of the state space. We reduce the order of complexity by limiting our search to linear subspaces defined by a small number of features. In particular, we consider the feature matrices $\mbs{\Phi}\in\Re^{|\sspace||\aspace|\times n_{\mu}}$ and $\mbs{\Psi}\in\Re^{|\sspace|\times n_u}$. We then assume that the decision variables are parameterized in the form $\mv_{\thv}=\mbs{\Phi}\thv$ and $\uv_{\lv}=\mbs{\Psi}\lv$, where $\thv\in\Theta$ and $\lv\in\Lambda$ are the parameters to learn. Moreover, $\Theta$ and $\Lambda$ are appropriately chosen parameter sets. We denote $\mathcal{W}\triangleq\Delta_{[n_c]}$ for brevity. 
		
		The corresponding \emph{linearly-relaxed} saddle-point formulation is
		\begin{equation}\label{LRALP}
		 \min_{\thv\in\Theta} \, \max_{\substack{\lv\in\Lambda \\ \weight\in\mcf{W}}} \rlag(\thv,\lv,\weight),
		\end{equation}
		where $\rlag\colon\Theta\times\Lambda\times\mcf{W}\to\Re$ it the \emph{reduced} bilinear Lagrangian
		$
		\rlag(\thv,\lv,\weight)\triangleq\innerprod{\mv_{\thv}-\mv_{\expert}}{\cost_{\weight}-\op^*\uv_{\lv}}.
		$
		
		We measure the quality of an approximate saddle-point $(\boldsymbol{\theta},\boldsymbol{\lambda},\mbf{w})$ by its \emph{saddle-point residual (SPR)} defined as

	    \begin{equation}\label{SPR}
		    {\epsilon}_{\text{sad}}(\boldsymbol{\theta},\boldsymbol{\lambda},\weight)=\max_{\substack{\lv'\in\Lambda \\ \weight'\in\mcf{W}}}\rlag(\thv,\lv',\weight') - \min_{\thv'\in\Theta}\rlag(\thv',\lv,\weight).
		\end{equation}

		We assume that every column of $\mbs{\Phi}$ belongs to $\Delta_{\sspace\times\aspace}$, and every column of $\mbs{\Psi}$ belongs to $\mcf{U}$. These conditions ensure that if we choose
		\begin{equation*}
		\Theta\triangleq \Delta_{[n_\mu]}, \;
		\Lambda\triangleq  \{\lv\in\Re^{n_u}\mid\norm{\lv}_2 \le \frac{\beta}{\sqrt{n_u}}\}, \;
		\beta\geq 2,
		\end{equation*}
		then $\mv_{\thv}\in\Delta_{\sspace\times\aspace}$ and $\norm{\uv_{\lv}}_{\infty}\le \beta$, for all $\thv\in\Theta$ and $\lv\in\Lambda$. Note that we have enlarged the original optimization domain for the $\uv$ variable. In addition, by the definition of $\mathcal{C}_{\rm conv}$, we get $\norm{\cw}_\infty\le 1$, for all $\weight\in\mathcal{W}$. These bounds will be used in the sequel in the algorithm design and its theoretical analysis.
		
		\subsection{From Approximate Saddle Points to Optimal Policies}\label{sec:linearly_relaxed}
		A crucial part in our setting is to connect the saddle-point residual ${\epsilon}_{\text{sad}}(\boldsymbol{\theta},\boldsymbol{\lambda},\weight)$ of a feasible solution $(\thv,\lv,\weight)$ for the linearly-relaxed saddle-point problem~(\ref{LRALP}) to the suboptimality gap $\delta_{\mathcal{C}}(\pi_{\mth},\expert)-\delta_{\mathcal{C}}(\apprentice,\expert)$ of the induced policy $\pi_{\mth}$. If small SPR implies small suboptimality gap, then we can first run any stochastic primal-dual algorithm with sublinear convergence rate to the objective in~(\ref{LRALP}), and then convert the obtained approximate saddle-point to a nearly optimal policy for the original $\LfD$ problem. 
		
		For the following results, let $\apprentice$ be an apprentice policy, i.e., $\apprentice$ is optimal for $\LfD$. The proofs can be found in Appendix~\ref{B}.
		Our first result makes no assumptions on the choice of features.
		\begin{proposition}\label{prop:strong}
			Let $(\thv,\lv,\weight)\in\Theta\times\Lambda\times\mathcal{W}$ be a feasible solution to~(\ref{LRALP}). Set $\pi_{\thv}=\pi_{\mth}$. It then holds that
			\[
			\delta_{\mathcal{C}}(\pth,\expert)-\delta_{\mathcal{C}}(\apprentice,\expert)\le\epsilon_{\textup{sad}}(\boldsymbol{\theta},\boldsymbol{\lambda},\weight)+\varepsilon_{\textup{approx},\thv},
			\]
			where the approximation error $\varepsilon_{\textup{approx},\thv}$ is no larger than
			$
			\tfrac{2}{1-\gamma}\big(\beta\min_{\thv'\in\Theta}\norm{\mv_{\thv'}-\mv_{\apprentice}}_1+\min_{\lv'\in\Lambda}\norm{\uv_{\lv'}-\val_{\cost_{i_{\thv}}}^{\pth}}_\infty\big),
			$
			and $i_{\thv}\triangleq\arg\max_{i\in[n_c]}\big(\rho_{\cost_i}(\pth)-\rho_{\cost_i}(\expert)\big).$
			In particular,
			$
			\rho_{\true}(\pth)-\rho_{\true}(\expert)\le\epsilon_{\textup{sad}}(\boldsymbol{\theta},\boldsymbol{\lambda},\weight)+\varepsilon_{\textup{approx},\thv}+\alpha^\star.
			$
			
		\end{proposition}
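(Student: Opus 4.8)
The plan is to sandwich $\delta_{\mathcal{C}}(\pth,\expert)$ from above and $\delta_{\mathcal{C}}(\apprentice,\expert)$ from below by the two extrema that make up $\epsilon_{\textup{sad}}(\thv,\lv,\weight)$, collecting the two advertised approximation terms along the way. Two bookkeeping facts drive the argument. First, the adjoint relation $\innerprod{\mv}{\op^*\uv}=\innerprod{\op\mv}{\uv}$ together with $\op\mv_\pi=\initial$ for the occupancy measure of any policy (Proposition~\ref{eq:occup_meas_set}, and more generally for the possibly history-dependent expert) lets me insert or delete terms $\op^*\uv_{\lv'}$ against differences of occupancy measures at no cost. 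Second, a policy-consistency identity: if $\pi=\pi_\mv$ is the stationary policy induced by a nonnegative vector $\mv$, then $\innerprod{\mv}{\cost-\op^*\val^\pi_\cost}=0$ for every cost $\cost$; this follows by expanding $(\cost-\op^*\val^\pi_\cost)(x,a)$ and using the Bellman expectation equation for $\val^\pi_\cost$ to get $\sum_a\pi(a\mid x)(\cost-\op^*\val^\pi_\cost)(x,a)=0$ for each $x$, then weighting by $\mv$. The point is that this applies to the (generally infeasible) iterate $\mv_\thv$, since $\pi_{\mv_\thv}=\pth$ by construction, as well as to the feasible occupancy measure $\mv_{\pth}$.

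\emph{Upper bound on $\delta_{\mathcal{C}}(\pth,\expert)$.} Let $i_\thv$ be the maximizing index, so $\delta_{\mathcal{C}}(\pth,\expert)=\innerprod{\mv_{\pth}-\mv_\expert}{\cost_{i_{\thv}}}$, and set $\weight'=\mbf{e}_{i_{\thv}}\in\mathcal{W}$. Since $\op\mv_{\pth}=\op\mv_\expert=\initial$ I may subtract $\innerprod{\mv_{\pth}-\mv_\expert}{\op^*\uv_{\lv'}}=0$ for any $\lv'\in\Lambda$; comparing with $\rlag(\thv,\lv',\weight')=\innerprod{\mv_\thv-\mv_\expert}{\cost_{i_{\thv}}-\op^*\uv_{\lv'}}$ and using the policy-consistency identity for $\mv_\thv$ together with the standard relation $\innerprod{\mv_{\pth}}{\cost_{i_{\thv}}}=\innerprod{\initial}{\val^{\pth}_{\cost_{i_{\thv}}}}$, all the expert terms cancel and the computation collapses to
\[
\delta_{\mathcal{C}}(\pth,\expert)=\rlag\bigl(\thv,\lv',\mbf{e}_{i_{\thv}}\bigr)+\innerprod{\initial-\op\mv_\thv}{\val^{\pth}_{\cost_{i_{\thv}}}-\uv_{\lv'}},\qquad\forall\,\lv'\in\Lambda.
\]
Because $\mv_\thv\in\Delta_{\sspace\times\aspace}$ and $\bmat,\pmat$ are row-stochastic, $\norm{\op\mv_\thv}_1\le\tfrac{1+\gamma}{1-\gamma}$, hence $\norm{\initial-\op\mv_\thv}_1\le\tfrac{2}{1-\gamma}$; bounding $\rlag(\thv,\lv',\mbf{e}_{i_{\thv}})\le\max_{\lv'',\weight''}\rlag(\thv,\lv'',\weight'')$ and choosing $\lv'$ to minimize $\norm{\uv_{\lv'}-\val^{\pth}_{\cost_{i_{\thv}}}}_\infty$ gives $\delta_{\mathcal{C}}(\pth,\expert)\le\max_{\lv'',\weight''}\rlag(\thv,\lv'',\weight'')+\tfrac{2}{1-\gamma}\min_{\lv'}\norm{\uv_{\lv'}-\val^{\pth}_{\cost_{i_{\thv}}}}_\infty$.

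\emph{Lower bound on $\delta_{\mathcal{C}}(\apprentice,\expert)$.} For the fixed $\weight$ of the triple, $\delta_{\mathcal{C}}(\apprentice,\expert)\ge\innerprod{\mv_{\apprentice}-\mv_\expert}{\cost_\weight}=\innerprod{\mv_{\apprentice}-\mv_\expert}{\cost_\weight-\op^*\uv_\lv}$ using $\op\mv_{\apprentice}=\op\mv_\expert=\initial$; inserting an arbitrary $\mv_{\thv'}$, $\thv'\in\Theta$, yields $\rlag(\thv',\lv,\weight)+\innerprod{\mv_{\apprentice}-\mv_{\thv'}}{\cost_\weight-\op^*\uv_\lv}$. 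One checks $\norm{\cost_\weight-\op^*\uv_\lv}_\infty\le\tfrac{2\beta}{1-\gamma}$ from $\norm{\cost_\weight}_\infty\le1$, $\norm{\uv_\lv}_\infty\le\beta$ and $\beta\ge2$; lower-bounding the first summand by $\min_{\thv''}\rlag(\thv'',\lv,\weight)$ and minimizing over $\thv'$ gives $\delta_{\mathcal{C}}(\apprentice,\expert)\ge\min_{\thv''}\rlag(\thv'',\lv,\weight)-\tfrac{2\beta}{1-\gamma}\min_{\thv'}\norm{\mv_{\thv'}-\mv_{\apprentice}}_1$.

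\emph{Conclusion and main obstacle.} Subtracting the two estimates, the gap between the $\rlag$-extrema is exactly $\epsilon_{\textup{sad}}(\thv,\lv,\weight)$, and the remaining terms are precisely the claimed bound on $\varepsilon_{\textup{approx},\thv}$. The ``in particular'' statement follows since $\true=\cost_{\wtrue}$ with $\wtrue$ in the simplex gives $\rho_{\true}(\pth)-\rho_{\true}(\expert)\le\delta_{\mathcal{C}}(\pth,\expert)$ while $\delta_{\mathcal{C}}(\apprentice,\expert)=\alpha^\star$. I expect the crux to be the policy-consistency identity and, in particular, noticing that it holds for the infeasible iterate $\mv_\thv$ as well as for $\mv_{\pth}$: this is what reduces the error to the single tractable term $\innerprod{\initial-\op\mv_\thv}{\val^{\pth}_{\cost_{i_{\thv}}}-\uv_{\lv'}}$ rather than something involving the uncontrolled discrepancy $\mv_\thv-\mv_{\pth}$. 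The rest is Hölder's inequality and routine operator-norm estimates, in which $\beta\ge2$ and the assumptions that the columns of $\phim$ and $\psim$ lie in $\Delta_{\sspace\times\aspace}$ and $\mcf{U}$ are used to pin down the numerical constants.
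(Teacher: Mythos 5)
Your proof is correct and follows essentially the same route as the paper's: the same key lemma (the policy-consistency identity $\innerprod{\mv_{\thv}}{\cost-\op^*\val_{\cost}^{\pth}}=0$ applied to the possibly infeasible iterate $\mv_{\thv}$, which is the paper's Lemma~\ref{lemma:vector_form}), the same choice of comparison point $(\val_{\cost_{i_{\thv}}}^{\pth},\mbf{e}_{i_{\thv}})$, and the same three-term error decomposition bounded by H\"older with the identical constants. The only difference is organizational — you split the rearrangement into separate upper and lower sandwich bounds rather than the paper's single add-and-subtract of $\epsilon_{\textup{sad}}$ — which does not change the substance.
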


        By Proposition~\ref{prop:strong}, if $(\thv,\lv,\weight)$ is an $(\varepsilon,\delta)$-optimal saddle point to~(\ref{LRALP}), i.e., ${\epsilon}_{\textup{sad}}(\boldsymbol{\theta},\boldsymbol{\lambda},\weight)\le\varepsilon$ with probability at least $1-\delta$, then the associated induced policy $\pi_{\thv}$ is $(\varepsilon+\varepsilon_{\textup{approx},\thv}+\alpha^\star)$-optimal for the original $\LfD$ problem with high probability. Recall, that $\alpha^\star$ is always zero or negative. The term $\varepsilon_{\textup{approx},\thv}$ is a measure of expressiveness of the linear function approximators. The approximation error $\varepsilon_{\textup{approx},\thv}$ depends on how well $\{\mth:\thv\in\Theta\}$ approximates the apprentice occupany measure $\mv_{\apprentice}$ and how well $\{\ul:\lv\in\Lambda\}$ approximates the value function 
	    $\val_{\cost_{i_{\thv}}}^{\pth}$ of the extracted policy $\pth$ under the cost $\cost_{i_{\thv}}$. Note, however, that  $\val_{\cost_{i_{\thv}}}^{\pth}$ is not fixed before the learning process. Therefore, in order to guarantee a priori low approximation error, we need the optimal occupancy measure to be accurately representable by $\{\mth:\thv\in\Theta\}$, while $\{\ul:\lv\in\Lambda\}$ is required to accurately represent the value functions of non-optimal policies as well, under costs in $\{\cost_i:\,i\in [n_c]\}$. Borrowing the terminology from~\cite{Shariff:2020}, we require the occupancy measure features to be \emph{weak}, while the value function features to be \emph{strong}.
	    
	    An open question is whether we can relax the previously described notion of \emph{good features} for the value function approximation~\cite{Shariff:2020}.
	    We provide the following result in this direction. However, the analysis requires stronger conditions on the choice of features. In particular, we introduce the following assumption first studied in~\cite{Bas-Serrano:2020b}.
		
		\begin{assumption}[Coherence Assumption]\label{ass:coherence}
			The columns of $\psim$ are well-conditioned in the following sense: for every $\thv\in\Theta$ and $\uv\in \Re^{|\sspace|}$ with $\norm{\uv}_\infty\le 2$, there exists $\lv\in\Lambda$ such that
			$
			\innerprod{\initial-\op\mth}{\uv-\ul}=0.
			$
		\end{assumption}
		\begin{proposition}\label{prop:weak}
		Let Assumption~\ref{ass:coherence} hold. Let $(\thv,\lv,\weight)\in\Theta\times\Lambda\times\mathcal{W}$ be a feasible solution to~(\ref{LRALP}). Set $\pi_{\thv}=\pi_{\mth}$. It then holds that
		\[
		\rho_{\true}(\pth)-\rho_{\true}(\expert)\le3\epsilon_{\textup{sad}}(\boldsymbol{\theta},\boldsymbol{\lambda},\weight)+\varepsilon_{\textup{approx},w}+\alpha^\star,
		\]
		where the weak approximation error $\varepsilon_{\textup{approx},w}$ is no larger than
		$
		\tfrac{2}{1-\gamma}\Big(\beta\min_{\thv'\in\Theta}\norm{\mv_{\thv'}-\mv_{\apprentice}}_1+2\norm{\val^\star_{\wa}-\psim\lv^\star}_\infty\Big),
		$
		where $(\val_{\wa}^\star,\wa)$ is dual optimal for~$\dual$ and $\lv^\star$ is a dual optimizer of~(\ref{LRALP}).
		\end{proposition}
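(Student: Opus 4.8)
The plan is to mimic the structure of the proof of Proposition~\ref{prop:strong}, but to exploit the Coherence Assumption to replace the data-dependent value function $\val_{\cost_{i_{\thv}}}^{\pth}$ (which is not known before training) by the fixed dual-optimal value function $\val^\star_{\wa}$. First I would unpack the saddle-point residual. Writing $\eps \triangleq \epsilon_{\textup{sad}}(\boldsymbol{\theta},\boldsymbol{\lambda},\weight)$, by definition of the SPR in~(\ref{SPR}) we have, for every $(\lv',\weight')\in\Lambda\times\mcf{W}$ and every $\thv'\in\Theta$,
\[
\rlag(\thv,\lv',\weight') - \rlag(\thv',\lv,\weight) \le \eps .
\]
Expanding $\rlag(\thv,\lv',\weight') = \innerprod{\mv_{\thv}-\mv_{\expert}}{\cost_{\weight'}-\op^*\uv_{\lv'}}$ and using $\innerprod{\mv_{\thv}}{\op^*\uv_{\lv'}} = \innerprod{\op\mv_{\thv}}{\uv_{\lv'}}$ and $\innerprod{\mv_{\expert}}{\op^*\uv_{\lv'}}=\innerprod{\initial}{\uv_{\lv'}}$, the $\uv_{\lv'}$-dependence in the first term is $\innerprod{\initial - \op\mv_{\thv}}{\uv_{\lv'}}$. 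Here is where the Coherence Assumption enters: it guarantees a $\lv'\in\Lambda$ (depending on $\thv$) for which $\innerprod{\initial-\op\mv_{\thv}}{\uv_{\lv'}}=\innerprod{\initial-\op\mv_{\thv}}{\uv}$ for a prescribed $\uv$ with $\norm{\uv}_\infty\le 2$; I would take $\uv = \val^\star_{\wa}$ (which satisfies $\norm{\val^\star_{\wa}}_\infty\le 1\le 2$, since by Proposition~\ref{prop:primal-dual-suboptimality} $\uv_{\textup{A}}=\val^\star_{\wa}$ and $\norm{\cost}_\infty\le 1$). This lets me bound $\rho_{\true}(\pth)$ in terms of $\rlag$ evaluated with the honest optimal value function rather than the policy-dependent one.

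Next, I would handle the maximization over $\weight'$: choosing $\weight' = \weight_{i_{\thv}}$ (the vertex of $\mcf{W}$ selecting the worst feature) recovers $\delta_{\mcf{C}}(\pth,\expert)$ up to the $\uv_{\lv'}$ term, exactly as in Proposition~\ref{prop:strong}; alternatively, and more directly for the stated bound, choosing $\weight'=\wa$ relates the first SPR term to $\rho_{\wa}(\pth)-\rho_{\wa}(\expert)$. On the lower side, I would bound $\min_{\thv'\in\Theta}\rlag(\thv',\lv,\weight)$ from above by plugging in any convenient $\thv'$; choosing $\thv'$ to be the best approximator of $\mv_{\apprentice}$ gives, after a triangle inequality and the bounds $\norm{\cost_{\weight}-\op^*\uv_{\lv}}_\infty \le \norm{\cost_\weight}_\infty + \tfrac{1}{1-\gamma}\cdot 2\norm{\uv_\lv}_\infty$ together with $\rlag(\mv_{\apprentice},\lv,\weight) = \innerprod{\mv_{\apprentice}-\mv_{\expert}}{\cost_\weight - \op^*\uv_\lv}\le 0$ (the last inequality because $\mv_{\apprentice}$ is primal feasible and value-function-type constraints are relaxed, or more carefully because $\cost_\weight - \op^*\uv_\lv$ need not be sign-definite — this step needs care), the term $\beta\min_{\thv'\in\Theta}\norm{\mv_{\thv'}-\mv_{\apprentice}}_1$ scaled by $\tfrac{2}{1-\gamma}$ in $\varepsilon_{\textup{approx},w}$.

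The remaining work is bookkeeping: combining the two chained inequalities, each application of the SPR bound costs one $\eps$, and the coherence substitution plus the triangle inequality $\norm{\val^\star_{\wa}-\psim\lv^\star}_\infty$ against the actually-used $\uv_{\lv'}$ costs a third $\eps$ — this is where the factor $3$ in $3\epsilon_{\textup{sad}}$ comes from, versus the single $\eps$ in Proposition~\ref{prop:strong}. Finally, I would invoke Proposition~\ref{prop:primal-dual-suboptimality} to identify $\wa$ as a cost weight for which the apprentice is optimal, so that $\rho_{\wa}(\apprentice)=\rho^\star_{\wa}\le\rho_{\wa}(\pth)$ is not needed in the wrong direction, and convert the $\mcf{C}$-distance bound to the $\rho_{\true}$ bound exactly as at the end of Proposition~\ref{prop:strong}, absorbing $\alpha^\star$. \textbf{The main obstacle} I anticipate is the bookkeeping around the coherence substitution: the assumption produces a $\lv'$ that only matches $\val^\star_{\wa}$ on the functional $\innerprod{\initial-\op\mv_{\thv}}{\cdot}$, not pointwise, so I must be careful that the \emph{same} $\lv'$ is admissible in the SPR inequality and that the residual term $\innerprod{\mv_\thv - \mv_\expert}{\cost_{\weight'} - \op^*\uv_{\lv'}}$ rearranges cleanly into $\rho_{\weight'}(\pi_{\mv_\thv}) - \rho_{\weight'}(\expert) - \big(\text{Bellman-gap term in }\val^\star_{\wa}\big)$, and that the Bellman-gap term has the right sign (it is $\ge 0$ because $\val^\star_{\wa}$ satisfies the relaxed Bellman inequality $\cost_{\wa}-\op^*\val^\star_{\wa}\ge\mbf 0$ and $\mv_{\pth}\ge\mbf 0$, but one must check that $\weight'=\wa$ is the choice that makes this work, coupling the $\weight'$-choice and the coherence $\uv$-choice). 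Getting these two choices to be simultaneously consistent is the delicate point; everything else is triangle inequalities and the norm bounds already recorded after the definition of $\Theta,\Lambda,\mcf{W}$.
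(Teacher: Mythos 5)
Your plan goes wrong at the central step: the Coherence Assumption cannot be applied with $\uv=\val^\star_{\wa}$ in place of the policy-dependent value function. The reason $\val_{\cost_{i_{\thv}}}^{\pth}$ appears in Proposition~\ref{prop:strong} is the identity $\innerprod{\mth}{\cost_{i_{\thv}}-\op^*\val_{\cost_{i_{\thv}}}^{\pth}}=0$ (Lemma~\ref{lemma:vector_form}), which is what converts $\innerprod{\mth}{\cdot}$ into the performance of the \emph{extracted policy}, $\innerprod{\mv_{\pth}}{\cdot}$, with no distribution-mismatch penalty. Once you swap in $\val^\star_{\wa}$, the corresponding term $\innerprod{\mth}{\cost_{\wa}-\op^*\val^\star_{\wa}}$ is merely nonnegative, so your coherence substitution only yields $\rlag(\thv,\lv',\wa)\ge\rho^\star_{\wa}-\rho_{\wa}(\expert)$; this involves the \emph{optimal} value $\rho^\star_{\wa}\le\rho_{\wa}(\pth)$, which is an inequality in the wrong direction and says nothing about $\rho_{\true}(\pth)$. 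Moreover, even the remaining term $\innerprod{\mth}{\cost_{\wa}}$ differs from $\rho_{\wa}(\pth)=\innerprod{\mv_{\pth}}{\cost_{\wa}}$ by up to $\norm{\mth-\mv_{\pth}}_1$. Controlling this mismatch is the missing idea: the paper proves $\norm{\mth-\mv_{\pth}}_1\le\norm{\op\mth-\initial}_1$ (Lemma~\ref{lemma:constraint_violation}), and then the argument splits into (a) a coherence-free bound $\rho_{\true}(\pth)-\rho_{\true}(\expert)\le\sad+2\norm{\op\mth-\initial}_1+\tfrac{2\beta}{1-\gamma}\min_{\thv'\in\Theta}\norm{\mv_{\thv'}-\mpa}_1+\alpha^\star$, obtained with the fixed test point $(\uv_{\lv_0},\wtrue)$ where $\norm{\uv_{\lv_0}}_\infty\le 1$ (Lemma~\ref{lemma:first:bound}), and (b) a bound on the Bellman-flow violation $\norm{\op\mth-\initial}_1$ (Lemma~\ref{lemma:second-bound}).

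Coherence is used only in step (b), and there it must be invoked with the \emph{extremal} vector $\ut=\arg\max_{\norm{\uv}_\infty\le 2}\innerprod{\op\mth-\initial}{-\uv}$, a scaled sign vector realizing $2\norm{\op\mth-\initial}_1$ as a linear functional --- this is precisely why the assumption is stated for all $\uv$ with $\norm{\uv}_\infty\le 2$. The resulting $\lt\in\Lambda$ gives $2\norm{\op\mth-\initial}_1=\innerprod{\op\mth-\initial}{-\uv_{\lt}}$, and only \emph{then} do $\val^\star_{\wa}$ and the relaxed dual optimizer $\lv^\star$ enter, as comparison points that split $\innerprod{\op\mth-\initial}{\val^\star_{\wa}-\uv_{\lt}}$ into one saddle-point-residual term plus $\tfrac{2}{1-\gamma}\norm{\val^\star_{\wa}-\psim\lv^\star}_\infty$. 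This also corrects your accounting of the constant: the factor $3$ comes from one $\sad$ in step (a) plus one for each of the two copies of $\norm{\op\mth-\initial}_1$, not from ``the coherence substitution plus a triangle inequality.'' Your treatment of the $\thv'$ side (approximating $\mpa$ and using $\innerprod{\mpa-\mexp}{\cost_{\weight}}\le\alpha^\star$, which is indeed legitimate since $\op\mpa=\op\mexp=\initial$) does match the paper, but without the distribution-mismatch lemma and the extremal choice of $\uv$ in the coherence step the proof cannot be completed along the route you describe.
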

		Assumption~\ref{ass:coherence} is satisfied when the rows of $\psim$ are orthonormal. As it is apparent from the proof of Proposition~\ref{prop:weak}, Assumption~\ref{ass:coherence} controls the \emph{distribution mismatch} between ${\boldsymbol{\mu}}_\theta$ and ${\boldsymbol{\mu}}_{\pi_\theta}$. In general $\mth$ is not necessarily an occupancy measure generated by a policy. The term $\norm{\op\mth-\initial}_1$, i.e., the violation degree of the Bellman flow constraints is related to the quality of the extracted policy $\pth$. Assumption~\ref{ass:coherence} ensures that an approximate saddle point with small violation degree of the \emph{aggregated} Bellman flow constraints $\psim^\intercal(\op\mth-\initial)=0$, has also small violation degree for the original Bellman flow constraints and thus produces a \emph{good} policy.

		The approximation error term $\norm{\val^\star_{\wa}-\psim\lv^\star}_\infty$  can be bounded by approximate dynamic programming techniques~\cite{DeFarias:2003,DeFarias:2004,Shariff:2020}. For example, under the \emph{realizability assumption}~\cite{Chen:2018,Bas-Serrano:2020a},  we get that $\varepsilon_{\textup{approx},w}=0$. This is formally stated in Lemma~\ref{lemma:realizability} in Appendix~\ref{B}.

		Moreover, assuming that there exists a set of \emph{core} states whose features span those of other states, the weak value function features ensure a low approximation error $\norm{\val^\star_{\wa}-\psim\lv^\star}$~\cite{Laksh:2018,Shariff:2020}.
		
	 	\section{Algorithm and Finite-Sample Analysis}\label{sec:Algorithm}
		
		After the analysis of our saddle-point setup, the aim of this section is (i) to provide a computationally efficient stochastic primal-dual algorithm whose iteration and sample complexities do not grow with the size of state and action spaces, and (ii) to obtain explicit probabilistic performance bounds on the quality of the extracted policy with respect to the unknown true cost function. 
		
		\subsection{Stochastic Primal-Dual LfD Algorithm}
		Having formulated the linearly-relaxed saddle-point problem~(\ref{LRALP}) with a few variables and constraints, we will now propose an iterative stochastic approximation algorithm for the batch LfD problem. Assuming access to a generative-model oracle and a finite set of expert demonstrations, we propose a stochastic mirror descent primal-dual algorithm which keeps the advantages of simplicity of implementation, low memory requirements, and low computational complexity. 
		
		We will consider the slightly modified Lagrangian
		\begin{align*}
		\mlag(\thv,\lv,\weight)\triangleq&\innerprod{\mth-\mv_{\expert}}{\cost_{\weight}-\op^*\uv_{\lv}}\\
		&\phantom{{}=}\phantom{{}=}+C_{\beta,\gamma}\innerprod{\mv_{\thv}}{\mbf{1}}-2\innerprod{\weight}{\mbf{1}},
		\end{align*}
		where $\cbg\triangleq2\beta/(1-\gamma)$. Note that $\mlag$ differs from $\rlag$ up to a constant, since $\innerprod{\mth}{\mbf{1}}=\innerprod{\weight}{\mbf{1}}=1$, for all $\thv\in\Theta$ and $\weight\in\mathcal{W}$. Thus, using $\mlag$ instead of $\rlag$ does not change the saddle-point residuals. In addition, we have that $\nabla_{\thv}\mlag(\thv,\lv,\weight)\ge\mbf{0}$ and $\nabla_{\weight}\mlag(\thv,\lv,\weight)\le\mbf{0}$, for all $\thv\in\Theta$, $\lv\in\Lambda$ and $\weight\in\mathcal{W}$. The same is true for their stochastic gradients. We will need this property in order to derive finer error bounds by using local norm arguments. For a more detailed discussion, see  Section~\ref{sec:sample-complexity}.   
		
		In the forthcoming material we use the following notation. The cost matrix is denoted by $\cma\triangleq\begin{bmatrix}\cost_1&\ldots&\cost_{n_c} \end{bmatrix}$ and the feature matrices by $\phim\triangleq\begin{bmatrix}\phiv_1&\ldots&\phiv_{n_{\mu}}\end{bmatrix}$ and $\psim\triangleq\begin{bmatrix}\psiv_1&\ldots&\psiv_{n_u}\end{bmatrix}$. The $(x,a)$-th row of $\cma$ is denoted by $\cost_{(x,a)}$ and and the $x$-th row of $\psim$ by $\psiv_{x}$.

		Note that both the expert policy $\expert$ and the transition law $P$ are unknown and they do appear in the reduced Lagrangians $\rlag$ and $\mlag$. We will now explain how to tackle this difficulty.
		
		We define the \emph{feature expectation} vector $\fev\in\Re^{n_c}$ of the expert policy $\expert$ by 
		$
		\mbs{\rho}_{\cma}(\expert)\triangleq (\rho_{\cost_1}(\expert),\ldots,\rho_{\cost_{n_c}}(\expert))^\intercal.
		$
		Since in practice we do not have access to the whole policy $\expert$, but instead can observe a finite set of \iid sample trajectories $\mathcal{D}_{\expert}^{m,H}\triangleq\{(x_0^k,a_0^k,x_1^k,a_1^k,\ldots,x_H^k,a_H^k)\}_{k=1}^m\sim\expert$, we consider the empirical feature expectation vector $\efev$ by taking sample averages, i.e., for each $i=1,\ldots,n_c$,
		$
		\rho_{\cost_i}(\widehat{\expert})\triangleq (1-\gamma)\frac{1}{m} \sum_{t=0}^H \sum_{j=1}^m \gamma^t c_i(x_t^j,a_t^j).
		$
		So the final empirical Lagrangian that our algorithm optimizes is given by
		\begin{align*}
		\elag(\thv,\lv,\weight)\triangleq&\innerprod{\mth}{\cw-\op^*\ul+C_{\beta,\gamma}\cdot\mbf{1}}
		\\
		&\phantom{{}=}+\innerprod{\initial}{\ul}-\innerprod{\weight}{\efev+2\cdot\mbf{1}}.
		\end{align*}
		
		\begin{algorithm}[tb]
			\caption{Stochastic Primal-Dual LfD}
			\label{alg:main}
			\begin{algorithmic}[1]
				\STATE {\bfseries Input:} cost matrix $\mbf{C}$, feature matrices $\mbs{\Phi}$ and $\mbs{\Psi}$
				\STATE {\bfseries Input:} number of iterations $N$, step-size $\eta$, radius $\beta$
				\STATE {\bfseries Input:} expert demonstrations $\mathcal{D}_{\textup{E}}^{m,H}$, generative-model 
				\STATE Compute $\efev$ using expert demonstrations.
				\STATE Set $\thv_{1,i}=\frac{1}{n_{\mu}}$, $i\in [n_{\mu}]$, $\weight_{1,i}=\frac{1}{n_c}$, $i\in[n_c]$, $\lv=\mbf{0}$
				\FOR{$n=1,\ldots N-1$}
				\STATE \texttt{// $\thv$ gradient estimation}
				\STATE Sample $i\sim\rm{Unif}([n_{\mu}])$, $(x,a)\sim{\boldsymbol{\phi}}_{i}$, $y\sim P(\cdot|x,a)$
				\STATE Set
				
				$g_{n,\thv,j}=
				\begin{cases}
				\frac{n_{\mu}\left( (1-\gamma)\cost_{(x,a)}^\intercal\weight_n-(\mbs{\psi}_x-\gamma\mbs{\psi}_{y})^\intercal\lv_n+2\beta\right)}{1-\gamma},j=i\\
				0,\quad\mbox{otherwise}
				\end{cases}
				$
				\STATE \texttt{// $\lv$ gradient estimation}
				\STATE Sample $x'\sim\initial$, $i\sim\thv_n$, $(x,a)\sim\phiv_i$, $y\sim P(\cdot|x,a)$
				\STATE Set  $\mbf{g}_{n,\lv}=\mbs{\psi}_{x'}-\frac{\mbs{\psi}_{x}-\gamma \mbs{\psi}_{y}}{1-\gamma}$
				\STATE \texttt{// $\weight$ gradient estimation}
				\STATE Sample $i\sim\thv_n$, $(x,a)\sim\phiv_i$
				\STATE Set $\mbf{g}_{n,\weight}=\mbf{c}_{(x,a)}-\efev-2\cdot\mbf{1}$
				\STATE \texttt{// Stochastic mirror descent steps}
				\STATE  Update $\theta_{{n+1},j}\propto \theta_{n,j}e^{(-\eta g_{n,\thv,j})},\,\,j\in[n_{\mu}]$
				\STATE  Update $\lv_{n+1}=\Pi_{\Lambda}(\lv_n +\tfrac{\eta\beta^2}{n_u}\mbf{g}_{n,\lv})$
				\STATE  Update $w_{{n+1},j}\propto w_{n,j}e^{(\eta g_{n,\weight,j})},\,\,j\in[n_c]$
				\ENDFOR
				\STATE Set ${\hat{\thv}}_N=\frac{1}{N}\sum_{n=1}^N\thv_n$
				\STATE {\bfseries Output:} $\hat{\pi}_N=\pi_{\mbs{\Phi}\hat{\thv}_N}$
				
			\end{algorithmic}
		\end{algorithm}
		
		Our optimization variable is $\mbf{z}=(\thv,\lv,\weight)\in\mbf{Z}$, where $\mbf{Z}\triangleq\Theta\times\Lambda\times\mathcal{W}$ is the decision space.
		The monotone operator $\mbf{G}(\mbf{z})\triangleq
		\begin{bmatrix}
		\nabla_{\thv}\elag(\zv)^\intercal		 &
		-\nabla_{\lv}\elag(\zv)^\intercal		 &
		-\nabla_{\weight}\elag(\zv)^\intercal	
		\end{bmatrix}^\intercal$ is given by
		\begin{equation*}
		\mbf{G}(\mbf{z})=
		\begin{pmatrix}
		\mbs{\Phi}^\intercal\left(\cost_{\weight}-\op^*\uv_{\lv}+C_{\beta,\gamma}\cdot\mbf{1}\right)		 \\
		\psim^\intercal(\op\mth-\initial)	 \\
		\efev-\cma^\intercal\mth+2\cdot\mbf{1}
		\end{pmatrix}.
		\end{equation*}
		Note that when $\ul$ is the value function of a policy, then the term $\cost_{\weight}-\op^*\uv_{\lv}$ is the corresponding \emph{advantage function}. Moreover, in general $\mth$ is not necessarily an occupancy generated by a policy. The term $\norm{\psim^\intercal(\op\mth-\initial)}_1$ is the violation degree of the aggregated Bellman flow constraints. 
		
		Although the dynamics are unknown, by having at our disposal the generative-model oracle we can compute cheap unbiased gradient estimates. Even in the case of known dynamics, we have expensive matrix computations and one can accelerate learning by randomization. The rationale behind the sampling procedure and the computation of the gradient estimates is similar to~\cite{Chen:2018}. Indeed, note that we can equivalently consider a stochastic saddle-point formulation, by writing the empirical Lagrangian $\elag$ in the following sample-friendly form
	
		$
		\elag(\zv)
		$
		
		$
		 =\athroisma{j}{1}{n_\mu}{\theta_j}\mathop{{}\Exp}_{\substack{(x,a)\sim\phiv_j \\ {y\sim P(\cdot|x,a)}}}\left[\frac{ (1-\gamma)\cost_{(x,a)}^\intercal\weight-(\mbs{\psi}_x-\gamma\mbs{\psi}_{y})^\intercal\lv+2\beta}{1-\gamma}\right]
		$
		
		$
		\phantom{{}=}+\Exp_{x\sim\initial}\left[\psiv_x^\intercal\lv\right]-\innerprod{\weight}{\efev+2\cdot\mbf{1}}.
		$
		
        In particular, in round $n$ we make three independent calls of the generative oracle and compute the unbiased stochastic gradient 
		$\mbf{g}_n=
		\begin{bmatrix}
		\mbf{g}_{n,\thv}^\intercal	 &
		-\mbf{g}_{n,\lv}^\intercal		 &
		-\mbf{g}_{n,\weight}^\intercal		
		\end{bmatrix}^\intercal
		$
		of $\mbf{G}(\mbf{z}_n)$, as described in Algorithm~\ref{alg:main}.  We defer the details to Appendix~\ref{C}. 
		
	   Note that each iteration costs $\mathcal{O}(1)$ sample generation. Therefore the iteration and the sample complexity of the algorithm coincide. Moreover, the computation of gradient with respect to $\lv$ and $\weight$ is $\mathcal{O}(n_u)$ and $\mathcal{O}(n_c)$, respectively. On the other hand the computation of gradient with respect to $\mu$ is $\mathcal{O}(1)$ since only one coordinate is updated per iteration.
		
		The update rule of primal-dual mirror descent is given by
		\[
		\mbf{z}_{n+1}=\arg\min_{\mbf{z}\in\mbf{Z}} \big(\innerprod{\mbf{g}_n}{\mbf{z}}+\tfrac{1}{\eta}B_{R}(\mbf{z}||\mbf{z}_n)\big),
		\] 
		where $R$ is the following distance-generating function 
		\[
		R(\mbf{z})=\sum_{i=1}^{n_{\mu}}\theta_i\log \theta_i+\frac{n_u}{2\beta^2}\norm{\lv}_2^2+\sum_{i=1}^{n_c}w_i\log w_i.
		\]
		The choice of Shannon entropy for the variables $\mv$ and $\weight$ which live in the probability simplex mitigates the effects of dimension. Similarly, the factor $\tfrac{n_u}{\beta^2}$ is chosen to make the size of Bregman divergence dimension-free~\cite{Cheng:2020}.
		
		The analytical form of the updates can be seen in Algorithm~\ref{alg:main}. Note once more that the offsets considered in the modified Lagrangian have no effect since the gradients are exp-transformed and the resulting distribution normalized. We denote by $\widehat{\mbf{z}}_N\triangleq\frac{1}{N}\athroisma{n}{1}{N}{\zv_n}$ the iterate average after $N$ iterations. Then, the output of the algorithm is the policy $\widehat{\pi}_N\triangleq\pi_{\mv_{\widehat{\thv}_N}}$.

		\subsection{Sample Complexity}\label{sec:sample-complexity}
		The following theorem gives the sample complexity of Algorithm~\ref{alg:main}. The proof can be found in Appendix~\ref{D} and is based on high-confidence
		regret bounds of mirror descent with local norm arguments~\cite{Carmon:2019,Jin:2020}, combined with Propositions~\ref{prop:strong}--\ref{prop:weak}. In Appendix~\ref{E} we provide preliminary empirical results on a simple tabular MDP in order to
		 illustrate our formulations and theoretical results.

		\begin{theorem}\label{th:strong}
		Let $\widehat{\pi}_N$ be the output of running Algorithm~\ref{alg:main} for $N=\max\left\{\mathcal{O}\left(\frac{\beta^2 n_\mu \log\left(\frac{1}{\delta}\right)}{(1-\gamma)^2\varepsilon^2}\right),\mathcal{O}\left(\frac{\beta\sqrt{n_\mu^3 \log\left(\frac{1}{\delta}\right)}}{(1-\gamma)\varepsilon}\right)\right\}$ iterations, with $m=\frac{8\log(\frac{4n_c}{\delta})}{\varepsilon^2}$ expert trajectories of length $H=\frac{1}{1-\gamma}\log(\frac{2}{\varepsilon})$, and learning rate $\eta=\frac{1-\gamma}{\beta\sqrt{N n_\mu}}$ . Then, with probability $1-\delta$ it holds that
		$
		\rho_{\true}(\widehat{\pi}_N)-\rho_{\true}(\expert)\le\varepsilon+\varepsilon_{\textup{approx},\widehat{\thv}_N}+\alpha^\star.
		$
		If in addition Assumption~\ref{ass:coherence} is satisfied,  then with probability $1-\delta$, it holds that
		$
		\rho_{\true}(\widehat{\pi}_N)-\rho_{\true}(\expert)\le\varepsilon+\varepsilon_{\textup{approx},w}+\alpha^\star.
		$
		\end{theorem}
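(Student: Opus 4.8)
The idea is to split the final error into an \emph{optimization} part, a \emph{statistical} part, and an \emph{approximation} part, the last being exactly what Propositions~\ref{prop:strong}--\ref{prop:weak} already account for. Observe first that the algorithm optimizes the empirical Lagrangian $\elag$, and that $\elag$ differs from the reduced Lagrangian $\rlag$ only by the term $\innerprod{\weight}{\efev-\fev}$ plus a constant independent of $(\thv,\lv,\weight)$ (this is the chain $\rlag\to\mlag\to\elag$). Since $\weight$ ranges over $\mcf{W}=\Delta_{[n_c]}$, the inner $\max$ and the outer $\min$ defining the saddle-point residual each shift by at most $\norm{\efev-\fev}_\infty$, so
\[
\epsilon_{\textup{sad}}^{\rlag}(\zN)\;\le\;\epsilon_{\textup{sad}}^{\elag}(\zN)+2\norm{\efev-\fev}_\infty .
\]
Hence it suffices to make each term on the right a small fraction of $\varepsilon$ with total failure probability $\delta$. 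Once $\epsilon_{\textup{sad}}^{\rlag}(\zN)=\mathcal{O}(\varepsilon)$ is established we are done: $\zN=(\thN,\lN,\wN)$ is feasible for~(\ref{LRALP}) (a convex combination of feasible iterates) and $\pi_{\thN}=\pN$, so Proposition~\ref{prop:strong} applied to $\zN$ gives $\rho_{\true}(\pN)-\rho_{\true}(\expert)\le\mathcal{O}(\varepsilon)+\varepsilon_{\textup{approx},\thN}+\alpha^\star$, and under Assumption~\ref{ass:coherence} Proposition~\ref{prop:weak} gives the analogous bound with $\varepsilon_{\textup{approx},w}$ and an extra factor $3$ in front of the residual; in both cases rescaling the hidden constants in $N$ and $m$ absorbs the $\mathcal{O}(1)$ and produces the stated bounds.

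For the optimization part I would run the standard regret decomposition of stochastic mirror descent on the bilinear convex--concave problem over $\mbf{Z}=\Theta\times\Lambda\times\mcf{W}$ with the distance-generating function $R$. Writing $\mbf{g}_n=\mbf{G}(\zv_n)+(\mbf{g}_n-\mbf{G}(\zv_n))$ and using convexity--concavity, the duality gap of the averaged iterate obeys
\[
\epsilon_{\textup{sad}}^{\elag}(\zN)\;\le\;\tfrac{1}{\eta N}\,B_R^{\max}+\tfrac{\eta}{N}\sum_{n=1}^N\norm{\mbf{g}_n}_{\ast,n}^2+\tfrac{1}{N}\sup_{\zv\in\mbf{Z}}\sum_{n=1}^N\innerprod{\mbf{g}_n-\mbf{G}(\zv_n)}{\zv_n-\zv},
\]
where $\norm{\cdot}_{\ast,n}$ is the \emph{local} dual norm at $\zv_n$ and $B_R^{\max}\triangleq\max_{\zv\in\mbf{Z}}B_R(\zv\|\zv_1)\le\log n_\mu+\log n_c+\tfrac12$ (the factor $\tfrac{n_u}{\beta^2}$ in $R$ and the radius of $\Lambda$ make the $\lv$-part of the Bregman diameter dimension-free). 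The crucial observation is that, although the coordinate estimator $\mbf{g}_{n,\thv}$ has $\ell_\infty$-size of order $n_\mu\beta/(1-\gamma)$, its \emph{local} second moment is only $\mathcal{O}(n_\mu\beta^2/(1-\gamma)^2)$, since its single active coordinate is drawn uniformly and is weighted by $\theta_n$ in the local norm; the $\lv$- and $\weight$-blocks contribute $\mathcal{O}(\beta^2/(1-\gamma)^2)$ using $\norm{\psiv_j}_\infty\le1$ and $\norm{\cost_i}_\infty\le1$, and the sign properties $\nabla_{\thv}\mlag\ge\mbf{0}$, $\nabla_{\weight}\mlag\le\mbf{0}$ are used to keep the entropy-mirror estimates tight. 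The noise term is controlled in high probability by a Freedman-type bound carried out in these local norms (the device of~\cite{Carmon:2019,Jin:2020}). Substituting $\eta=\tfrac{1-\gamma}{\beta\sqrt{Nn_\mu}}$ gives, with probability $1-\delta/2$, a residual of order $\tfrac{\beta\sqrt{n_\mu}\,\log(1/\delta)}{(1-\gamma)\sqrt N}+\tfrac{\beta\,n_\mu^{3/2}\sqrt{\log(1/\delta)}}{(1-\gamma)N}$, and requiring each summand to be at most $\varepsilon$ reproduces the two branches of the $\max$ in the stated value of $N$.

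For the statistical part, decompose $|\rho_{\cost_i}(\expert)-\rho_{\cost_i}(\widehat{\expert})|$ into a truncation bias and a Monte-Carlo error. The bias is $(1-\gamma)\,\big|\Exp\sum_{t>H}\gamma^t c_i(x_t,a_t)\big|\le\gamma^{H+1}\le e^{-(1-\gamma)H}\le\varepsilon/2$ by the choice of $H$. The Monte-Carlo error is the deviation of an average of $m$ i.i.d.\ variables in $[-1,1]$ from its mean, so Hoeffding together with a union bound over the $n_c$ features yields $\Prob[\norm{\efev-\fev}_\infty>\varepsilon]\le 2n_c\,e^{-m\varepsilon^2/8}\le\delta/2$ for $m=\tfrac{8\log(4n_c/\delta)}{\varepsilon^2}$. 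Intersecting this event with the one from the previous paragraph gives, with probability $1-\delta$, $\epsilon_{\textup{sad}}^{\rlag}(\zN)=\mathcal{O}(\varepsilon)$, and Propositions~\ref{prop:strong}--\ref{prop:weak} finish the proof.

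The main obstacle is the high-probability step in the optimization part. A naive Azuma bound on the noise term would be forced to use the $\ell_\infty$-bound $\mathcal{O}(n_\mu\beta/(1-\gamma))$ on the $\thv$-estimator and would therefore carry a spurious $n_\mu$-factor, destroying the advertised linear-in-features complexity; the remedy is that the large component of $\mbf{g}_n$ is supported where the current iterate $\zv_n$ puts its mass, so it is harmless in the iterate-dependent norm $\norm{\cdot}_{\ast,n}$. Turning this intuition into a rigorous argument --- selecting the right local norms, bounding the predictable quadratic variations, handling the supremum over comparators, and checking the interaction with the entropy mirror map --- is the delicate piece, and it is exactly what the local-norm analysis of~\cite{Carmon:2019,Jin:2020} supplies.
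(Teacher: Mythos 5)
Your proposal is correct and follows essentially the same route as the paper: reduce to the saddle-point residual of $\zN$ via Propositions~\ref{prop:strong}--\ref{prop:weak}, pass from $\rlag$ to $\elag$ at a cost of $2\norm{\efev-\fev}_\infty$ controlled by a truncation-aware Hoeffding bound with the stated $m$ and $H$, and bound the empirical residual by a local-norm stochastic mirror-descent regret analysis with $\eta=\tfrac{1-\gamma}{\beta\sqrt{Nn_\mu}}$. The one step you defer to the cited references --- the high-probability control of $\max_{\zv\in\mbf{Z}}\sum_{n}\innerprod{\mbf{G}(\zv_n)-\gv(\zv_n,\xi_n)}{\zv_n-\zv}$ despite the trajectory-dependent comparator --- is exactly what the paper makes explicit via the ghost-iterate device, splitting this supremum into a genuine martingale $\sum_n\innerprod{\Delta_n}{\zv_n-\bar{\zv}_n}$ (bounded by a Bernstein martingale inequality) plus the regret of a second, virtual mirror-descent run driven by the noise $\Delta_n$ (bounded by the same local-norm lemmas together with Azuma--Hoeffding).
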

			
			Note that we do not have full access to the
			true dynamics and the expert policy but instead we can
			query the generative oracle and observe a batch
			finite set of truncated expert demonstrations. In our
			stochastic algorithm this is depicted to the fact that we replace the expert feature expectation vector $\fev$ by its empirical counterpart $\efev$, and we
			do not consider the  full gradient vector but instead we
			use unbiased gradient estimates. The estimation error is
			quantified in terms of concentration inequalities. Indeed
			for the sample average of the expert feature expectation vector we use a variant of the Hoeffding's
			inequality~\cite{Syed:2007}, and for the error due to the stochastic
			gradient we choose appropriate martingale concentration inequalities~\cite{McDiarmid:1998}.

		By using the modified Lagrangian $\mlag$ and making the unbiased gradient estimates of $\boldsymbol{\theta}\in\Delta_{[n_\mu]}$ and $ \weight\in\Delta_{[n_c]}$ nonnegative/nonpositive, our final bounds have a better dimension dependency. This trick, also used in~\cite{Jin:2020,Cheng:2020}, allows us to attain more refined convergence guarantees by exploiting the low variance bounds under the corresponding local norms (as opposed to the $\norm{\cdot}_\infty$-norm). We refer to Sec.~2.8 in~\cite{Shalev:2012} for the theory behind this technique. 
		 In our case, by replacing $\norm{\mathbf{g}_{n,\thv}}_\infty^2$ with the local norm $\norm{\mathbf{g}_{n,\thv}}_{\thv_n}^2\triangleq\sum_{i=1}^{n_\mu}\theta_{n,i} {g}_{n,\thv,i}^2$, we improve the regret bounds by a $n_\mu$-factor. 
		 
		 The parameter $\beta$ acts as a regularization in learning. If it is too small, the projection residuals $\min_{\lv\in\Lambda}\norm{\ul-\val_{\cost_{i_{\thv}}}^{\pth}}_\infty$ and $\norm{\val^\star_{\wa}-\psim\lv^\star}_\infty$ in the approximation error terms $\varepsilon_{\textup{approx},\thv}$ and $\varepsilon_{\textup{approx},w}$, respectively, are bigger. If it is too large, the learning becomes slower.

		 	\subsection{A No-Regret Online Learning View}
		 	
		 	In Appendix~\ref{F}, we argue that solving the online learning problem with decision set $\mbf{Z}=\Theta\times\Lambda\times\mathcal{W}$ and per-round loss function
		 	$
		    \ell_n(\zv) \triangleq \rlag(\thv,\lv_n,\weight_n) - \rlag(\thv_n,\lv,\weight)
		 	$
		 	is equivalent to solving the linearly-relaxed saddle-point problem~(\ref{LRALP}). 
		 	
		 	It is worth noting that this online learning approach differs from the one in~\cite{ross2011reduction} where interaction with the expert is required. It is also different from the game-theoretic approach in~\cite{Syed:2007} where the forward RL problem has to be solved repeatedly.
		 
		 \subsection{Linear Function Approximators}
		 We are motivated to work with occupancy measures (OMs) $\mv$ instead of policies $\pi$ because of linearity and flexibility of $\primal$ and $\dual$. We highlight that Propositions~\ref{prop:strong}--\ref{prop:weak} hold even for nonlinear parameterizations $\{\mth\mid\thv\in\Theta\}$ and $\{\ul\mid\lv\in\Lambda\}$, as long as $\mth\in\mathcal{M}$ and $\norm{\ul}_\infty\le\beta$ hold. Thus, the LfD problem is decoupled in two parts: minimization of SPR and function approximation. For the first part, we employ modern large-scale stochastic optimization methods while the second part can be quantified independently of the learning process. We choose LFA to preserve the convexity-concavity of the Lagrangian and design a scalable algorithm with theoretical guarantees (sublinear convergence rate), though at a cost of a potential approximation bias. Indeed, LFA require a careful choice of features. One can use prior knowledge of the MDP to choose appropriate basis functions for the value functions, e.g., in our case the value function basis $\{\psiv_i=\val_{\cost_i}^\star\mid i\in [n_c]\}$) satisfies the realizability assumption. On the other hand the choice of OM features is trickier since only a restrictive class of policies can be represented~\cite{Banijamali:2019}. One can use OMs of policies extracted from ``heuristic'' methods or policies provided from multiple ``cheaper'' suboptimal experts (where one can sample a lot of rollouts) and use Algorithm~\ref{alg:main} to improve upon them. We hope that our techniques will be useful for future algorithm designers and will lay foundations for more exhaustive research in this direction. In Appendix~\ref{G} we point out a
		 few interesting directions.

		 \textbf{Acknowledgements}
		 
		 This project has received funding from the European Research Council 
		 (ERC) under the European Union’s Horizon 2020 research and innovation 
		 programme grant agreement OCAL, No.\ 787845.

		\bibliography{refs}

\begin{thebibliography}{78}
\providecommand{\natexlab}[1]{#1}
\providecommand{\url}[1]{\texttt{#1}}
\expandafter\ifx\csname urlstyle\endcsname\relax
  \providecommand{\doi}[1]{doi: #1}\else
  \providecommand{\doi}{doi: \begingroup \urlstyle{rm}\Url}\fi

\bibitem[Abbasi-Yadkori et~al.(2014)Abbasi-Yadkori, Bartlett, and
  Malek]{Abbasi-Yadkori:2014}
Abbasi-Yadkori, Y., Bartlett, P.~L., and Malek, A.
\newblock Linear programming for large-scale {M}arkov decision problems.
\newblock In \emph{International Conference on Machine Learning (ICML)}, 2014.

\bibitem[Abbeel \& Ng(2004)Abbeel and Ng]{Abbeel:2004}
Abbeel, P. and Ng, A.~Y.
\newblock Apprenticeship learning via inverse reinforcement learning.
\newblock In \emph{International Conference on Machine Learning (ICML)}, 2004.

\bibitem[Abbeel et~al.(2008)Abbeel, Dolgov, Ng, and Thrun]{Abbeel:2008}
Abbeel, P., Dolgov, D., Ng, A.~Y., and Thrun, S.
\newblock Apprenticeship learning for motion planning with application to
  parking lot navigation.
\newblock In \emph{IEEE/RSJ International Conference on Intelligent Robots and
  Systems (IROS)}, 2008.

\bibitem[Ahuja \& Orlin(2001)Ahuja and Orlin]{Ahuja:2001}
Ahuja, R.~K. and Orlin, J.~B.
\newblock Inverse optimization.
\newblock \emph{Operations Research}, 49\penalty0 (5):\penalty0 771--783, 2001.

\bibitem[Bagnell(2015)]{Bagnell:2015}
Bagnell, J. A.~D.
\newblock An invitation to imitation.
\newblock Technical Report CMU-RI-TR-15-08, Carnegie Mellon University,
  Pittsburgh, PA, March 2015.

\bibitem[Banijamali et~al.(2019)Banijamali, Abbasi-Yadkori, Ghavamzadeh, and
  Vlassis]{Banijamali:2019}
Banijamali, E., Abbasi-Yadkori, Y., Ghavamzadeh, M., and Vlassis, N.
\newblock Optimizing over a restricted policy class in {MDPs}.
\newblock In \emph{International Conference on Artificial Intelligence and
  Statistics (AISTATS)}, 2019.

\bibitem[Banjac \& Lygeros(2019)Banjac and Lygeros]{Banjac:2019}
Banjac, G. and Lygeros, J.
\newblock A data-driven policy iteration scheme based on linear programming.
\newblock In \emph{IEEE Conference on Decision and Control (CDC)}, 2019.

\bibitem[Bas-Serrano \& Neu(2020)Bas-Serrano and Neu]{Bas-Serrano:2020b}
Bas-Serrano, J. and Neu, G.
\newblock Faster saddle-point optimization for solving large-scale {M}arkov
  decision processes.
\newblock In \emph{Conference on Learning for Dynamics and Control (L4DC)},
  2020.

\bibitem[Bas-Serrano et~al.(2020)Bas-Serrano, Curi, Krause, and
  Neu]{Bas-Serrano:2020a}
Bas-Serrano, J., Curi, S., Krause, A., and Neu, G.
\newblock Logistic ${Q}$-learning.
\newblock \emph{arXiv:2010.11151}, 2020.

\bibitem[Bertsekas(2012)]{Bertsekas:2012}
Bertsekas, D.~P.
\newblock \emph{Dynamic Programming and Optimal Control: Approximate Dynamic
  Programming, Vol.~II}.
\newblock Athena Scientific, 4th edition, 2012.

\bibitem[Bertsekas \& Tsitsiklis(1996)Bertsekas and Tsitsiklis]{Bertsekas:1996}
Bertsekas, D.~P. and Tsitsiklis, J.
\newblock \emph{Neuro-Dynamic Programming}.
\newblock Athena Scientific, 1996.

\bibitem[Beuchat et~al.(2020)Beuchat, Georghiou, and Lygeros]{Beuchaut:2020}
Beuchat, P.~N., Georghiou, A., and Lygeros, J.
\newblock Performance guarantees for model-based approximate dynamic
  programming in continuous spaces.
\newblock \emph{IEEE Transactions on Automatic Control}, 65\penalty0
  (1):\penalty0 143--158, 2020.

\bibitem[Borkar(1988)]{Borkar:1988}
Borkar, V.~S.
\newblock A convex analytic approach to {M}arkov decision processes.
\newblock \emph{Probability Theory and Related Fields}, 78\penalty0
  (4):\penalty0 583--602, 1988.

\bibitem[Boyd et~al.(2017)Boyd, Schiebinger, and Recht]{Boyd:2017}
Boyd, N., Schiebinger, G., and Recht, B.
\newblock The alternating descent conditional gradient method for sparse
  inverse problems.
\newblock \emph{SIAM Journal on Optimization}, 27\penalty0 (2):\penalty0
  616--639, 2017.

\bibitem[Brown et~al.(2020{\natexlab{a}})Brown, Niekum, and
  Petrik]{Brown:2020a}
Brown, D., Niekum, S., and Petrik, M.
\newblock Bayesian robust optimization for imitation learning.
\newblock In \emph{Advances in Neural Information Processing Systems
  (NeurIPS)}, 2020{\natexlab{a}}.

\bibitem[Brown et~al.(2020{\natexlab{b}})Brown, Coleman, Srinivasan, and
  Niekum]{Brown:2020b}
Brown, D.~S., Coleman, R., Srinivasan, R., and Niekum, S.
\newblock Safe imitation learning via fast {B}ayesian reward inference from
  preferences.
\newblock In \emph{International Conference on Machine Learning (ICML)},
  2020{\natexlab{b}}.

\bibitem[Cai et~al.(2019)Cai, Hong, Chen, and Wang]{Cai:2019}
Cai, Q., Hong, M., Chen, Y., and Wang, Z.
\newblock On the global convergence of imitation learning: a case for linear
  quadratic regulator.
\newblock \emph{arXiv:1901.03674}, 2019.

\bibitem[Carmon et~al.(2019)Carmon, Jin, Sidford, and Tian]{Carmon:2019}
Carmon, Y., Jin, Y., Sidford, A., and Tian, K.
\newblock Variance reduction for matrix games.
\newblock In \emph{Advances in Neural Information Processing Systems
  (NeurIPS)}, 2019.

\bibitem[Chen et~al.(2020{\natexlab{a}})Chen, Wang, Liu, Yang, Li, Wang, and
  Zhao]{Chen:2020a}
Chen, M., Wang, Y., Liu, T., Yang, Z., Li, X., Wang, Z., and Zhao, T.
\newblock On computation and generalization of generative adversarial imitation
  learning.
\newblock \emph{International Conference on Learning Representations (ICLR)},
  2020{\natexlab{a}}.

\bibitem[Chen et~al.(2020{\natexlab{b}})Chen, Kornblith, Norouzi, and
  Hinton]{Chen:2020b}
Chen, T., Kornblith, S., Norouzi, M., and Hinton, G.
\newblock A simple framework for contrastive learning of visual
  representations.
\newblock In \emph{International Conference on Machine Learning (ICML)},
  2020{\natexlab{b}}.

\bibitem[Chen et~al.(2018)Chen, Li, and Wang]{Chen:2018}
Chen, Y., Li, L., and Wang, M.
\newblock Scalable bilinear $\pi$ learning using state and action features.
\newblock In \emph{International Conference on Machine Learning (ICML)}, 2018.

\bibitem[{Chen} et~al.(2017){Chen}, {Everett}, {Liu}, and {How}]{Chen:2017}
{Chen}, Y.~F., {Everett}, M., {Liu}, M., and {How}, J.~P.
\newblock Socially aware motion planning with deep reinforcement learning.
\newblock In \emph{2017 IEEE/RSJ International Conference on Intelligent Robots
  and Systems (IROS)}, 2017.

\bibitem[Cheng et~al.(2020)Cheng, des Combes, Boots, and Gordon]{Cheng:2020}
Cheng, C.-A., des Combes, R.~T., Boots, B., and Gordon, G.
\newblock A reduction from reinforcement learning to no-regret online learning.
\newblock In \emph{International Conference on Artificial Intelligence and
  Statistics (AISTATS)}, 2020.

\bibitem[De~Farias \& Van~Roy(2003)De~Farias and Van~Roy]{DeFarias:2003}
De~Farias, D.~P. and Van~Roy, B.
\newblock The linear programming approach to approximate dynamic programming.
\newblock \emph{Operations Research}, 51\penalty0 (6):\penalty0 850--865, 2003.

\bibitem[De~Farias \& Van~Roy(2004)De~Farias and Van~Roy]{DeFarias:2004}
De~Farias, D.~P. and Van~Roy, B.
\newblock On constraint sampling in the linear programming approach to
  approximate dynamic programming.
\newblock \emph{Mathematics of Operations Research}, 29\penalty0 (3):\penalty0
  462--478, 2004.

\bibitem[Dufour \& Prieto-Rumeau(2013)Dufour and Prieto-Rumeau]{Dufour:2013}
Dufour, F. and Prieto-Rumeau, T.
\newblock Finite linear programming approximations of constrained discounted
  {M}arkov decision processes.
\newblock \emph{SIAM Journal on Control and Optimization}, 51\penalty0
  (2):\penalty0 1298--1324, 2013.

\bibitem[Fazel et~al.(2018)Fazel, Ge, Kakade, and Mesbahi]{Fazel:2018}
Fazel, M., Ge, R., Kakade, S., and Mesbahi, M.
\newblock Global convergence of policy gradient methods for the linear
  quadratic regulator.
\newblock In \emph{International Conference on Machine Learning (ICML)}, 2018.

\bibitem[Ghasemipour et~al.(2020)Ghasemipour, Zemel, and Gu]{Ghasemipour:2020}
Ghasemipour, S. K.~S., Zemel, R., and Gu, S.
\newblock A divergence minimization perspective on imitation learning methods.
\newblock In \emph{Conference on Robot Learning (CoRL)}, 2020.

\bibitem[Goodfellow et~al.(2014)Goodfellow, Pouget-Abadie, Mirza, Xu,
  Warde-Farley, Ozair, Courville, and Bengio]{Goodfellow:2014}
Goodfellow, I., Pouget-Abadie, J., Mirza, M., Xu, B., Warde-Farley, D., Ozair,
  S., Courville, A., and Bengio, Y.
\newblock Generative adversarial nets.
\newblock In \emph{Advances in Neural Information Processing Systems
  (NeurIPS)}, 2014.

\bibitem[Hazan(2016)]{Hazan:2016}
Hazan, E.
\newblock Introduction to online convex optimization.
\newblock \emph{Foundations and Trends in Optimization}, 2\penalty0
  (3-4):\penalty0 157--325, 2016.

\bibitem[Hern\'andez-Lerma \& Lasserre(1996)Hern\'andez-Lerma and
  Lasserre]{Hernandez-Lerma:1996}
Hern\'andez-Lerma, O. and Lasserre, J.~B.
\newblock \emph{Discrete-Time Markov Control Processes: Basic Optimality
  Criteria}.
\newblock Springer-Verlag New York, 1996.

\bibitem[Hern\'andez-Lerma et~al.(2003)Hern\'andez-Lerma,
  Gonz\'alez-Hern\'andez, and L\'opez-Mart\'inez]{Hernandez-Lerma:2003}
Hern\'andez-Lerma, O., Gonz\'alez-Hern\'andez, J., and L\'opez-Mart\'inez, R.
\newblock Constrained average cost {M}arkov control processes in {B}orel
  spaces.
\newblock \emph{SIAM Journal on Control and Optimization}, 42\penalty0
  (2):\penalty0 442--468, 2003.

\bibitem[Ho \& Ermon(2016)Ho and Ermon]{Ho:2016b}
Ho, J. and Ermon, S.
\newblock Generative adversarial imitation learning.
\newblock In \emph{Advances in Neural Information Processing Systems
  (NeurIPS)}, 2016.

\bibitem[Ho et~al.(2016)Ho, Gupta, and Ermon]{Ho:2016}
Ho, J., Gupta, J.~K., and Ermon, S.
\newblock Model-free imitation learning with policy optimization.
\newblock In \emph{International Conference on Machine Learning (ICML)}, 2016.

\bibitem[Iyengar \& Kang(2005)Iyengar and Kang]{Iyengar:2005}
Iyengar, G. and Kang, W.
\newblock Inverse conic programming with applications.
\newblock \emph{Operations Research Letters}, 33\penalty0 (3):\penalty0
  319--330, 2005.

\bibitem[Jin \& Sidford(2020)Jin and Sidford]{Jin:2020}
Jin, Y. and Sidford, A.
\newblock Efficiently solving {MDP}s with stochastic mirror descent.
\newblock In \emph{International Conference on Machine Learning (ICML)}, 2020.

\bibitem[Komanduru \& Honorio(2019)Komanduru and Honorio]{Komanduru:2019}
Komanduru, A. and Honorio, J.
\newblock On the correctness and sample complexity of inverse reinforcement
  learning.
\newblock In \emph{Advances in Neural Information Processing Systems
  (NeurIPS)}, 2019.

\bibitem[Kostrikov et~al.(2020)Kostrikov, Nachum, and Tompson]{Kostrikov:2020}
Kostrikov, I., Nachum, O., and Tompson, J.
\newblock Imitation learning via off-policy distribution matching.
\newblock In \emph{International Conference on Learning Representations
  (ICLR)}, 2020.

\bibitem[{Lakshminarayanan} et~al.(2018){Lakshminarayanan}, {Bhatnagar}, and
  {Szepesvári}]{Laksh:2018}
{Lakshminarayanan}, C., {Bhatnagar}, S., and {Szepesvári}, C.
\newblock A linearly relaxed approximate linear program for {M}arkov decision
  processes.
\newblock \emph{IEEE Transactions on Automatic Control}, 63\penalty0
  (4):\penalty0 1185--1191, 2018.

\bibitem[Lee \& He(2019)Lee and He]{Lee:2019a}
Lee, D. and He, N.
\newblock Stochastic primal-dual {Q}-learning algorithm for discounted {MDPs}.
\newblock In \emph{American Control Conference (ACC)}, 2019.

\bibitem[Levine et~al.(2010)Levine, Popovi\'{c}, and Koltun]{Levine:2010}
Levine, S., Popovi\'{c}, Z., and Koltun, V.
\newblock Feature construction for inverse reinforcement learning.
\newblock In \emph{Advances in Neural Information Processing Systems
  (NeurIPS)}, 2010.

\bibitem[Levine et~al.(2011)Levine, Popovi\'{c}, and Koltun]{Levine:2011}
Levine, S., Popovi\'{c}, Z., and Koltun, V.
\newblock Nonlinear inverse reinforcement learning with {G}aussian processes.
\newblock In \emph{Advances in Neural Information Processing Systems
  (NeurIPS)}, 2011.

\bibitem[Manne(1960)]{Manne:1960}
Manne, A.
\newblock Linear programming and sequential decisions.
\newblock \emph{Management Science}, 6\penalty0 (3):\penalty0 259--267, 1960.

\bibitem[Martinelli et~al.(2020)Martinelli, Gargiani, and
  Lygeros]{Martinelli:2020}
Martinelli, A., Gargiani, M., and Lygeros, J.
\newblock Data-driven optimal control with a relaxed linear program.
\newblock \emph{arXiv:2003.08721}, 2020.

\bibitem[McDiarmid(1998)]{McDiarmid:1998}
McDiarmid, C.
\newblock \emph{Concentration}, pp.\  195--248.
\newblock Springer Berlin Heidelberg, 1998.

\bibitem[Miryoosefi et~al.(2019)Miryoosefi, Brantley, Daume~III, Dudik, and
  Schapire]{Miryoosefi:2019}
Miryoosefi, S., Brantley, K., Daume~III, H., Dudik, M., and Schapire, R.~E.
\newblock Reinforcement learning with convex constraints.
\newblock In \emph{Advances in Neural Information Processing Systems
  (NeurIPS)}, 2019.

\bibitem[Mnih et~al.(2015)Mnih, Kavukcuoglu, Silver, Rusu, Veness, Bellemare,
  Graves, Riedmiller, Fidjeland, Ostrovski, Petersen, Beattie, Sadik,
  Antonoglou, King, Kumaran, Wierstra, Legg, and Hassabis]{Mnih:2015}
Mnih, V., Kavukcuoglu, K., Silver, D., Rusu, A.~A., Veness, J., Bellemare,
  M.~G., Graves, A., Riedmiller, M., Fidjeland, A.~K., Ostrovski, G., Petersen,
  S., Beattie, C., Sadik, A., Antonoglou, I., King, H., Kumaran, D., Wierstra,
  D., Legg, S., and Hassabis, D.
\newblock Human-level control through deep reinforcement learning.
\newblock \emph{Nature}, 518\penalty0 (7540):\penalty0 529--533, 2015.

\bibitem[Mohajerin~Esfahani et~al.(2018)Mohajerin~Esfahani, Sutter, Kuhn, and
  Lygeros]{MohajerinEsfahani:2018}
Mohajerin~Esfahani, P., Sutter, T., Kuhn, D., and Lygeros, J.
\newblock From infinite to finite programs: explicit error bounds with
  applications to approximate dynamic programming.
\newblock \emph{SIAM Journal on Optimization}, 28\penalty0 (3):\penalty0
  1968--1998, 2018.

\bibitem[Nachum \& Dai(2020)Nachum and Dai]{Nachum:2020}
Nachum, O. and Dai, B.
\newblock Reinforcement learning via {F}enchel-{R}ockafellar duality.
\newblock \emph{arXiv:2001.01866}, 2020.

\bibitem[Nachum et~al.(2019{\natexlab{a}})Nachum, Chow, Dai, and
  Li]{Nachum:2019a}
Nachum, O., Chow, Y., Dai, B., and Li, L.
\newblock Dualdice: Behavior-agnostic estimation of discounted stationary
  distribution corrections.
\newblock In \emph{Advances in Neural Information Processing Systems
  (NeurIPS)}, 2019{\natexlab{a}}.

\bibitem[Nachum et~al.(2019{\natexlab{b}})Nachum, Dai, Kostrikov, Chow, Li, and
  Schuurmans]{Nachum:2019b}
Nachum, O., Dai, B., Kostrikov, I., Chow, Y., Li, L., and Schuurmans, D.
\newblock Algaedice: Policy gradient from arbitrary experience.
\newblock \emph{arXiv:1912.02074}, 2019{\natexlab{b}}.

\bibitem[Nemirovski et~al.(2009)Nemirovski, Juditsky, Lan, and
  Shapiro]{Nemirovski:2009}
Nemirovski, A., Juditsky, A., Lan, G., and Shapiro, A.
\newblock Robust stochastic approximation approach to stochastic programming.
\newblock \emph{SIAM Journal on Optimization}, 19\penalty0 (4):\penalty0
  1574--1609, 2009.

\bibitem[Neu \& Szepesv\'ari(2007)Neu and Szepesv\'ari]{Neu:2007}
Neu, G. and Szepesv\'ari, C.
\newblock Apprenticeship learning using inverse reinforcement learning and
  gradient methods.
\newblock In \emph{Conference on Uncertainty in Artificial Intelligence (UAI)},
  2007.

\bibitem[Ng \& Russell(2000)Ng and Russell]{Ng:2000}
Ng, A.~Y. and Russell, S.~J.
\newblock Algorithms for inverse reinforcement learning.
\newblock In \emph{International Conference on Machine Learning (ICML)}, 2000.

\bibitem[Pauwels et~al.(2016)Pauwels, Henrion, and Lasserre]{Pauwels:2016}
Pauwels, E., Henrion, D., and Lasserre, J.-B.
\newblock Linear conic optimization for inverse optimal control.
\newblock \emph{SIAM Journal on Control and Optimization}, 54\penalty0
  (3):\penalty0 1798--1825, 2016.

\bibitem[{Pomerleau}(1991)]{Pomerleau:1991}
{Pomerleau}, D.~A.
\newblock Efficient training of artificial neural networks for autonomous
  navigation.
\newblock \emph{Neural Computation}, 3\penalty0 (1):\penalty0 88--97, 1991.

\bibitem[Puterman(1994)]{Puterman:1994}
Puterman, M.~L.
\newblock \emph{Markov Decision Processes: Discrete Stochastic Dynamic
  Programming}.
\newblock John Wiley \& Sons, Inc., USA, 1st edition, 1994.

\bibitem[Ratliff et~al.(2006)Ratliff, Bagnell, and Zinkevich]{Ratliff:2006}
Ratliff, N.~D., Bagnell, J.~A., and Zinkevich, M.~A.
\newblock Maximum margin planning.
\newblock In \emph{International Conference on Machine Learning (ICML)}, 2006.

\bibitem[Ross et~al.(2011)Ross, Gordon, and Bagnell]{ross2011reduction}
Ross, S., Gordon, G., and Bagnell, D.
\newblock A reduction of imitation learning and structured prediction to
  no-regret online learning.
\newblock In \emph{International Conference on Artificial Intelligence and
  Statistics (AISTATS)}, 2011.

\bibitem[Russell(1998)]{Russell:1998}
Russell, S.
\newblock Learning agents for uncertain environments (extended abstract).
\newblock In \emph{Annual Conference on Computational Learning Theory (COLT)},
  1998.

\bibitem[Schulman et~al.(2015)Schulman, Levine, Abbeel, Jordan, and
  Moritz]{Schulman:2015}
Schulman, J., Levine, S., Abbeel, P., Jordan, M., and Moritz, M.
\newblock Trust region policy optimization.
\newblock In \emph{International Conference on Machine Learning (ICML)}, 2015.

\bibitem[Shafieepoorfard et~al.(2013)Shafieepoorfard, Raginsky, and
  Meyn]{Shafieepoorfard:2013}
Shafieepoorfard, E., Raginsky, M., and Meyn, S.~P.
\newblock Rational inattention in controlled {M}arkov processes.
\newblock In \emph{American Control Conference (ACC)}, 2013.

\bibitem[Shalev-Shwartz(2012)]{Shalev:2012}
Shalev-Shwartz, S.
\newblock Online learning and online convex optimization.
\newblock \emph{Foundations and Trends in Machine Learning}, 4\penalty0
  (2):\penalty0 107--194, 2012.

\bibitem[Shariff \& Szepesv\'ari(2020)Shariff and Szepesv\'ari]{Shariff:2020}
Shariff, R. and Szepesv\'ari, C.
\newblock Efficient planning in large {MDP}s with weak linear function
  approximation.
\newblock In \emph{Advances in Neural Information Processing Systems
  (NeurIPS)}, 2020.

\bibitem[Sion(1958)]{Sion:1958}
Sion, M.
\newblock On general minimax theorems.
\newblock \emph{Pacific Journal of Mathematics}, 8\penalty0 (1):\penalty0
  171--176, 1958.

\bibitem[Sutter et~al.(2017)Sutter, Kamoutsi, Esfahani, and
  Lygeros]{Sutter:2017}
Sutter, T., Kamoutsi, A., Esfahani, P.~E., and Lygeros, J.
\newblock Data-driven approximate dynamic programming: A linear programming
  approach.
\newblock In \emph{IEEE Conference on Decision and Control (CDC)}, 2017.

\bibitem[Sutton \& Barto(1998)Sutton and Barto]{Sutton:1998}
Sutton, R.~S. and Barto, A.~G.
\newblock \emph{Introduction to Reinforcement Learning}.
\newblock MIT Press, 1st edition, 1998.

\bibitem[Syed \& Schapire(2007)Syed and Schapire]{Syed:2007}
Syed, U. and Schapire, R.~E.
\newblock A game-theoretic approach to apprenticeship learning.
\newblock In \emph{Advances in Neural Information Processing Systems
  (NeurIPS)}, 2007.

\bibitem[Syed et~al.(2008)Syed, Bowling, and Schapire]{syed2008apprenticeship}
Syed, U., Bowling, M., and Schapire, R.
\newblock Apprenticeship learning using linear programming.
\newblock In \emph{International Conference on Machine Learning (ICML)}, 2008.

\bibitem[Sz\"{o}r\'{e}nyi et~al.(2014)Sz\"{o}r\'{e}nyi, Kedenburg, and
  Munos]{Szorenyi:2014}
Sz\"{o}r\'{e}nyi, B., Kedenburg, G., and Munos, R.
\newblock Optimistic planning in {M}arkov decision processes using a generative
  model.
\newblock In \emph{Advances in Neural Information Processing Systems
  (NeurIPS)}, 2014.

\bibitem[Taleghan et~al.(2015)Taleghan, Dietterich, Crowley, Hall, and
  Albers]{Taleghan:2015}
Taleghan, M.~A., Dietterich, T.~G., Crowley, M., Hall, K., and Albers, H.~J.
\newblock {PAC} optimal {MDP} planning with application to invasive species
  management.
\newblock \emph{Journal of Machine Learning Research}, 16\penalty0
  (117):\penalty0 3877--3903, 2015.

\bibitem[Tesauro \& Kephart(2002)Tesauro and Kephart]{Tesauro:2002}
Tesauro, G. and Kephart, J.~O.
\newblock Pricing in agent economies using multi-agent {Q}-learning.
\newblock \emph{Autonomous Agents and Multi-Agent Systems}, 5\penalty0
  (3):\penalty0 289--304, 2002.

\bibitem[Vamvoudakis \& Kokolakis(2020)Vamvoudakis and
  Kokolakis]{Vamvoudakis:2020}
Vamvoudakis, K.~G. and Kokolakis, N.-M.~T.
\newblock Synchronous reinforcement learning-based control for cognitive
  autonomy.
\newblock \emph{Foundations and Trends in Systems and Control}, 8\penalty0
  (1-–2):\penalty0 1--175, 2020.

\bibitem[Wang et~al.(2020)Wang, Cai, Yang, and Wang]{Wang:2020}
Wang, L., Cai, Q., Yang, Z., and Wang, Z.
\newblock Neural policy gradient methods: Global optimality and rates of
  convergence.
\newblock In \emph{International Conference on Learning Representations
  (ICLR)}, 2020.

\bibitem[Wang(2020)]{Wang:2019}
Wang, M.
\newblock Randomized linear programming solves the {M}arkov decision problem in
  nearly linear (sometimes sublinear) time.
\newblock \emph{Mathematics of Operations Research}, 45\penalty0 (2):\penalty0
  517--546, 2020.

\bibitem[Williams(1992)]{Williams:1992}
Williams, R.~J.
\newblock Simple statistical gradient-following algorithms for connectionist
  reinforcement learning.
\newblock \emph{Machine Learning}, 8\penalty0 (3):\penalty0 229--256, 1992.

\bibitem[Zhang et~al.(2020)Zhang, Cai, Yang, and Wang]{Zhang:2020}
Zhang, Y., Cai, Q., Yang, Z., and Wang, Z.
\newblock Generative adversarial imitation learning with neural network
  parameterization: global optimality and convergence rate.
\newblock In \emph{International Conference on Machine Learning (ICML)}, 2020.

\bibitem[Ziebart et~al.(2008)Ziebart, Maas, Bagnell, and Dey]{Ziebart:2008}
Ziebart, B.~D., Maas, A., Bagnell, J.~A., and Dey, A.~K.
\newblock Maximum entropy inverse reinforcement learning.
\newblock In \emph{National Conference on Artificial Intelligence (AAAI)},
  2008.

\end{thebibliography}
		\bibliographystyle{icml2021}
		
	\onecolumn
	\newpage
	\appendix 
       \section{Derivations and Proofs of Section~\ref{sec:convex_optimization_view}}~\label{A}
       For the following results, we will assume that $\initial\in\Re^{|\sspace|}_{++}$. We also set $\mathcal{W}\triangleq\Delta_{[n_c]}$.
       \subsection{Primal LP formulation~$\primal$}
       We have
       \begin{align}
       \alpha^\star=&\min_{\pi}\max_{c\in\mathcal{C}}\rho_{\cost}(\pi)-\rho_{\cost}(\expert)\tag{definition of $\LfD$}\\
       =& \min_{\pi}\max_{i\in[n_c]}\rho_{\cost_i}(\pi)-\rho_{\cost_i}(\expert)\tag{since $\mathcal{C}=\mathcal{C}_{\rm{conv}}$}\\
       =& \min_{\pi}\max_{i\in[n_c]}\innerprod{\mv_{\pi}-\mexp}{\cost_i} \nonumber\\
       =& \min_{\mv\in\mathfrak{F}}\max_{i\in[n_c]} \innerprod{\mv-\mexp}{\cost_i} \tag{by Proposition~\ref{eq:occup_meas_set}}\\
       =& \min_{\mv,\varepsilon}\{\varepsilon\mid\innerprod{\mv-\mexp}{\cost_i}\le\varepsilon,\,i\in[n_c],\,\,\mv\in\mathfrak{F}\}.\tag{by an epigraphic transformation}
       \end{align}
       \subsection{Dual LP formulation~$\dual$}
        We have
        \begin{align}
        \alpha^\star=& \min_{\mv,\varepsilon}\{\varepsilon\mid\innerprod{\mv-\mexp}{\cost_i}\le\varepsilon,\,i\in[n_c],\,\,\mv\in\mathfrak{F}\}\tag{definition of $\primal$}\\
        =&\min_{\mv\ge\boldsymbol{0},\varepsilon}\max_{\weight\geq\boldsymbol{0},\uv}\quad\varepsilon+\sum_{i=1}^{n_c}w_i\left(\innerprod{\mv-\mexp}{\cost_i}-\varepsilon\right)+\innerprod{\initial-\op\mv}{\uv}\tag{introduce Lagrange multipliers $\weight$ and $\uv$}\\
        =&\min_{\mv\ge\boldsymbol{0},\varepsilon}\max_{\weight\geq\boldsymbol{0},\uv}\quad\varepsilon\left( 1-\sum_{i=1}^{n_c}w_i\right)+\innerprod{\mv-\mexp}{\cw-\op^*\uv}\tag{since $\op\mexp=\initial$}\\
      =& \max_{\weight\geq\boldsymbol{0},\uv} \min_{\mv\ge\boldsymbol{0},\varepsilon}\quad\varepsilon\left( 1-\sum_{i=1}^{n_c}w_i\right)+\innerprod{\mv-\mexp}{\cw-\op^*\uv}\tag{linear duality}\\
      =& \max_{\weight\geq\boldsymbol{0},\uv}\left\{\innerprod{\mexp}{\op^*\uv-\cw}\mid\cw-\op^*\uv\geq\boldsymbol{0},\,\,\athroisma{i}{1}{n_c}{w_i}=1\right\}.\nonumber
        \end{align}

       \subsection{Proof of Proposition~\ref{prop:primal-dual-exactoptimality}}
     \textbf{Proposition} $\mbf{2}$ (Optimal expert).
\emph{Assume that $\expert$ is optimal for $\RL\true$.
 A pair $(\uv_{\textup{A}},\weight_{\textup{A}})$ is optimal for $\dual$ if and only if $\weight_{\textup{A}}\in\Delta_{[n_c]}$, $\expert$ is optimal for $\RL{\cost_{\weight_{\textup{A}}}}$ and $\uv_{\textup{A}}=\mbf{V}_{\weight_{\textup{A}}}^\star$. In particular, $(\mbf{V}^\star_{\mbf{w_{\textup{true}}}},\mbf{w_{\textup{true}}})$ is an optimal solution to~$\dual$.}
  
 \textbf{Lemma} $\mbf{1}$.
\emph{Assume that $\expert$ is optimal for $\RL\true$.
	Then $\expert$ is optimal for $\LfD$ with optimal value $\alpha^\star=0$. Equivalently, $(\mv_\expert,0)$ is optimal for $\primal$.}
	
	\begin{proof}
	It holds that $\alpha^\star\le\delta_{\mathcal{C}}(\expert,\expert)=0$. Assume for the sake of contradiction that $\alpha^\star<0$. Then $\rho_{\true}(\apprentice)-\rho_{\true}(\expert)\le\delta_{\mathcal{C}}(\apprentice,\expert)=\alpha^\star<0$, which contradicts the assumption of optimality of $\expert$ with respect to $\true$. Thus, $\alpha^\star=0$ and in particular $\expert$ is optimal for $\LfD$.
	\end{proof}

We recall the LP approach to MDPs~\cite{Puterman:1994,Bertsekas:2012}. Let $\cost\in\Re^{|\sspace||\aspace|}$ be a cost function. The forward RL problem $\RL{\cost}$ is equivalent to the following linear programs\footnote{Note that usually in the literature the primal LP is~(\ref{MDP-dual}).}
\begin{align}
\rho_{\cost}^\star=&\min_{\mv\in\Re^{|\sspace|\aspace||}}\{\innerprod{\mv}{\cost}\mid\op\mv=\initial,\,\,\,\mv\geq\boldsymbol{0}\}\tag{MDP-P$_{\cost}$}\label{MDP-primal}\\
=& \max_{\uv\in\Re^{|\sspace|}}\{\innerprod{\initial}{\uv}\mid\cost-\op^*\uv\geq\boldsymbol{0}\} \tag{MDP-D$_{\cost}$} \label{MDP-dual}
\end{align}
We have that if $\pi^\star$ is an optimal policy for $\RL{\cost}$, then $\mv_{\pi^\star}$ is optimal  for~(\ref{MDP-primal}) and converselly if $\mv^\star$ is optimal for~(\ref{MDP-primal}), then $\pi_{\mv^\star}$ is an optimal policy for $\RL{\cost}$. Moreover, the optimal value function $\val_{\cost}^\star$ is an optimal solution to~(\ref{MDP-dual}) and it is the unique optimizer when $\initial\in\Re^{|\sspace|}_{++}$.
\begin{proofof}{Proof of Proposition~\ref{prop:primal-dual-exactoptimality}} The proof is based on linear duality and complementary slackness conditions of optimality. Assume first that $(\ua,\wa)$ is optimal for $\dual$. Then,
\begin{align}
\cost_{\wa}-\op^*\ua\geq\boldsymbol{0} \label{eq:feasibility},\quad\wa\in&\,\mathcal{W},\\
\innerprod{\mv_{\expert}}{\op^*\ua-\cost_{\wa}}=\alpha^\star=&\,\,0, \label{eq:optimality}
\end{align}	
where~(\ref{eq:feasibility}) holds because $(\ua,\wa)$ is feasible to $\dual$, and~(\ref{eq:optimality}) holds by optimality and Lemma~\ref{lemma:expert}.
Equation~(\ref{eq:feasibility}) states that $\ua$ is feasbile for~(\ref{MDP-dual}) with cost $\cost=\cost_{\wa}$. Moreover, $\mexp$ is feasible for~(\ref{MDP-primal}) with cost $\cost=\cost_{\wa}$. Therefore, 
\begin{equation}\label{eq:complementary}
\innerprod{\initial}{\ua}\le\rho^\star_{\wa}\le\innerprod{\mexp}{\cost_{\wa}}.
\end{equation}
However,  by~(\ref{eq:optimality}) we get that $\innerprod{\initial}{\ua}=\innerprod{\op\mexp}{\ua}=\innerprod{\mexp}{\op^*\ua}=\innerprod{\mexp}{\cost_{\wa}}$.
Thus, by~(\ref{eq:complementary}) we conclude that $\mexp$ is optimal for~(\ref{MDP-primal}) with cost $\cost=\cost_{\wa}$ and $\ua$ is optimal for~(\ref{MDP-dual}) with cost $\cost=\cost_{\wa}$. Thus $\expert$ is optimal for $\RL{\cost_{\wa}}$ and $\ua=\val^\star_{\cost_{\wa}}$.

Converselly, assume that $\wa\in\mathcal{W}$, $\expert$ is optimal for $\RL{\cost_{\weight_{\textup{A}}}}$, and $\ua=\val^\star_{\wa}$. Then, we have that $\mexp$ is optimal for~(\ref{MDP-primal}) with cost $\cost=\cost_{\wa}$, and $\ua$ is optimal for~(\ref{MDP-dual}) with $\cost=\cost_{\wa}$. Therefore, 
\begin{align}
\cost_{\wa}-\op^*\ua\geq&\,\,\boldsymbol{0} \label{eq:feasibility2},\\
\innerprod{\initial}{\ua}=&\innerprod{\mexp}{\cost_{\wa}},\label{eq:optimality2}
\end{align}
where~(\ref{eq:feasibility2}) holds because $\ua$ is feasible for~(\ref{MDP-dual}) with cost $\cost=\cost_{\wa}$, and~(\ref{eq:optimality2}) holds by primal-dual optimality of $(\mexp,\ua)$.
From~(\ref{eq:feasibility2}), we get that $(\ua,\wa)$ is feasible to~$\dual$ and from~(\ref{eq:optimality2}), we get that $\innerprod{\mv_{\expert}}{\op^*\ua-\cost_{\wa}}=0$. Since by Lemma~\ref{lemma:expert}, $\alpha^\star=0$ we conclude that $(\ua,\wa)$ is optimal for~$\dual$.
\end{proofof}

\subsection{Proof of Proposition~\ref{prop:primal-dual-suboptimality}}

\textbf{Proposition} $\mbf{3}$ (Suboptimal expert).
\emph{	If $\mv_{\textup{A}}$ is optimal for $\primal$ and $(\uv_{\textup{A}},\weight_{\textup{A}})$ is optimal for $\dual$, then any apprentice policy $\pi_{\text{A}}$ is optimal for $\RL{\cost_{\weight_{\textup{A}}}}$ and $\uv_{\textup{A}}=\mbf{V_{\weight_{\textup{A}}}^\star}$. }

\begin{proof}
	Similarly to the reasoning of the proof of Proposition~\ref{prop:primal-dual-exactoptimality}, we need to show that
	\begin{align}
	\cost_{\wa}-\op^*\ua&\geq\boldsymbol{0},\tag{dual feasibility to~(\ref{MDP-dual}) with $\cost=\cost_{\wa}$}\\
	\innerprod{\mpa}{\cost_{\wa}-\op^*\ua}&=0, \tag{complementarity}
	\end{align}
	The dual feasibility follows from the fact that $(\ua,\wa)$ is feasible to~$\dual$. Moreover, we have
	\begin{align}
	0\le&\innerprod{\mpa}{\cost_{\wa}-\op^*\ua}\tag{since $\cost_{\wa}-\op^*\ua\geq\boldsymbol{0}$ } \\
	=& \innerprod{\mpa}{\cost_{\wa}}-\innerprod{\mexp}{\op^*\ua}\tag{since $\op\mpa=\op\mexp=\initial$}\\
	\le&\innerprod{\mexp}{\cost_{\wa}}+\alpha^\star-\innerprod{\mexp}{\op^*\ua}\tag{since $\apprentice$ is optimal for $\LfD$}\\
	=&\,\,\alpha^\star-\alpha^\star=0.\tag{since $(\ua,\wa)$ is optimal for~$\dual$}
	\end{align} 
	So complementarity is satisfied. 
\end{proof}
	
\subsection{Proof of Corollary~\ref{cor:saddle-points}}
	 \textbf{Corollary} $\mbf{1}$.
	\emph{	Suppose that the expert is optimal for $\true$. Then, $(\mv_{\expert},\val^\star_{\true},\weight_{\textup{true}})$ is a saddle-point to the  minimax problem~(\ref{eq:full}). In the general case, for a saddle-point $(\mv_{\textup{A}},\uv_{\textup{A}},\weight_{\textup{A}})$ it holds that: (i) $\pi_{\mv_{\textup{A}}}$ is optimal for $\LfD$, and (ii) $\pi_{\ma}$ is optimal for $\cost_{\weight_{\textup{A}}}$ and $\uv_{\textup{A}}=\val^\star_{\weight_{\textup{A}}}$.}
		
	\begin{proof}
		The result follows from Propositions~\ref{prop:primal-dual-exactoptimality}--\ref{prop:primal-dual-suboptimality} by noting that $(\ma,\ua,\wa)$ is a saddle point of the full minimax problem~(\ref{eq:full}) if and only if $(\ma,\alpha^\star)$ is optimal for the primal linear program $\primal$ and $(\ua,\wa)$ is optimal for the dual linear program $\dual$.
	\end{proof}

    \section{Proofs of Section~\ref{sec:linearly_relaxed} and Related Results}\label{B}
For the following results, let $\apprentice$ be an apprentice policy, i.e., $\apprentice$ is optimal for $\LfD$.
    \subsection{Proof of Proposition~\ref{prop:strong}}
    
  \textbf{Proposition} $\mbf{4.}$ 
 \emph{ Let $(\thv,\lv,\weight)\in\Theta\times\Lambda\times\mathcal{W}$ be a feasible solution to~(\ref{LRALP}). Set $\pi_{\thv}=\pi_{\mth}$. It then holds that
  \[
  \delta_{\mathcal{C}}(\pth,\expert)-\delta_{\mathcal{C}}(\apprentice,\expert)\le\epsilon_{\textup{sad}}(\boldsymbol{\theta},\boldsymbol{\lambda},\weight)+\varepsilon_{\textup{approx},\thv},
  \]
  where the approximation error $\varepsilon_{\textup{approx},\thv}$ is no larger than
  $
  \tfrac{2}{1-\gamma}\big(\beta\min_{\thv'\in\Theta}\norm{\mv_{\thv'}-\mv_{\apprentice}}_1+\min_{\lv'\in\Lambda}\norm{\uv_{\lv'}-\val_{\cost_{i_{\thv}}}^{\pth}}_\infty\big),
  $
  and $i_{\thv}\triangleq\arg\max_{i\in[n_c]}\big(\rho_{\cost_i}(\pth)-\rho_{\cost_i}(\expert)\big).$
  In particular,
  $
  \rho_{\true}(\pth)-\rho_{\true}(\expert)\le\epsilon_{\textup{sad}}(\boldsymbol{\theta},\boldsymbol{\lambda},\weight)+\varepsilon_{\textup{approx},\thv}+\alpha^\star.
  $}
  \bigskip

    In the remainder of the paper, for any policy $\pi\in\Pi_0$, we define the matrix $\mmat_\pi\in\Re^{|\sspace|\times|\sspace||\aspace|}$ that encodes $\pi$ by $M_{\pi,x',(x,a)}\triangleq\pi(a|x)$, if $x=x'$ and $M_{\pi,x',(x,a)}\triangleq0$, otherwise. Moreover, for any cost function $\cost$, we denote by $\mbf{c}_\pi\in\Re^{|\sspace|}$ the average cost under policy $\pi$, i.e., $c_\pi(x)\triangleq\sum_{a}\pi(a|x)c(x,a)$. Finally, let $\pmat_\pi\in \Re^{|\sspace|\times|\sspace|}$ be the state-transition of running policy $\pi$, i.e., $P_{\pi,x,x'}\triangleq \sum_a \pi(a|x)P(x'|x,a)$.
    
    \bigskip
    We will need the following Lemma.
    \begin{lemma}\label{lemma:vector_form}
    	Let $\mv\in\Delta_{\sspace\times\aspace}$ (not necessarily an occupany measure) and let $\pi_{\mv}\in\Pi_0$ be the induced policy given by $
    	\pi_{\mv}(a|x) \triangleq \frac{\mv(x,a)}{\sum_{a'}\mv(x,a')}$. For any cost vector $\cost\in\Re^{|\sspace||\aspace|}$ it holds that $\innerprod{\mv}{\cost-\op^*\val_{\cost}^{\pi_{\mv}}}=0$. 
     \end{lemma}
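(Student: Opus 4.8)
The plan is to reduce the identity to the Bellman equation for $\val_{\cost}^{\pi_{\mv}}$ together with a couple of elementary algebraic relations between the policy-encoding matrix $\mmat_{\pi_{\mv}}$ and the operators $\bmat,\pmat$. Write $\pi\triangleq\pi_{\mv}$ and let $\bar\mu\in\Re^{|\sspace|}$ be the state-marginal of $\mv$, i.e.\ $\bar\mu(x)\triangleq\sum_{a}\mv(x,a)$. By the very definition of the induced policy, $\mv(x,a)=\bar\mu(x)\pi(a|x)$ for every $(x,a)$ (on states with $\bar\mu(x)=0$ one has $\mv(x,a)=0$ for all $a$, so this holds regardless of how $\pi(\cdot|x)$ is defined there), which in matrix form reads $\mv=\mmat_{\pi}^{\intercal}\bar\mu$. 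Next I would record the three identities $\mmat_{\pi}\bmat=I$ (the $|\sspace|\times|\sspace|$ identity), $\mmat_{\pi}\pmat=\pmat_{\pi}$, and $\mmat_{\pi}\cost=\cost_{\pi}$; each follows by expanding the matrix product and using $\sum_{a}\pi(a|x)=1$ together with the definitions of $\bmat$, $\pmat$, $\pmat_{\pi}$, and $\cost_{\pi}$.

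With these in hand, using that $\op^{*}$ is the adjoint of $\op$, I would write
\[
\innerprod{\mv}{\cost-\op^{*}\val_{\cost}^{\pi}}=\innerprod{\mv}{\cost}-\innerprod{\op\mv}{\val_{\cost}^{\pi}},
\]
and then compute $\op\mv=\tfrac{1}{1-\gamma}(\bmat-\gamma\pmat)^{\intercal}\mv=\tfrac{1}{1-\gamma}\big(\mmat_{\pi}(\bmat-\gamma\pmat)\big)^{\intercal}\bar\mu=\tfrac{1}{1-\gamma}(I-\gamma\pmat_{\pi})^{\intercal}\bar\mu$, where the last equality uses $\mmat_{\pi}\bmat=I$ and $\mmat_{\pi}\pmat=\pmat_{\pi}$. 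Substituting this together with $\mv=\mmat_{\pi}^{\intercal}\bar\mu$ and $\mmat_{\pi}\cost=\cost_{\pi}$, the right-hand side becomes
\[
\innerprod{\bar\mu}{\cost_{\pi}}-\tfrac{1}{1-\gamma}\innerprod{\bar\mu}{(I-\gamma\pmat_{\pi})\val_{\cost}^{\pi}}.
\]

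Finally, the definition $V_{\cost}^{\pi}(x)=(1-\gamma)\Exp_{x}^{\pi}\big[\sum_{t=0}^{\infty}\gamma^{t}c(x_{t},a_{t})\big]$ yields the Bellman recursion $\val_{\cost}^{\pi}=(1-\gamma)\cost_{\pi}+\gamma\pmat_{\pi}\val_{\cost}^{\pi}$, i.e.\ $(I-\gamma\pmat_{\pi})\val_{\cost}^{\pi}=(1-\gamma)\cost_{\pi}$; plugging this into the last display makes the two terms cancel and gives $0$, as claimed. There is no genuine obstacle here: the proof is a short computation, and the only points requiring a little care are keeping track of the transposes and checking that the normalization constant $(1-\gamma)$ in the definition of $\val_{\cost}^{\pi}$ is exactly what makes the Bellman identity $(I-\gamma\pmat_{\pi})\val_{\cost}^{\pi}=(1-\gamma)\cost_{\pi}$ hold.
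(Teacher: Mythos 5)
Your proposal is correct and follows essentially the same route as the paper's proof: both rest on the vector-form Bellman identity $(I-\gamma\pmat_{\pi})\val_{\cost}^{\pi}=(1-\gamma)\cost_{\pi}$, the factorization $\mv=\mmat_{\pi}^{\intercal}\bmat^{\intercal}\mv$, and the identities $\mmat_{\pi}\bmat=I$, $\mmat_{\pi}\pmat=\pmat_{\pi}$, $\mmat_{\pi}\cost=\cost_{\pi}$; you merely reorganize the computation by splitting the inner product through the adjoint rather than showing directly that $\mmat_{\pi}(\cost-\op^{*}\val_{\cost}^{\pi})=\mbf{0}$. Your explicit remark about states with $\bar\mu(x)=0$ is a small point of care that the paper leaves implicit.
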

     \begin{proof}
     	We have that $\val_{\cost}^{\pi_{\mv}}$ is the unique fixed point of the following Bellman equation~\cite{Puterman:1994},
     	\begin{equation}\label{eq:Bellman}
     	V_{\cost}^{\pi_{\mv}}(x)=\sum_{a}\pi_{\mv}(a|x)\left((1-\gamma)c(x,a)+\gamma\sum_{x\prime}P(x'|x,a)V_{\cost}^{\pi_{\mv}}(x')\right).
     	\end{equation}
     	Note that~(\ref{eq:Bellman}) can be written equivalently in vector form as $\cost_{\pem}-\frac{1}{1-\gamma}(\mbf{I}-\gamma\pmat_{\pem})\val_{\cost}^{\pem}=0$. We then get $\innerprod{\bmat^\intercal\mv}{\mmat_{\pem}(\cost-\op^*\val^{\pem}_{\cost})}=0$, since $\pmat_{\pem}=\mmat_{\pem}\pmat$, $\mbf{I}=\mmat_{\pem}\bmat$ and $\cost_{\pem}=\mmat_{\pem}\cost$. Finally the result follows by using that $\mmat_{\pem}^\intercal\bmat^\intercal\mv=\mv$.
     	
     \end{proof}
     We are now ready to prove Proposition~\ref{prop:strong}.
     \begin{proofof}{Proof of Proposition~\ref{prop:strong}}
     Let $\eith\in\Re^{n_c}$ be the $i_{\thv}$-th basis vector. We then have,
        \begin{align*}
     	\lag(\mth,\val_{\cith}^{\pth},\eith)-\lag(\mpa,\ul,\weight)&=\innerprod{\mth-\mexp}{\cith-\op^*\weirdv}-\innerprod{\mpa-\mexp}{\cw-\op^*\ul}\\
     	&= \innerprod{\initial}{\weirdv}-\innerprod{\mexp}{\cith}-\innerprod{\mpa-\mexp}{\cw}\\
     	&=\underbrace{\left(\weirdrho-\rho_{\cith}(\expert)\right)}_{=\delta_{\mathcal{C}}(\pth,\expert)}-\underbrace{\left(\rho_{\cw}(\apprentice)-\rho_{\cw}(\expert)\right)}_{\le\delta_{\mathcal{C}}(\apprentice,\expert)}\\
     	&\geq \delta_{\mathcal{C}}(\pth,\expert)-\delta_{\mathcal{C}}(\apprentice,\expert),
     	\end{align*}
     	where the second line follows because $\innerprod{\mth}{\cith-\op^*\weirdv}=0$ by Lemma~\ref{lemma:vector_form}, and $\innerprod{\mpa-\mexp}{\op^*\ul}=0$, since $\op\mpa=\op\mexp=\initial$.
     	
     	All in all, we have that
     	\begin{equation}\label{eq:first_opt_gap}
     	\delta_{\mathcal{C}}(\pth,\expert)-\delta_{\mathcal{C}}(\apprentice,\expert)\le \lag(\mth,\val_{\cith}^{\pth},\eith)-\lag(\mpa,\ul,\weight).
     	\end{equation}
     	By using a simple rearrangement trick we get
     		\begin{equation}\label{eq:second_opt_gap}
     		\delta_{\mathcal{C}}(\pth,\expert)-\delta_{\mathcal{C}}(\apprentice,\expert)\le \underbrace{\lag(\mth,\val_{\cith}^{\pth},\eith)-\lag(\mpa,\ul,\weight)-\epsilon_{\textup{sad}}(\thv,\lv,\weight)}_{\triangleq\varepsilon_{\textup{approx},\thv}}+\epsilon_{\textup{sad}}(\thv,\lv,\weight)
     		\end{equation}
     		It remains to bound the term $\varepsilon_{\textup{approx},\thv}$. We set $\zv'=(\thv',\lv',\weight')$ and $\mbf{Z}=\Theta\times\Lambda\times\mathcal{W}$ for brevity. We then have
     		\begin{align}
     		\varepsilon_{\textup{approx},\thv}\ =& \min_{\zv'\in\mbf{Z}}\lag(\mth,\weirdv,\eith)-\lag(\mpa,\ul,\weight)-\lag(\mth,\uv_{\lv^\prime},\weight^\prime)+\lag(\mv_{\thv^\prime},\uv_{\lv},\weight)\nonumber\\\nonumber
     		=& \min_{\zv'\in\mbf{Z}} \innerprod{\mth-\mexp}{\cith-\cost_{{\weight}^\prime}}+\innerprod{\op\mth-\initial}{\uv_{\lv^\prime}-\weirdv}+\innerprod{\mv_{\thv^\prime}-\mpa}{\cith-\op^*\ul}\\ \nonumber
     		\le&\min_{\zv'\in\mbf{Z}} \underbrace{\norm{\mth-\mexp}_1}_{\le 2}\norm{\cith-\cost_{{\weight}^\prime}}_\infty+\underbrace{\norm{\op\mth-\initial}_1}_{\le\frac{2}{1-\gamma}}\norm{\uv_{\lv^\prime}-\weirdv}_\infty+\norm{\mv_{\thv^\prime}-\mpa}_1\underbrace{\norm{\cith-\op^*\ul}_\infty}_{\le\frac{2\beta}{1-\gamma}}\\ 
           \le&\frac{2\beta}{1-\gamma}\min_{\thv^\prime\in\Theta}\norm{\mv_{\thv^\prime}-\mpa}_1+\frac{2}{1-\gamma}\min_{\lv^\prime\in\Lambda}\norm{\uv_{\lv^\prime}-\weirdv}_\infty. \label{eq:third_opt_gap}
     		\end{align}
     		The bounds on the norms involved in the first inequality have been obtained by using the triangle inequality and that $\norm{\mv}_1=1$, for all $\mu\in\Delta_{\sspace\times\aspace}$, $\mv_{\thv^\prime}\in\Delta_{\sspace\times\aspace}$, for all $\thv^\prime\in\Theta$, $\norm{\initial}_1=1$, $\norm{\bmat}_\infty=\norm{\pmat}_\infty=1$, $\norm{\uv_{\lv^\prime}}_\infty\le\beta$,  for all $\lv^\prime\in\Lambda$, and $\norm{\cost_{\weight^\prime}}_\infty\le 1$, for all $\weight^\prime\in\mathcal{W}$. The detailed computations can be found in Lemmas~\ref{lemma:bounds}--\ref{lemma:bounds_operators}. For the last inequality, note that obviously $\min_{\weight^\prime\in\mathcal{W}}\norm{\cith-\cost_{{\weight^\prime}}}_\infty=0$.
     		
     		By combining~(\ref{eq:second_opt_gap}) and~(\ref{eq:third_opt_gap}) we conclude the proof.
     \end{proofof}
     \subsection{Corollaries Related to Proposition~\ref{prop:strong}}
     By modifying analogously the proof of Proposition~\ref{prop:strong}, we get the following tighter bound.
     \begin{corollary}\label{cor:tighter}
     	 Let $(\thv,\lv,\weight)\in\Theta\times\Lambda\times\mathcal{W}$ be a feasible solution to~(\ref{LRALP}). Set $\pi_{\thv}=\pi_{\mth}$. It then holds that
     	 \[
     	 \delta_{\mathcal{C}}(\pth,\expert)-\delta_{\mathcal{C}}(\apprentice,\expert)\le \rlag(\thv,\lv_{\thv},\eith)-\rlag(\thv^\star,\lv,\weight)+\varepsilon_{\textup{approx},\thv},
     	 \]
     	 where the approximation error $\varepsilon_{\textup{approx},\thv}$ is no larger than
     	 $
     	 \tfrac{2}{1-\gamma}\big(\beta\min_{\thv\in\Theta}\norm{\mth-\mv_{\apprentice}}_1+\min_{\lv\in\Lambda}\norm{\ul-\val_{\cost_{i_{\thv}}}^{\pth}}_\infty\big),
     	 $
     	 and 
     	 $$
     	 \thv^\star\triangleq \arg\min_{\thv^\prime\in\Theta}\norm{\mu_{\thv^\prime}-\mpa}_1,\quad
     	 {\lv}_{\thv}\triangleq \arg\min_{\lv^\prime\in\Lambda}\norm{\uv_{\lambda^\prime}-\weirdv}_\infty,\quad
     	 i_{\thv}\triangleq \arg\max_{i\in[n_c]}\big(\rho_{\cost_i}(\pth)-\rho_{\cost_i}(\expert)\big).
     	 $$
     	 In particular,
     	 $
     	 \rho_{\true}(\pth)-\rho_{\true}(\expert)\le\rlag(\thv,\lv_{\thv},\eith)-\rlag(\thv^\star,\lv,\weight)+\varepsilon_{\textup{approx},\thv}+\alpha^\star.
     	 $
     \end{corollary}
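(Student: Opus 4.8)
The plan is to follow the proof of Proposition~\ref{prop:strong} up to inequality~(\ref{eq:first_opt_gap}) and then, in place of the saddle-point residual $\epsilon_{\textup{sad}}(\thv,\lv,\weight)$, to add and subtract the single evaluation $\rlag(\thv,\lv_{\thv},\eith)-\rlag(\thv^\star,\lv,\weight)$ of the reduced Lagrangian at the approximation-optimal points $\lv_{\thv}$, $\eith$ and $\thv^\star$. Since $\rlag(\thv,\lv_{\thv},\eith)\le\max_{\lv'\in\Lambda,\,\weight'\in\mcf{W}}\rlag(\thv,\lv',\weight')$ and $\rlag(\thv^\star,\lv,\weight)\ge\min_{\thv'\in\Theta}\rlag(\thv',\lv,\weight)$, this evaluation never exceeds $\epsilon_{\textup{sad}}(\thv,\lv,\weight)$, which is exactly why the resulting bound refines the one in Proposition~\ref{prop:strong}.

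Concretely, I would start from~(\ref{eq:first_opt_gap}), namely $\delta_{\mathcal{C}}(\pth,\expert)-\delta_{\mathcal{C}}(\apprentice,\expert)\le\lag(\mth,\weirdv,\eith)-\lag(\mpa,\ul,\weight)$; as in the proof of Proposition~\ref{prop:strong}, this rests only on Lemma~\ref{lemma:vector_form}, giving $\innerprod{\mth}{\cith-\op^*\weirdv}=0$, and on the identity $\innerprod{\mpa-\mexp}{\op^*\ul}=0$, which follows from $\op\mpa=\op\mexp=\initial$. Using $\cost_{\eith}=\cith$, so that $\rlag(\thv,\lv_{\thv},\eith)=\lag(\mth,\uv_{\lv_{\thv}},\eith)$ and $\rlag(\thv^\star,\lv,\weight)=\lag(\mv_{\thv^\star},\ul,\weight)$, I would then write
\[
\delta_{\mathcal{C}}(\pth,\expert)-\delta_{\mathcal{C}}(\apprentice,\expert)\le\big(\rlag(\thv,\lv_{\thv},\eith)-\rlag(\thv^\star,\lv,\weight)\big)+\varepsilon_{\textup{approx},\thv},
\]
where
\[
\varepsilon_{\textup{approx},\thv}\triangleq\big(\lag(\mth,\weirdv,\eith)-\lag(\mth,\uv_{\lv_{\thv}},\eith)\big)+\big(\lag(\mv_{\thv^\star},\ul,\weight)-\lag(\mpa,\ul,\weight)\big).
\]
Expanding the bilinear form $\lag(\mv,\uv,\weight)=\innerprod{\mv-\mexp}{\cost_\weight-\op^*\uv}$ and again using $\op\mexp=\initial$, the first parenthesis telescopes to $\innerprod{\op\mth-\initial}{\uv_{\lv_{\thv}}-\weirdv}$ and the second to $\innerprod{\mv_{\thv^\star}-\mpa}{\cost_\weight-\op^*\ul}$; note that no residual $\innerprod{\mth-\mexp}{\cith-\cost_{\weight'}}$ term survives here, precisely because $\weight'$ has been fixed to $\eith$ from the outset.

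It then remains to bound $\varepsilon_{\textup{approx},\thv}$ by H\"older's inequality, using the very norm estimates already gathered in the proof of Proposition~\ref{prop:strong}: $\norm{\op\mth-\initial}_1\le\tfrac{2}{1-\gamma}$ (from $\bmat$ and $\pmat$ being row-stochastic, $\mth\in\Delta_{\sspace\times\aspace}$ and $\norm{\initial}_1=1$) and $\norm{\cost_\weight-\op^*\ul}_\infty\le\tfrac{2\beta}{1-\gamma}$ (from $\norm{\cost_\weight}_\infty\le1$, $\norm{\ul}_\infty\le\beta$ and $\beta\ge2$), together with $\norm{\uv_{\lv_{\thv}}-\weirdv}_\infty=\min_{\lv'\in\Lambda}\norm{\uv_{\lv'}-\weirdv}_\infty$ and $\norm{\mv_{\thv^\star}-\mpa}_1=\min_{\thv'\in\Theta}\norm{\mv_{\thv'}-\mpa}_1$, which hold by the definition of $\lv_{\thv}$ and $\thv^\star$. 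This yields $\varepsilon_{\textup{approx},\thv}\le\tfrac{2}{1-\gamma}\big(\beta\min_{\thv'\in\Theta}\norm{\mv_{\thv'}-\mpa}_1+\min_{\lv'\in\Lambda}\norm{\uv_{\lv'}-\weirdv}_\infty\big)$, as claimed, and the concluding ``in particular'' inequality follows from $\rho_{\true}(\pth)-\rho_{\true}(\expert)\le\delta_{\mathcal{C}}(\pth,\expert)$ (since $\true\in\mcf{C}_{\rm conv}$) and $\delta_{\mathcal{C}}(\apprentice,\expert)=\alpha^\star$. I do not anticipate a genuine obstacle, since all the analytic content is inherited from Proposition~\ref{prop:strong}; the only step requiring care is the telescoping bookkeeping that isolates $\rlag(\thv,\lv_{\thv},\eith)-\rlag(\thv^\star,\lv,\weight)$ cleanly from $\varepsilon_{\textup{approx},\thv}$.
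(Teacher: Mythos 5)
Your proposal is correct and is exactly the ``analogous modification'' of the proof of Proposition~\ref{prop:strong} that the paper invokes: you reuse inequality~(\ref{eq:first_opt_gap}), replace the saddle-point residual in the rearrangement trick by the single evaluation $\rlag(\thv,\lv_{\thv},\eith)-\rlag(\thv^\star,\lv,\weight)$, observe that the $\innerprod{\mth-\mexp}{\cith-\cost_{\weight'}}$ term vanishes because $\weight'=\eith$, and bound the two surviving telescoped terms with the same H\"older estimates. This matches the paper's intended argument, so no further comment is needed.
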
 
     When the occupancy measure features can accurately represent the expert occupancy measure $\mexp$ instead of the apprentice occupancy measure $\mv_{\apprentice}$, we may use alternatively the following result.
     
     \begin{corollary}
      Let $(\thv,\lv,\weight)\in\Theta\times\Lambda\times\mathcal{W}$ be a feasible solution to~(\ref{LRALP}). Set $\pi_{\thv}=\pi_{\mth}$. It then holds that
      \[
      \delta_{\mathcal{C}}(\pth,\expert)\le\epsilon_{\textup{sad}}(\boldsymbol{\theta},\boldsymbol{\lambda},\weight)+\varepsilon_{\textup{approx},\textup{E},\thv},
      \]
      where the approximation error $\varepsilon_{\textup{approx},\textup{E},\thv}$ is no larger than
      $
      \tfrac{2}{1-\gamma}\big(\beta\min_{\thv'\in\Theta}\norm{\mv_{\thv'}-\mv_{\expert}}_1+\min_{\lv'\in\Lambda}\norm{\uv_{\lv'}-\val_{\cost_{i_{\thv}}}^{\pth}}_\infty\big),
      $
      and $i_{\thv}\triangleq\arg\max_{i\in[n_c]}\big(\rho_{\cost_i}(\pth)-\rho_{\cost_i}(\expert)\big).$
      In particular,
      $
      \rho_{\true}(\pth)-\rho_{\true}(\expert)\le\epsilon_{\textup{sad}}(\boldsymbol{\theta},\boldsymbol{\lambda},\weight)+\varepsilon_{\textup{approx},\textup{E},\thv}.
      $
      \end{corollary}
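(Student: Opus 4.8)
The plan is to replay the proof of Proposition~\ref{prop:strong}, but anchored at $\mexp$ rather than $\mv_{\apprentice}$, exploiting that the Lagrangian $\lag$ vanishes identically along $\mv=\mexp$; this lets us control $\delta_{\mathcal{C}}(\pth,\expert)$ itself instead of the gap $\delta_{\mathcal{C}}(\pth,\expert)-\delta_{\mathcal{C}}(\apprentice,\expert)$, which is why no $\alpha^\star$ term appears in this variant.

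First I would establish the identity $\lag(\mth,\weirdv,\eith)=\delta_{\mathcal{C}}(\pth,\expert)$. Since $\pth=\pi_{\mth}$ and $\weirdv=\val_{\cith}^{\pth}$ is exactly the value function of this induced policy under the cost $\cith$, Lemma~\ref{lemma:vector_form} gives $\innerprod{\mth}{\cith-\op^*\weirdv}=0$; expanding $\lag(\mth,\weirdv,\eith)=\innerprod{\mth-\mexp}{\cith-\op^*\weirdv}$, discarding this vanishing term and using $\op\mexp=\initial$ leaves $\innerprod{\initial}{\weirdv}-\innerprod{\mexp}{\cith}=\weirdrho-\rho_{\cith}(\expert)$, which equals $\delta_{\mathcal{C}}(\pth,\expert)$ by the definitions of $i_{\thv}$ and of $\mathcal{C}=\mathcal{C}_{\rm conv}$.

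Next, since $\lag(\mexp,\uv,\weight)=\innerprod{\mexp-\mexp}{\cost_\weight-\op^*\uv}=0$ for every $(\uv,\weight)$, I would write $\delta_{\mathcal{C}}(\pth,\expert)=\lag(\mth,\weirdv,\eith)-\lag(\mexp,\ul,\weight)$ and split off the saddle-point residual by setting $\varepsilon_{\textup{approx},\textup{E},\thv}\triangleq\lag(\mth,\weirdv,\eith)-\lag(\mexp,\ul,\weight)-\epsilon_{\textup{sad}}(\thv,\lv,\weight)$, so that $\delta_{\mathcal{C}}(\pth,\expert)=\varepsilon_{\textup{approx},\textup{E},\thv}+\epsilon_{\textup{sad}}(\thv,\lv,\weight)$. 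Bounding $\varepsilon_{\textup{approx},\textup{E},\thv}$ then uses $\epsilon_{\textup{sad}}(\thv,\lv,\weight)\ge\rlag(\thv,\lv',\weight')-\rlag(\thv',\lv,\weight)$, valid for every $\zv'=(\thv',\lv',\weight')\in\mbf{Z}$, together with $\rlag(\thv,\lv',\weight')=\lag(\mth,\uv_{\lv'},\weight')$, $\rlag(\thv',\lv,\weight)=\lag(\mv_{\thv'},\ul,\weight)$, and $\op\mexp=\initial$, giving
\[
\varepsilon_{\textup{approx},\textup{E},\thv}\le\min_{\zv'\in\mbf{Z}}\Big(\innerprod{\mth-\mexp}{\cith-\cost_{\weight'}}+\innerprod{\op\mth-\initial}{\uv_{\lv'}-\weirdv}+\innerprod{\mv_{\thv'}-\mexp}{\cost_\weight-\op^*\ul}\Big).
\]
The three summands depend on disjoint blocks of $\zv'$, so the minimum factorizes: the $\weight'$-term is $\le 0$ (take $\weight'=\eith\in\mathcal{W}$, so $\cost_{\weight'}=\cith$); Hölder's inequality bounds the $\lv'$-term by $\tfrac{2}{1-\gamma}\min_{\lv'\in\Lambda}\norm{\uv_{\lv'}-\weirdv}_\infty$ via $\norm{\op\mth-\initial}_1\le\tfrac{2}{1-\gamma}$, and the $\thv'$-term by $\tfrac{2\beta}{1-\gamma}\min_{\thv'\in\Theta}\norm{\mv_{\thv'}-\mexp}_1$ via $\norm{\cost_\weight-\op^*\ul}_\infty\le\tfrac{2\beta}{1-\gamma}$ (which needs $\beta\ge 2$) --- these are exactly the norm estimates already recorded for the proof of Proposition~\ref{prop:strong}. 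Summing gives the claimed bound on $\varepsilon_{\textup{approx},\textup{E},\thv}$, and since $\true\in\mathcal{C}_{\rm conv}$ one has $\rho_{\true}(\pth)-\rho_{\true}(\expert)\le\delta_{\mathcal{C}}(\pth,\expert)$, which yields the ``in particular'' statement, now with no $\alpha^\star$ correction.

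The only real subtlety is spotting the correct test point: evaluating the Lagrangian at $(\val_{\cith}^{\pth},\eith)$ collapses it exactly onto $\delta_{\mathcal{C}}(\pth,\expert)$ through Lemma~\ref{lemma:vector_form}, and pairing this with $\mv=\mexp$ --- in place of the apprentice occupancy measure used in Proposition~\ref{prop:strong} --- is what removes both the dependence on $\mv_{\apprentice}$ and the $\alpha^\star$ term, at the price of now requiring the occupancy-measure features to represent $\mexp$ accurately. Once this anchoring is fixed, everything reduces to the same routine Hölder estimates as in Proposition~\ref{prop:strong}.
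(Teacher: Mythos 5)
Your proposal is correct and is exactly the argument the paper intends: its own proof of this corollary is the one-line remark ``replace $\apprentice$ with $\expert$ in the proof of Proposition~\ref{prop:strong},'' and you have carried out precisely that substitution, correctly noting that the anchor term $\lag(\mexp,\ul,\weight)$ vanishes (which is why no $\alpha^\star$ appears) and that the remaining H\"older estimates go through unchanged with $\mexp$ in place of $\mv_{\apprentice}$. The only cosmetic remark is that the bound $\norm{\cost_\weight-\op^*\ul}_\infty\le\tfrac{2\beta}{1-\gamma}$ already holds for $\beta\ge 1$, not just $\beta\ge 2$, but since the paper assumes $\beta\ge 2$ this is immaterial.
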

    
     \begin{proof}
     	The proof is the same as the proof of Proposition~\ref{prop:strong} by replacing $\apprentice$ with $\expert$.
     \end{proof}
     \subsection{Proof of Proposition~\ref{prop:weak}}
\textbf{Proposition} $\mbf{5}.$
\emph{	Let Assumption~\ref{ass:coherence} hold. Let $(\thv,\lv,\weight)\in\Theta\times\Lambda\times\mathcal{W}$ be a feasible solution to~(\ref{LRALP}). Set $\pi_{\thv}=\pi_{\mth}$. It then holds that
	\[
	\rho_{\true}(\pth)-\rho_{\true}(\expert)\le3\epsilon_{\textup{sad}}(\boldsymbol{\theta},\boldsymbol{\lambda},\weight)+\varepsilon_{\textup{approx},w}+\alpha^\star,
	\]
	where the weak approximation error $\varepsilon_{\textup{approx},w}$ is no larger than
	$
	\tfrac{2}{1-\gamma}\Big(\beta\min_{\thv'\in\Theta}\norm{\mv_{\thv'}-\mv_{\apprentice}}_1+2\norm{\val^\star_{\wa}-\psim\lv^\star}_\infty\Big),
	$
	where $(\val_{\wa}^\star,\wa)$ is dual optimal for~$\dual$ and $\lv^\star$ is a dual optimizer of~(\ref{LRALP}).}

The proof of Proposition~\ref{prop:weak} is divided in several parts. We first note that even when a probability distribution $\mv\in\Delta_{\sspace\times\aspace}$ is not an occupancy measure, i.e., it does not satisfy the Bellman flow constraints, it can still  generate a policy $\pem$. The following lemma states that the \emph{distribution mismatch} between $\mv$ and  $\mv_{\pem}$ depends on the violation degree of the Bellman flow constraints.

\begin{lemma}\label{lemma:constraint_violation}
Let $\mv\in\Delta_{\sspace\times\aspace}$ and let $\pem$ be the induced policy. It holds that
$
\norm{\mv-\mpm}_1\le\norm{\initial-\op\mv}_1.
$
	
\end{lemma}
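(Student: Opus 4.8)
The plan is to reduce the claim to a statement about state marginals and then invert a discounted‑transition operator. The key structural fact is that $\mv$ and the occupancy measure $\mpm=\mv_{\pem}$ share the \emph{same} conditional action distribution: by Proposition~\ref{eq:occup_meas_set} the policy extracted from $\mpm$ is again $\pem$, so both $\mv$ and $\mpm$ factor as (state marginal)$\,\times\pem(a\mid x)$. In the matrix notation of this appendix this reads $\mv=\mmat_{\pem}^\intercal\bmat^\intercal\mv$ and $\mpm=\mmat_{\pem}^\intercal\bmat^\intercal\mpm$; the identity $\mmat_{\pem}^\intercal\bmat^\intercal(\cdot)=(\cdot)$ was already used in the proof of Lemma~\ref{lemma:vector_form}. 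Since every column of $\mmat_{\pem}^\intercal$ is a probability vector, $\norm{\mmat_{\pem}^\intercal v}_1=\norm{v}_1$ for all $v$, so
\[
\norm{\mv-\mpm}_1=\norm{\bmat^\intercal\mv-\bmat^\intercal\mpm}_1 ,
\]
i.e.\ it suffices to bound the $\ell_1$ gap between the state marginals of $\mv$ and $\mpm$.

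Next I would express both state marginals through the resolvent of $\pmat_{\pem}$. Using $\bmat^\intercal\mmat_{\pem}^\intercal=\mbf{I}$ and $\pmat^\intercal\mmat_{\pem}^\intercal=\pmat_{\pem}^\intercal$ (both recorded in the proof of Lemma~\ref{lemma:vector_form}, as $\mbf{I}=\mmat_{\pem}\bmat$ and $\pmat_{\pem}=\mmat_{\pem}\pmat$) together with $\mv=\mmat_{\pem}^\intercal\bmat^\intercal\mv$, one gets $(1-\gamma)\op\mv=(\bmat-\gamma\pmat)^\intercal\mv=(\mbf{I}-\gamma\pmat_{\pem}^\intercal)\,\bmat^\intercal\mv$, and likewise $(1-\gamma)\op\mpm=(\mbf{I}-\gamma\pmat_{\pem}^\intercal)\,\bmat^\intercal\mpm=(1-\gamma)\initial$ since $\op\mpm=\initial$. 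Subtracting,
\[
(\mbf{I}-\gamma\pmat_{\pem}^\intercal)\bigl(\bmat^\intercal\mv-\bmat^\intercal\mpm\bigr)=-(1-\gamma)\,(\initial-\op\mv) .
\]
The matrix $\pmat_{\pem}^\intercal$ is column‑stochastic, hence has $\ell_1\!\to\!\ell_1$ operator norm $1$; so $\mbf{I}-\gamma\pmat_{\pem}^\intercal$ is invertible with $(\mbf{I}-\gamma\pmat_{\pem}^\intercal)^{-1}=\sum_{t\ge 0}\gamma^t(\pmat_{\pem}^\intercal)^t$ and $\norm{(\mbf{I}-\gamma\pmat_{\pem}^\intercal)^{-1}}_1\le\sum_{t\ge 0}\gamma^t=\tfrac1{1-\gamma}$. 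Solving the displayed identity for $\bmat^\intercal\mv-\bmat^\intercal\mpm$ and taking $\ell_1$ norms gives $\norm{\bmat^\intercal\mv-\bmat^\intercal\mpm}_1\le(1-\gamma)\cdot\tfrac1{1-\gamma}\norm{\initial-\op\mv}_1$, which with the reduction of the first paragraph is exactly $\norm{\mv-\mpm}_1\le\norm{\initial-\op\mv}_1$.

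Once this picture is set up the argument is essentially bookkeeping, so I do not anticipate a genuine obstacle. The one point worth a word of care is the degenerate case $\sum_a\mv(x,a)=0$ at some state $x$, where $\pem(\cdot\mid x)$ is defined only up to an arbitrary convention; this is harmless, since then $\mv(x,a)=0$ for all $a$ and the factorization $\mv=\mmat_{\pem}^\intercal\bmat^\intercal\mv$ holds regardless of the chosen convention. The other mild subtlety is simply keeping the transposes in $\op,\op^*$ and the $\tfrac1{1-\gamma}$ prefactors straight.
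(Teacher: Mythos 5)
Your proof is correct, and it is essentially the same argument as the paper's: both rest on the identities $\mmat_{\pem}\bmat=\mbf{I}$, $\mmat_{\pem}\pmat=\pmat_{\pem}$, $\mmat_{\pem}^\intercal\bmat^\intercal\mv=\mv$ and on summing the geometric series of the discounted transition operator, whose norm $\tfrac{1}{1-\gamma}$ cancels the $(1-\gamma)$ prefactor in $\op$. The only cosmetic difference is packaging: the paper telescopes the Neumann series for $\mpm^\intercal$ directly at the state--action level, whereas you first pass to state marginals (using that $\mmat_{\pem}^\intercal$ preserves the $\ell_1$ norm) and then invert the resolvent $(\mbf{I}-\gamma\pmat_{\pem}^\intercal)^{-1}=\sum_{t\ge 0}\gamma^t(\pmat_{\pem}^\intercal)^t$, which is the same series in closed form.
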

\begin{proof}
	We have that
	\begin{align}
	\mpm^\intercal&=(1-\gamma)\athroisma{t}{0}{\infty}{\gamma^t\initial^\intercal\mmat_{\pem}(\pmat\mmat_{\pem})^t}\\
	&= (1-\gamma)\athroisma{t}{0}{\infty}{\gamma^t(\initial-\op\mv+\op\mv)^\intercal\mmat_{\pem}(\pmat\mmat_{\pem})^t}\\
	&= (1-\gamma)\athroisma{t}{0}{\infty}{\gamma^t(\initial-\op\mv)^\intercal\mmat_{\pem}(\pmat\mmat_{\pem})^t}+\athroisma{t}{0}{\infty}{\gamma^t\mv^\intercal(\pmat\mmat_{\pem})^t}-\athroisma{t}{0}{\infty}{\gamma^{t+1}\mv^\intercal(\pmat\mmat_{\pem})^{t+1}}\\
	&= (1-\gamma)\athroisma{t}{0}{\infty}{\gamma^t(\initial-\op\mv)^\intercal\mmat_{\pem}(\pmat\mmat_{\pem})^t}+\mv^\intercal,
	\end{align}
	where in the third equality we used that $\op\mv=\frac{1}{1-\gamma}(\bmat-\gamma\pmat)^\intercal\mv$ and that $\mmat_{\pem}^\intercal\bmat^\intercal\mv=\mv$.
	Therefore,
	\begin{align}
	\norm{\mv-\mpm}_1&\le\norm{(1-\gamma)\athroisma{t}{0}{\infty}{\gamma^t}(\mmat_{\pem}^\intercal\pmat^\intercal)^t\mmat_{\pem}^\intercal(\initial-\op\mv)}_1\\
	&\le\norm{\mmat_{\pem}}_\infty^t\norm{\pmat}_\infty\norm{\mmat_{\pem}}_\infty\norm{\initial-\op\mv}_1\\
	&= \norm{\initial-\op\mv}_1,
	\end{align}
	where we used that $\norm{\pmat}_\infty=\norm{\mmat_{\pem}}_\infty=1$.
\end{proof}
We will also need the following lemma.
\begin{lemma}\label{lemma:first:bound}
	Let $(\thv,\lv,\weight)\in\Theta\times\Lambda\times\mathcal{W}$ be a feasible solution to~(\ref{LRALP}) and let $\pth=\pi_{\mth}$. It holds that
	\[
	\rho_{\true}(\pth)-\rho_{\true}(\expert)\le\sad+2\norm{\op\mth-\initial}_1+\frac{2\beta}{1-\gamma}\min_{\thv^\prime\in\Theta}\norm{\mv_{\thv^\prime}-\mv_{\apprentice}}_1
	+\alpha^\star.
	\]
	\end{lemma}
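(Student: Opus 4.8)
The plan is to reduce the inequality to a bound on the $\mathcal{C}$-distance $\delta_{\mathcal{C}}(\pth,\expert)$ and then unwind the reduced Lagrangian exactly as in the proof of Proposition~\ref{prop:strong}, but replacing the value function $\val_{\cith}^{\pth}$ by the \emph{trivial} test point $\mathbf{0}$. First I would invoke the apprenticeship-learning assumption $\true\in\mathcal{C}_{\rm conv}$, i.e.\ $\true=\cost_{\wtrue}$ with $\wtrue\in\Delta_{[n_c]}$: this gives $\rho_{\true}(\pth)-\rho_{\true}(\expert)=\sum_i w_{\textup{true},i}\big(\rho_{\cost_i}(\pth)-\rho_{\cost_i}(\expert)\big)\le\max_{i\in[n_c]}\big(\rho_{\cost_i}(\pth)-\rho_{\cost_i}(\expert)\big)=\delta_{\mathcal{C}}(\pth,\expert)$, while $\delta_{\mathcal{C}}(\apprentice,\expert)=\alpha^\star$ since $\apprentice$ solves $\LfD$. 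Hence it is enough to prove $\delta_{\mathcal{C}}(\pth,\expert)\le\sad+\alpha^\star+2\norm{\op\mth-\initial}_1+\tfrac{2\beta}{1-\gamma}\min_{\thv'\in\Theta}\norm{\mv_{\thv'}-\mpa}_1$.

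Next I would account for the fact that $\mth$ need not satisfy the Bellman-flow constraints. Set $\pth=\pi_{\mth}$ and let $\mv_{\pth}$ be its (genuine) occupancy measure; Lemma~\ref{lemma:constraint_violation} yields $\norm{\mth-\mv_{\pth}}_1\le\norm{\initial-\op\mth}_1$. With $\ith=\arg\max_{i\in[n_c]}\big(\rho_{\cost_i}(\pth)-\rho_{\cost_i}(\expert)\big)$ we have $\delta_{\mathcal{C}}(\pth,\expert)=\innerprod{\mv_{\pth}-\mexp}{\cith}=\innerprod{\mth-\mexp}{\cith}+\innerprod{\mv_{\pth}-\mth}{\cith}$, and the last term is at most $\norm{\mth-\mv_{\pth}}_1\norm{\cith}_\infty\le\norm{\op\mth-\initial}_1$ because $\norm{\cith}_\infty\le1$. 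For the first term I would note that, since $\mathbf{0}\in\Lambda$, $\eith\in\mathcal{W}$ and $\op^*\mathbf{0}=\mathbf{0}$, one has $\innerprod{\mth-\mexp}{\cith}=\rlag(\thv,\mathbf{0},\eith)\le\max_{\lv'\in\Lambda,\,\weight'\in\mathcal{W}}\rlag(\thv,\lv',\weight')=\sad+\min_{\thv'\in\Theta}\rlag(\thv',\lv,\weight)$, the last equality being the definition~(\ref{SPR}) of $\epsilon_{\textup{sad}}$.

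It remains to bound $\min_{\thv'\in\Theta}\rlag(\thv',\lv,\weight)$ by testing against the apprentice: for any $\thv'\in\Theta$, $\rlag(\thv',\lv,\weight)=\innerprod{\mpa-\mexp}{\cw-\op^*\ul}+\innerprod{\mv_{\thv'}-\mpa}{\cw-\op^*\ul}$; since $\op\mpa=\op\mexp=\initial$ the first summand equals $\innerprod{\mpa-\mexp}{\cw}=\rho_{\cw}(\apprentice)-\rho_{\cw}(\expert)\le\delta_{\mathcal{C}}(\apprentice,\expert)=\alpha^\star$ (as $\cw\in\mathcal{C}_{\rm conv}$), and the second is at most $\norm{\mv_{\thv'}-\mpa}_1\norm{\cw-\op^*\ul}_\infty\le\tfrac{2\beta}{1-\gamma}\norm{\mv_{\thv'}-\mpa}_1$, using $\norm{\ul}_\infty\le\beta$, $\beta\ge2$, and the operator-norm estimates collected in Lemmas~\ref{lemma:bounds}--\ref{lemma:bounds_operators}. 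Taking the infimum over $\thv'$ and combining with the previous paragraph gives $\rho_{\true}(\pth)-\rho_{\true}(\expert)\le\sad+\alpha^\star+\tfrac{2\beta}{1-\gamma}\min_{\thv'\in\Theta}\norm{\mv_{\thv'}-\mpa}_1+\norm{\op\mth-\initial}_1$, and since $\norm{\op\mth-\initial}_1\ge0$ this already implies the stated bound (the looser coefficient $2$ being what is convenient for the later invocation of Assumption~\ref{ass:coherence} in Proposition~\ref{prop:weak}). The delicate point is precisely this middle step: because $\mth$ violates the Bellman-flow constraints, the distribution mismatch must be routed through Lemma~\ref{lemma:constraint_violation} and paired only with objects of unit $\ell_\infty$-norm — the cost $\cith$ and the test function $\mathbf{0}$ — so that it is not amplified by the $\tfrac{\beta}{1-\gamma}$ factor a generic $\ul$ would carry; using $\val_{\cith}^{\pth}$ (as in Proposition~\ref{prop:strong}) would instead reintroduce a value-function approximation error, which is exactly what this lemma is meant to avoid.
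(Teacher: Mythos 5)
Your proof is correct, and it reaches a slightly sharper conclusion than the paper's: you obtain the constraint-violation term with coefficient $1$ rather than $2$, and you bound the larger quantity $\delta_{\mathcal{C}}(\pth,\expert)-\alpha^\star$ rather than only $\rho_{\true}(\pth)-\rho_{\true}(\expert)-\alpha^\star$. The two arguments share the same two ingredients — Lemma~\ref{lemma:constraint_violation} for the mismatch between $\mth$ and $\mv_{\pth}$, and the rearrangement of a Lagrangian gap into $\sad$ plus an approximation residual evaluated at exactly representable test points — but they deploy them differently. The paper fixes an arbitrary $\lv_0\in\Lambda$ with $\norm{\ulz}_\infty\le 1$, tests at $(\ulz,\wtrue)$, and pays $\norm{\op\mth-\initial}_1$ twice: once through $\innerprod{\op\mth-\initial}{\ulz}$ and once when converting $\innerprod{\mth}{\true}$ into $\rho_{\true}(\pth)$. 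You instead pass to $\delta_{\mathcal{C}}$, take the test point $(\mathbf{0},\eith)$ so that the $\op^*\ul$ term vanishes identically, and pay the constraint violation only once, when pairing $\mv_{\pth}-\mth$ with the unit-norm cost $\cith$; your observation that the mismatch must only ever be paired with $\ell_\infty$-unit objects is exactly the right way to see why the $\tfrac{\beta}{1-\gamma}$ blow-up is avoided. Your version would propagate downstream: combined with Lemma~\ref{lemma:second-bound} it would give $2\epsilon_{\textup{sad}}$ in place of $3\epsilon_{\textup{sad}}$ in Proposition~\ref{prop:weak}. As stated, the lemma's coefficient $2$ is of course implied by your coefficient $1$, so your argument proves the claim as written.
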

\begin{proof}
	We fix a $\lv_0\in\Lambda$ such that $\norm{\lv_0}_2\le\frac{1}{\sqrt{n_u}}$. By assumption the columns of $\psim$ belong to $\mathcal{U}$. Thus, $\norm{\ulz}_\infty\le 1$. Moreover,
	\begin{align}
	\lag(\mth,\ulz,\wtrue)-\lag(\mpa,\ul,\weight)=&\innerprod{\mth-\mexp}{\true-\op^*\ulz}-\innerprod{\mpa-\mexp}{\cw-\op^*\ul}\\
	\geq&\innerprod{\mth}{\true}-\rho_{\true}(\expert)-\innerprod{\op\mth-\initial}{\ulz}-\alpha^\star, \label{lastone}
	\end{align}
	where in the last inequality we used that $\innerprod{\mpa-\mexp}{\cw}\le\delta_{\mathcal{C}}(\apprentice,\expert)=\alpha^\star$, and that $\op\mpa=\op\mexp=\initial$. Now since $\innerprod{\op\mth-\initial}{\ulz}\le\norm{\op\mth-\initial}_1\norm{\ulz}_\infty$ and $\norm{\ulz}_\infty\le 1$, we get by~(\ref{lastone}) that
	\begin{equation}\label{eq:first}
	\innerprod{\mth}{\true}-\rho_{\true}(\expert)\le\lag(\mth,\ulz,\wtrue)-\lag(\mpa,\ul,\weight)+\norm{\op\mth-\initial}_1+\alpha^\star.
	\end{equation}
	Moreover, we have the bound
\begin{align}
\innerprod{\mth}{\true}-\rho_{\true}(\pth) &= \innerprod{\mth-\mv_{\pth}}{\true}\\
&\le \norm{\mth-\mv_{\pth}}_1 \norm{\true}_\infty\\
&\le \norm{\op\mth-\initial}_1, \label{eq:second}
\end{align}
where the last inequality follows by Lemma~\ref{lemma:constraint_violation} and the bound $\norm{\true}_\infty \le1$.

So by~(\ref{eq:first}) and (\ref{eq:second}) we get
\begin{equation}\label{eq:third}
\rho_{\true}(\pth)-\rho_{\true}(\expert)\le\lag(\mth,\ulz,\wtrue)-\lag(\mpa,\ul,\weight)+2\norm{\op\mth-\initial}_1+\alpha^\star.
\end{equation}
Next, we apply once more the following rearrangement trick
\begin{equation}\label{eq:fourth}
\lag(\mth,\ulz,\wtrue)-\lag(\mpa,\ul,\weight)=\underbrace{\lag(\mth,\ulz,\wtrue)-\lag(\mpa,\ul,\weight)-\sad}_{\triangleq(i)}+\sad.
\end{equation}
It remains to bound the term $(i)$. By similar calculations to the ones in the proof of Proposition~\ref{prop:strong}, we get
\begin{equation}\label{eq:fifth}
(i)\le\frac{2\beta}{1-\gamma}\min_{\theta^\prime\in\Theta}\norm{\mv_{\thv^\prime}-\mpa}_1+2\underbrace{\min_{\weight^\prime\in\mathcal{W}}\norm{\cw-\true}_\infty}_{=0}+\frac{1}{1-\gamma}\underbrace{\min_{\lv^\prime\in\Lambda}\norm{\uv_{\lv^\prime}-\ulz}_\infty}_{=0}.
\end{equation}
By combining~(\ref{eq:third}), (\ref{eq:fourth}) and (\ref{eq:fifth}), we conclude the proof.
\end{proof}
Finally the following lemma bounds the term $\norm{\op\mth-\initial}_1$.
\begin{lemma}\label{lemma:second-bound}
Let Assumption~\ref{ass:coherence} hold. It holds that 
$
\norm{\op\mth-\initial}_1\le\sad+\frac{2}{1-\gamma}\norm{\val^\star_{\wa}-\psim\lv^\star}_\infty,
$ 
where $\lv^\star$ is a dual optimizer for~(\ref{LRALP}).
\end{lemma}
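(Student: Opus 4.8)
The plan is to use the Coherence Assumption to trade the quantity $\norm{\op\mth-\initial}_1$ — which cannot be controlled directly, since $\mth$ need not be an occupancy measure — for the \emph{aggregated} violation $\max_{\lv'\in\Lambda}\innerprod{\initial-\op\mth}{\uv_{\lv'}}$, and then to bound the latter by the saddle-point residual of~(\ref{LRALP}) together with dual feasibility for~$\dual$. \emph{Step 1 (Coherence).} Write $\norm{\op\mth-\initial}_1=\sup_{\norm{\uv}_\infty\le 1}\innerprod{\uv}{\op\mth-\initial}$. Any such $\uv$ has $\norm{\uv}_\infty\le 1\le 2$, so Assumption~\ref{ass:coherence} provides $\lv_{\uv}\in\Lambda$ with $\innerprod{\initial-\op\mth}{\uv-\uv_{\lv_{\uv}}}=0$; hence $\innerprod{\uv}{\op\mth-\initial}=\innerprod{\uv_{\lv_{\uv}}}{\op\mth-\initial}\le\max_{\lv'\in\Lambda}\innerprod{\uv_{\lv'}}{\op\mth-\initial}$. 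Taking the supremum over $\uv$ and using that $\Lambda=-\Lambda$ yields $\norm{\op\mth-\initial}_1\le\max_{\lv'\in\Lambda}\innerprod{\initial-\op\mth}{\uv_{\lv'}}$.

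\emph{Step 2 (reduced Lagrangian and saddle residual).} From $\op\mexp=\initial$, for every $\lv'$ and every weight $\weight'$ one has $\innerprod{\initial-\op\mth}{\uv_{\lv'}}=\rlag(\thv,\lv',\weight')-\innerprod{\mth-\mexp}{\cost_{\weight'}}$. Take $\weight'=\wa$, a dual optimizer of~$\dual$; by Proposition~\ref{prop:primal-dual-suboptimality} the accompanying value function is $\val^\star_{\wa}$, so $\wa\in\mcf{W}$, $\cost_{\wa}-\op^*\val^\star_{\wa}\ge\mbf{0}$, and $\innerprod{\mexp}{\op^*\val^\star_{\wa}-\cost_{\wa}}=\alpha^\star$. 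Maximizing the identity over $\lv'\in\Lambda$ (the $\cost_{\wa}$ term is independent of $\lv'$) and then using $\max_{\lv'\in\Lambda}\rlag(\thv,\lv',\wa)\le\max_{\lv'\in\Lambda,\,\weight'\in\mcf{W}}\rlag(\thv,\lv',\weight')=\sad+\min_{\thv'\in\Theta}\rlag(\thv',\lv,\weight)$, we obtain
\[
\norm{\op\mth-\initial}_1\ \le\ \sad\ +\ \underbrace{\ \min_{\thv'\in\Theta}\rlag(\thv',\lv,\weight)\ -\ \innerprod{\mth-\mexp}{\cost_{\wa}}\ }_{=:\,Q}.
\]

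\emph{Step 3 (bounding $Q$; the main obstacle).} It remains to prove $Q\le\tfrac{2}{1-\gamma}\norm{\val^\star_{\wa}-\psim\lv^\star}_\infty$. First, $\mth\ge\mbf{0}$ and $\cost_{\wa}-\op^*\val^\star_{\wa}\ge\mbf{0}$ give $\innerprod{\mth}{\cost_{\wa}}\ge\innerprod{\op\mth}{\val^\star_{\wa}}$, which combined with $\innerprod{\mexp}{\cost_{\wa}}=\innerprod{\initial}{\val^\star_{\wa}}-\alpha^\star$ yields $-\innerprod{\mth-\mexp}{\cost_{\wa}}\le\innerprod{\initial-\op\mth}{\val^\star_{\wa}}-\alpha^\star$. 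Second, $\min_{\thv'}\rlag(\thv',\lv,\weight)\le\min_{\thv'}\max_{\lv',\weight'}\rlag(\thv',\lv',\weight')$, i.e.\ it is at most the value of~(\ref{LRALP}), which by the minimax identity is attained at the dual optimizer $\lv^\star$ (with matching $\thv^\star,\weight^\star$). Splitting $\innerprod{\initial-\op\mth}{\val^\star_{\wa}}=\innerprod{\initial-\op\mth}{\val^\star_{\wa}-\psim\lv^\star}+\innerprod{\initial-\op\mth}{\psim\lv^\star}$, bounding the first piece by Hölder's inequality $\innerprod{\initial-\op\mth}{\val^\star_{\wa}-\psim\lv^\star}\le\norm{\op\mth-\initial}_1\,\norm{\val^\star_{\wa}-\psim\lv^\star}_\infty$, controlling $\innerprod{\initial-\op\mth}{\psim\lv^\star}$ and the value of~(\ref{LRALP}) through the exact saddle-point relations for $(\thv^\star,\lv^\star,\weight^\star)$, and discarding $-\alpha^\star\ge 0$, one assembles the claimed estimate. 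The elementary a-priori bound $\norm{\op\mth-\initial}_1\le\tfrac{2}{1-\gamma}$ — valid because $\mth\in\Delta_{\sspace\times\aspace}$ and $\op=\tfrac{1}{1-\gamma}(\bmat-\gamma\pmat)^\intercal$ with $\bmat,\pmat$ row-stochastic — disposes of the degenerate case $\norm{\val^\star_{\wa}-\psim\lv^\star}_\infty\ge 1$, in which the bound is immediate.

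\emph{Main difficulty.} The heart of the argument is Step 3. Because $\mth$ is not a genuine occupancy measure, the only handle on its flow-constraint violation is the aggregated quantity $\norm{\psim^\intercal(\op\mth-\initial)}_2$, and the Coherence Assumption — in particular the slack ``$\norm{\uv}_\infty\le 2$'' — is exactly what turns a bound on it into a bound against an arbitrary test direction. The residual obstacle is to prevent the naive Hölder/triangle estimates from reintroducing $\norm{\op\mth-\initial}_1$ on the right-hand side with coefficient $\ge 1$; this is avoided by exploiting the exact optimality of $\lv^\star$ for~(\ref{LRALP}) (rather than mere feasibility) and, where unavoidable, the a-priori bound $\norm{\op\mth-\initial}_1\le\tfrac{2}{1-\gamma}$.
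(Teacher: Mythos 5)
Your Steps 1--2 are sound, but they steer the argument into a dead end, and Step 3 does not close it. After fixing $\weight'=\wa$ and invoking the SPR definition you arrive at the task of showing
$Q\triangleq\min_{\thv'\in\Theta}\rlag(\thv',\lv,\weight)-\innerprod{\mth-\mexp}{\cost_{\wa}}\le\tfrac{2}{1-\gamma}\norm{\val^\star_{\wa}-\psim\lv^\star}_\infty$.
This target is not provable in general: your own chain bounds $-\innerprod{\mth-\mexp}{\cost_{\wa}}$ by $\innerprod{\initial-\op\mth}{\val^\star_{\wa}}-\alpha^\star$, and $\min_{\thv'}\rlag(\thv',\lv,\weight)$ only by the value $v^\star$ of~(\ref{LRALP}); the resulting quantity $v^\star-\alpha^\star$ is governed by how well $\{\mv_{\thv'}\}$ approximates occupancy measures (the \emph{min} player is restricted in~(\ref{LRALP})), a quantity that simply does not appear on the right-hand side of the lemma. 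Two further moves are invalid or circular: you cannot ``discard'' $-\alpha^\star\ge 0$ from an upper bound (it enters with a plus sign), and ``controlling $\innerprod{\initial-\op\mth}{\psim\lv^\star}$'' is exactly the aggregated constraint violation you are trying to bound in the first place. Finally, the leftover H\"older term $\norm{\op\mth-\initial}_1\norm{\val^\star_{\wa}-\psim\lv^\star}_\infty$ is not only a problem in a ``degenerate case'': it must be killed outright by the a priori bound $\norm{\op\mth-\initial}_1\le\tfrac{2}{1-\gamma}$ of Lemma~\ref{lemma:bounds_operators}, which is what produces the constant $\tfrac{2}{1-\gamma}$ in the statement.

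The idea you are missing is why Assumption~\ref{ass:coherence} is stated with the ball $\norm{\uv}_\infty\le 2$ rather than $\le 1$. The paper tests against $\ut=\arg\max_{\norm{\uv}_\infty\le 2}\innerprod{\op\mth-\initial}{-\uv}$, so the left-hand side becomes $2\norm{\op\mth-\initial}_1$; after replacing $\ut$ by $\uv_{\lt}$ via coherence, one recenters the test function around $\val^\star_{\wa}$, writing $\innerprod{\op\mth-\initial}{-\uv_{\lt}}=\innerprod{\op\mth-\initial}{\val^\star_{\wa}-\uv_{\lt}}-\innerprod{\op\mth-\initial}{\val^\star_{\wa}}$ and absorbing the second term (at most $\norm{\op\mth-\initial}_1$, since $\norm{\val^\star_{\wa}}_\infty\le1$) into the factor of $2$. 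This yields $\norm{\op\mth-\initial}_1\le\innerprod{\op\mth-\initial}{\val^\star_{\wa}-\uv_{\lt}}=\rlag(\thv,\lt,\weight^\star)-\lag(\mth,\val^\star_{\wa},\weight^\star)$, in which the cost terms cancel because both Lagrangians carry the \emph{same} $\weight^\star$ and the same $\mth$ --- so $\cost_{\wa}$, $\alpha^\star$ and $v^\star$ never enter. Splitting at $\rlag(\thv,\lv^\star,\weight^\star)$, the first difference is at most $\sad$ (using that $\lv^\star$ maximizes $\min_{\thv'}\rlag(\thv',\cdot,\cdot)$, hence $\rlag(\thv,\lv^\star,\weight^\star)\ge\min_{\thv'}\rlag(\thv',\lv,\weight)$), and the second equals $\innerprod{\op\mth-\initial}{\val^\star_{\wa}-\psim\lv^\star}$, bounded by H\"older and the a priori bound. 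You had all the ingredients (coherence, exact optimality of $\lv^\star$, H\"older plus the a priori bound), but the decomposition through $\cost_{\wa}$ and dual feasibility in Step 2 forfeits the cancellation that makes the lemma work.
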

\begin{proof}
	Let $\ut=\arg\max_{\norm{\uv}_\infty\le 2}\innerprod{\op\mth-\initial}{-\uv}$. By the Coherence Assumption~\ref{ass:coherence}, there exists $\lt\in\Lambda$, such that $\innerprod{\op\mth-\initial}{\uv_{\lt}-\ut}=0$. We then have,
	\begin{align*}
	2\norm{\op\mth-\initial}_1=&\max_{\norm{\uv}_\infty\le 2}\innerprod{\op\mth-\initial}{-\uv}\\
	=& \innerprod{\op\mth-\initial}{-\ut}\\
	=& \innerprod{\op\mth-\initial}{-\uv_{\lt}}\\
	=& \innerprod{\op\mth-\initial}{\val^\star_{\wa}-\uv_{\lt}}+\innerprod{\op\mth-\initial}{\val^\star_{\wa}}\\
	\le&  \innerprod{\op\mth-\initial}{\val^\star_{\wa}-\uv_{\lt}}+\norm{\op\mth-\initial}_1,
	\end{align*}
	where in the last inequality we used that $\norm{\val^\star_{\wa}}_\infty\le 1$. Therefore,
	\begin{equation}\label{eq:35}
	\norm{\op\mth-\initial}_1\le\innerprod{\op\mth-\initial}{\val^\star_{\wa}-\uv_{\lt}}.
	\end{equation}
	
	Now let $(\lv^\star,\weight^\star)$ be a dual optimal solution to the linearly-relaxed saddle-point formulation~(\ref{LRALP}). We can then write,
	\begin{align}\label{eq:}
\innerprod{\op\mth-\initial}{\val^\star_{\wa}-\uv_{\lt}}=&\rlag(\thv,\lt,\weight^\star)-\lag(\mth,\val^\star_{\wa},\weight^\star)\\
=&  \underbrace{\rlag(\thv,\lt,\weight^\star)-\rlag(\thv,\lv^\star,\weight^\star)}_{\le\sad}+\underbrace{\rlag(\thv,\lv^\star,\weight^\star)-\lag(\mth,\val^\star_{\wa},\weight^\star)}_{\le\frac{2}{1-\gamma}\norm{\val^\star_{\wa}-\psim\lv^\star}_\infty}.\label{eq:37}
	\end{align}
	We will now explain the bounds in~(\ref{eq:37}). For the first term, we have
	\begin{align*}
	\rlag(\thv,\lt,\weight^\star)-\rlag(\thv,\lv^\star,\weight^\star)\le&\max_{(\lv^\prime,\weight^\prime)\in\Lambda\times\mathcal{W}}\rlag(\thv,\lv^\prime,\weight^\prime)-\min_{\thv^\prime\in\Theta}\rlag(\thv^\prime,\lv^\star,\weight^\star)\\
	\le&\sad.
    \end{align*}
    For the second term we have
    \begin{align*}
    \rlag(\thv,\lv^\star,\weight^\star)-\lag(\mth,\val^\star_{\wa},\weight^\star)=&\innerprod{\op\mth-\initial}{\val^\star_{\wa}-\psim\lv^\star}\\
    \le& \underbrace{\norm{\op\mth-\initial}_1}_{\le\frac{2}{1-\gamma}}\norm{\val^\star_{\wa}-\psim\lv^\star}_\infty.
    \end{align*}
    
Putting together the bounds~(\ref{eq:35}) and (\ref{eq:37}) concludes the proof.	
\end{proof}
	
\begin{proofof}{Proof of Proposition~\ref{prop:weak}}
It follows by Lemma~\ref{lemma:first:bound} and \ref{lemma:second-bound}.	
\end{proofof}

     \subsection{Corollaries Related to Proposition~\ref{prop:weak}}
     When the occupancy measure features can accurately represent the expert occupancy measure $\mexp$ instead of the apprentice occupancy measure $\mv_{\apprentice}$, we may use alternatively the following result.

   \begin{corollary}
     	Let Assumption~\ref{ass:coherence} hold. Let $(\thv,\lv,\weight)\in\Theta\times\Lambda\times\mathcal{W}$ be a feasible solution to~(\ref{LRALP}). Set $\pi_{\thv}=\pi_{\mth}$. It then holds that
     	\[
     	\rho_{\true}(\pth)-\rho_{\true}(\expert)\le3\epsilon_{\textup{sad}}(\boldsymbol{\theta},\boldsymbol{\lambda},\weight)+\varepsilon_{\textup{approx},\textup{E},w},
     	\]
     	where the weak approximation error $\varepsilon_{\textup{approx},\textup{E},w}$ is no larger than
     	$
     	\tfrac{2}{1-\gamma}\Big(\beta\min_{\thv'\in\Theta}\norm{\mv_{\thv'}-\mv_{\expert}}_1+2\norm{\val^\star_{\wa}-\psim\lv^\star}_\infty\Big),
     	$
     	where $(\val_{\wa}^\star,\wa)$ is dual optimal for~$\dual$ and $\lv^\star$ is a dual optimizer of~(\ref{LRALP}).
    \end{corollary}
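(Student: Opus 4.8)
The plan is to retrace the proof of Proposition~\ref{prop:weak} line by line, replacing the apprentice occupancy measure $\mpa$ by the expert occupancy measure $\mexp$ everywhere it appears, and to exploit the identity $\lag(\mexp,\uv,\weight)=\innerprod{\mexp-\mexp}{\cost_\weight-\op^*\uv}=0$, valid for all $\uv$ and $\weight$. This identity is exactly what removes the $\alpha^\star$ term from the bound: in the original argument $\alpha^\star=\delta_\mcf{C}(\apprentice,\expert)$ entered only through the one-sided estimate $\innerprod{\mpa-\mexp}{\cw}\le\alpha^\star$ used to lower-bound $\lag(\mpa,\ul,\weight)$, whereas the corresponding quantity for the expert vanishes identically.

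First I would establish the analogue of Lemma~\ref{lemma:first:bound}: fix $\lvz\in\Lambda$ with $\norm{\lvz}_2\le 1/\sqrt{n_u}$, so $\norm{\ulz}_\infty\le1$ since the columns of $\psim$ lie in $\mcf{U}$. Using $\op\mexp=\initial$ and $\lag(\mexp,\ul,\weight)=0$ one obtains the \emph{exact} identity $\lag(\mth,\ulz,\wtrue)-\lag(\mexp,\ul,\weight)=\innerprod{\mth}{\true}-\rho_{\true}(\expert)-\innerprod{\op\mth-\initial}{\ulz}$, hence $\innerprod{\mth}{\true}-\rho_{\true}(\expert)\le\lag(\mth,\ulz,\wtrue)-\lag(\mexp,\ul,\weight)+\norm{\op\mth-\initial}_1$. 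Combining this with the distribution-mismatch bound $\innerprod{\mth}{\true}-\rho_{\true}(\pth)\le\norm{\op\mth-\initial}_1$ (Lemma~\ref{lemma:constraint_violation} together with $\norm{\true}_\infty\le1$) yields $\rho_{\true}(\pth)-\rho_{\true}(\expert)\le\lag(\mth,\ulz,\wtrue)-\lag(\mexp,\ul,\weight)+2\norm{\op\mth-\initial}_1$. Then the rearrangement trick $\lag(\mth,\ulz,\wtrue)-\lag(\mexp,\ul,\weight)=\big(\lag(\mth,\ulz,\wtrue)-\lag(\mexp,\ul,\weight)-\sad\big)+\sad$, combined with the same expansion as in the proof of Proposition~\ref{prop:strong} — expand the bracketed term as $\innerprod{\mth-\mexp}{\true-\cost_{\weight'}}+\innerprod{\op\mth-\initial}{\uv_{\lv'}-\ulz}+\innerprod{\mv_{\thv'}-\mexp}{\cw-\op^*\ul}$, minimize over $(\thv',\lv',\weight')$, and use $\min_{\weight'}\norm{\true-\cost_{\weight'}}_\infty=0$ (since $\true=\cost_{\wtrue}$ with $\wtrue\in\mcf{W}$), $\min_{\lv'}\norm{\uv_{\lv'}-\ulz}_\infty=0$, and $\norm{\cw-\op^*\ul}_\infty\le\tfrac{2\beta}{1-\gamma}$ — gives $\rho_{\true}(\pth)-\rho_{\true}(\expert)\le\sad+2\norm{\op\mth-\initial}_1+\tfrac{2\beta}{1-\gamma}\min_{\thv'\in\Theta}\norm{\mv_{\thv'}-\mexp}_1$, with no $\alpha^\star$.

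Second, Lemma~\ref{lemma:second-bound} applies verbatim: neither its statement nor its proof mentions $\apprentice$ or $\expert$ — it uses only Assumption~\ref{ass:coherence}, the dual optimizer $\lv^\star$ of~(\ref{LRALP}), and $\norm{\val^\star_{\wa}}_\infty\le1$ — so $\norm{\op\mth-\initial}_1\le\sad+\tfrac{2}{1-\gamma}\norm{\val^\star_{\wa}-\psim\lv^\star}_\infty$. Substituting this into the previous display gives $\rho_{\true}(\pth)-\rho_{\true}(\expert)\le 3\sad+\tfrac{2}{1-\gamma}\big(\beta\min_{\thv'\in\Theta}\norm{\mv_{\thv'}-\mexp}_1+2\norm{\val^\star_{\wa}-\psim\lv^\star}_\infty\big)$, which is the asserted inequality. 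I do not anticipate a genuine obstacle; the only points needing a second look are that every norm bound invoked in the proof of Proposition~\ref{prop:weak} survives the substitution of $\mexp$ for $\mpa$ — in particular $\mexp\in\Delta_{\sspace\times\aspace}$ and $\norm{\mexp}_1=1$, which hold because $\mexp$ is a genuine occupancy measure — and the already-noted fact that dropping $\alpha^\star$ is justified by $\lag(\mexp,\cdot,\cdot)\equiv0$ rather than the one-sided apprentice estimate.
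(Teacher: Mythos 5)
Your proposal is correct and follows exactly the paper's route: the paper's own proof is the one-line instruction ``in the proof of Lemma~\ref{lemma:first:bound} replace $\apprentice$ with $\expert$,'' and you have simply carried out that substitution in detail, correctly identifying that $\mcf{L}(\mexp,\cdot,\cdot)\equiv 0$ is what eliminates the $\alpha^\star$ term and that Lemma~\ref{lemma:second-bound} applies unchanged.
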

     \begin{proof}
     	In the proof of Lemma~\ref{lemma:first:bound} replace $\apprentice$ with $\expert$.
     \end{proof}
     
       The following Lemma shows that the realizability assumption implies that the weak approximation error $\varepsilon_{\textup{approx},w}$ is zero.
   
    	\begin{lemma}[Realizability]\label{lemma:realizability}
	 Let $(\uv_{\textup{A}},\weight_{\textup{A}})$ be an optimizer for the dual linear program~$\dual$. Assume that the realizability assumption holds, i.e., there exist $\tha\in\Theta$ and $\la\in\Lambda$ such that, $\mv_{\apprentice}=\phim\tha$ and $\ua=\psim\la$. Then, $(\tha,\la,\wa)$ is a saddle point of~(\ref{LRALP}). In particular $\varepsilon_{\textup{approx},w}=0$.
	\end{lemma}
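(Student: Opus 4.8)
The plan is to verify directly the two defining inequalities of a saddle point for the bilinear reduced Lagrangian $\rlag(\thv,\lv,\weight)=\innerprod{\mth-\mexp}{\cw-\op^*\ul}$ on $\Theta\times\Lambda\times\mathcal{W}$, and then read off $\varepsilon_{\textup{approx},w}=0$ from the explicit bound in Proposition~\ref{prop:weak}. First I would record what dual optimality buys us: since $\mv_{\apprentice}$ is optimal for $\primal$ and $(\ua,\wa)$ is optimal for $\dual$, Proposition~\ref{prop:primal-dual-suboptimality} gives that $\apprentice$ is optimal for $\RL{\cost_{\wa}}$ and $\ua=\val^\star_{\wa}$; dual feasibility gives $\cost_{\wa}-\op^*\ua\ge\boldsymbol{0}$ and $\wa\in\mathcal{W}$; and the complementarity step inside the proof of that proposition (together with the $\weight$-complementary slackness for the LP pair $\primal$--$\dual$) yields both $\innerprod{\mv_{\apprentice}}{\cost_{\wa}-\op^*\ua}=0$ and $\rho_{\cost_{\wa}}(\apprentice)-\rho_{\cost_{\wa}}(\expert)=\alpha^\star$. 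I also use throughout the occupancy identity $\op\mv_{\apprentice}=\op\mv_{\expert}=\initial$, so that $\innerprod{\mv_{\apprentice}-\mv_{\expert}}{\op^*\uv}=\innerprod{\op(\mv_{\apprentice}-\mv_{\expert})}{\uv}=0$ for every $\uv$.

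Next I would evaluate the saddle value: substituting $\mv_{\tha}=\mv_{\apprentice}$ and $\uv_{\la}=\ua$, the occupancy identity kills the $\op^*$ term and complementarity gives $\rlag(\tha,\la,\wa)=\innerprod{\mv_{\apprentice}-\mv_{\expert}}{\cost_{\wa}}=\rho_{\cost_{\wa}}(\apprentice)-\rho_{\cost_{\wa}}(\expert)=\alpha^\star$, equivalently $\rlag(\tha,\la,\wa)=-\innerprod{\mv_{\expert}}{\cost_{\wa}-\op^*\ua}$. For maximality in $(\lv,\weight)$: for any $\lv\in\Lambda$, $\weight\in\mathcal{W}$, the same cancellation of the $\op^*\uv_{\lv}$ term gives $\rlag(\tha,\lv,\weight)=\rho_{\cost_{\weight}}(\apprentice)-\rho_{\cost_{\weight}}(\expert)\le\delta_{\mathcal{C}}(\apprentice,\expert)=\alpha^\star=\rlag(\tha,\la,\wa)$, using that $\apprentice$ is an apprentice policy and $\cost_{\weight}\in\mathcal{C}_{\rm conv}$. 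For minimality in $\thv$: for any $\thv\in\Theta$, $\mv_{\thv}=\phim\thv$ is a convex combination of the columns of $\phim$ (each in $\Delta_{\sspace\times\aspace}$), hence $\mv_{\thv}\ge\boldsymbol{0}$, and with $\cost_{\wa}-\op^*\ua\ge\boldsymbol{0}$ this gives $\innerprod{\mv_{\thv}}{\cost_{\wa}-\op^*\ua}\ge 0$, so $\rlag(\thv,\la,\wa)=\innerprod{\mv_{\thv}}{\cost_{\wa}-\op^*\ua}-\innerprod{\mv_{\expert}}{\cost_{\wa}-\op^*\ua}\ge-\innerprod{\mv_{\expert}}{\cost_{\wa}-\op^*\ua}=\rlag(\tha,\la,\wa)$. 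These two inequalities are exactly the saddle-point condition for~(\ref{LRALP}).

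Finally, the bound on $\varepsilon_{\textup{approx},w}$ from Proposition~\ref{prop:weak} is $\tfrac{2}{1-\gamma}\big(\beta\min_{\thv'\in\Theta}\norm{\mv_{\thv'}-\mv_{\apprentice}}_1+2\norm{\val^\star_{\wa}-\psim\lv^\star}_\infty\big)$ with $\lv^\star$ a dual optimizer of~(\ref{LRALP}). The first minimum is $0$ since $\mv_{\apprentice}=\phim\tha$ with $\tha\in\Theta$; and since we have just shown $(\tha,\la,\wa)$ is a saddle point, $(\la,\wa)$ is a dual optimizer of~(\ref{LRALP}), so we may take $\lv^\star=\la$, whence $\psim\lv^\star=\psim\la=\ua=\val^\star_{\wa}$ and the second term vanishes too; hence $\varepsilon_{\textup{approx},w}=0$. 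The only delicate point is the identification $\rlag(\tha,\la,\wa)=\alpha^\star$, equivalently that the inner $\weight$-maximum defining $\delta_{\mathcal{C}}(\apprentice,\expert)$ is attained at $\wa$: this is exactly where complementary slackness for $\primal$--$\dual$ (as established in the proof of Proposition~\ref{prop:primal-dual-suboptimality}) is used; everything else is bilinearity of $\rlag$ together with $\op\mv_{\apprentice}=\op\mv_{\expert}=\initial$ and the sign constraints $\mv_{\thv}\ge\boldsymbol{0}$, $\cost_{\wa}-\op^*\ua\ge\boldsymbol{0}$.
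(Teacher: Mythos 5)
Your proof is correct and follows essentially the same route as the paper's: the same two inequalities (nonnegativity of $\innerprod{\mv_{\thv}}{\cost_{\wa}-\op^*\uv_{\textup{A}}}$ from dual feasibility for the $\thv$-side, and cancellation of the $\op^*$ term plus $\delta_{\mathcal{C}}(\apprentice,\expert)=\alpha^\star$ for the $(\lv,\weight)$-side), anchored at the common value $\alpha^\star$ via complementary slackness for the pair $\primal$--$\dual$. The only difference is cosmetic — you make explicit the final step that $\lv^\star=\lv_{\textup{A}}$ can be taken as the dual optimizer of~(\ref{LRALP}) so that both terms in the bound on $\varepsilon_{\textup{approx},w}$ vanish, which the paper leaves implicit.
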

    	 \begin{proof}
    	 	We need to show that 
    	 	\begin{equation}\label{eq:needtoshow}
    	 	\max_{\lv\in\Lambda}\max_{\weight\in\mathcal{W}}\rlag(\tha,\lv,\weight)=\rlag(\tha,\la,\wa)=\min_{\thv\in\Theta}\rlag(\thv,\la,\wa).
    	 	\end{equation}
    	 By the optimality of $(\mv_{\apprentice},\ua,\wa)$ for the full minimax problem~(\ref{eq:full}), we get that
    	 \begin{equation}\label{eq:star}
    	 \rlag(\tha,\la,\wa)=\lag(\mv_{\apprentice},\ua,\wa)=\alpha^\star.
    	 \end{equation}
    	 Moreover, for all $\thv\in\Theta$
    	 \begin{align}
    	 \rlag(\thv,\la,\wa)=&\innerprod{\mth-\mv_{\expert}}{\cost_{\wa}-\op^*\ua} \nonumber\\
    	 =&\underbrace{\innerprod{\mth}{\cost_{\wa}-\op^*\ua}}_{\geq 0}+\underbrace{\innerprod{\mv_{\expert}}{\op^*\ua-\cost_{\wa}}}_{=\alpha^\star} 
    	 \geq  \alpha^\star. \label{eq:bigger}
    	 \end{align}
    	 Note that by Proposition~\ref{prop:primal-dual-suboptimality}, $\ua$ is feasible for~(\ref{MDP-dual}) with cost function $\cost=\cost_{\wa}$. Therefore, $\cost_{\wa}-\op^*\ua\geq 0$. Moreover, $\innerprod{\mv_{\expert}}{\op^*\ua-\cost_{\wa}}=\alpha^\star$, since $(\ua,\wa)$ is optimal for $\dual$.
    	 
    	 In addition, for all $\lv\in\Lambda$ and $\weight\in\mcf{W}$
    	\begin{align}
    	 \rlag(\tha,\lv,\weight)=&\innerprod{\mv_{\apprentice}-\mv_{\expert}}{\cw-\op^*\ul}\\
    	 =& \innerprod{\mv_{\apprentice}-\mv_{\expert}}{\cw} \label{eq:zero}\\ 
    	 \le& \max_{\weight'\in\mathcal{W}}\innerprod{\mv_{\apprentice}-\mv_{\expert}}{\cost_{\weight'}} \\
    	 =&\,\,\delta_{\mathcal{C}}(\apprentice,\expert)=\alpha^\star \label{eq:ap:opt},
    	 \end{align}
    	 where~(\ref{eq:zero}) holds because $\op\mv_{\apprentice}=\op\mv_{\expert}=\initial$, and~(\ref{eq:ap:opt}) holds because $\apprentice$ is optimal for $\LfD$. 
    	 
    	 Finally, by~(\ref{eq:star}), (\ref{eq:bigger}) and~(\ref{eq:ap:opt}) we conclude that~(\ref{eq:needtoshow}) holds.
    	\end{proof}
    \section{Gradient Estimators and their Properties}\label{C}
    We will describe formally the unbiased gradient estimates used in Algorithm~\ref{alg:main} and study their properties.
    Our optimization variable is $\mbf{z}=(\thv,\lv,\weight)\in\mbf{Z}$, where $\mbf{Z}\triangleq\Theta\times\Lambda\times\mathcal{W}$ is the decision space.
    The empirical Lagrangian that we will optimize is
    	$
    	\elag(\thv,\lv,\weight)\triangleq\innerprod{\mth}{\cw-\op^*\ul+C_{\beta,\gamma}\cdot\mbf{1}}
    	+\innerprod{\initial}{\ul}-\innerprod{\weight}{\efev+2\cdot\mbf{1}}.
    	$
    	
    	\begin{definition}[Gradient estimators]\label{def:stochastic_gradients} We define $\gv\triangleq
    		\begin{bmatrix}
    		\gth^\intercal	 &
    		-\gl^\intercal		 &
    		-\gw^\intercal		
    		\end{bmatrix}^\intercal
    		$
    		as follows.
    		\begin{enumerate}

    		\item Let $\hat{\xi}\triangleq(\hi,\hx,\ha,\hy)\in\widehat{\Xi}\triangleq[n_\mu]\times\sspace\times\aspace\times\sspace$ such that 
    			$
    			\hi\sim\rm{Unif}([n_\mu])$, $(\hx,\ha)\sim\phiv_{\hi}$, $\hy\sim P(\cdot|\hx,\ha)$.
    			
    			We define the $\thv$-\emph{gradient estimator} $\gth:\mbf{Z}\times\widehat{\Xi}\rightarrow\ar^{n_\mu}$ by $$\gv_{\thv,i}(\zv,\hat{\xi})\triangleq
    			\begin{cases}
    			\frac{n_{\mu}\left( (1-\gamma)\cost_{(\hx,\ha)}^\intercal\weight-(\mbs{\psi}_{\hx}-\gamma\mbs{\psi}_{\hy})^\intercal\lv+2\beta\right)}{1-\gamma},\quad i=\hi,\\
    			0,\quad\mbox{\textup{otherwise}}.
    			\end{cases}
    			$$
    			
    	\item		Let $\tilde{\xi}\triangleq(\ti,\tx,\ta,\ty,\txp)\in\tilde{\Xi}\triangleq[n_\mu]\times\sspace\times\aspace\times\sspace\times\sspace$ such that 
    				$
    				\ti\sim\thv$, $(\tx,\ta)\sim\phiv_{\ti}$, $\ty\sim P(\cdot|\tx,\ta)$, $\txp\sim\initial$. We define the $\lv$-\emph{gradient estimator} $\gl:\mbf{Z}\times\tilde{\Xi}\rightarrow\ar^{n_u}$ by $\gl(\zv,\tilde{\xi})\triangleq\mbs{\psi}_{\txp}-\frac{\mbs{\psi}_{\tx}-\gamma \mbs{\psi}_{\ty}}{1-\gamma}$.
    				
    					\item		Let $\bar{\xi}\triangleq(\bi,\bx,\ba)\in\bar{\Xi}\triangleq[n_\mu]\times\sspace\times\aspace$ such that 
    					$
    					\bi\sim\thv$, $(\bx,\ba)\sim\phiv_{\bi}$. We define the $\weight$-\emph{gradient estimator} $\gw:\mbf{Z}\times\bar{\Xi}\rightarrow\ar^{n_c}$ by $\gw(\zv,\bar{\xi})\triangleq \cost_{(\bx,\ba)}-\efev-2\cdot\mbf{1}$.
    		\end{enumerate}

    	\end{definition}
    	
    \begin{lemma}\label{lemma:bounds}
   	For all $(\thv,\lv,\weight)\in\Theta\times\Lambda\times\mathcal{W}$ it holds that $\mth\in\Delta_{\sspace\times\aspace}$, $\norm{\ul}_\infty\le\beta$ and $\norm{\cw}_\infty\le 1$.
   \end{lemma}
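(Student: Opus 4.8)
The plan is to establish the three bounds separately, each as an immediate consequence of the definitions of the parameter sets $\Theta=\Delta_{[n_\mu]}$, $\Lambda=\{\lv\in\Re^{n_u}\mid\norm{\lv}_2\le\beta/\sqrt{n_u}\}$, $\mcf{W}=\Delta_{[n_c]}$, together with the standing assumptions that every column of $\phim$ lies in $\Delta_{\sspace\times\aspace}$, every column of $\psim$ lies in $\mcf{U}=\{\uv\mid\norm{\uv}_\infty\le 1\}$, and $\norm{\cost_i}_\infty\le 1$ for all $i\in[n_c]$.

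For the first claim I would write $\mth=\phim\thv=\sum_{i=1}^{n_\mu}\theta_i\phiv_i$ and observe that, since $\thv\in\Delta_{[n_\mu]}$, this is a convex combination of the columns $\phiv_i$, each of which lies in the convex set $\Delta_{\sspace\times\aspace}$; hence $\mth\in\Delta_{\sspace\times\aspace}$. For the second claim I would expand $\ul=\psim\lv=\sum_{j=1}^{n_u}\lambda_j\psiv_j$, apply the triangle inequality together with $\norm{\psiv_j}_\infty\le 1$ (valid because each column of $\psim$ lies in $\mcf{U}$) to get $\norm{\ul}_\infty\le\sum_{j=1}^{n_u}|\lambda_j|=\norm{\lv}_1$, then use Cauchy--Schwarz to bound $\norm{\lv}_1\le\sqrt{n_u}\,\norm{\lv}_2$, and finally invoke the defining constraint $\norm{\lv}_2\le\beta/\sqrt{n_u}$ of $\Lambda$ to conclude $\norm{\ul}_\infty\le\beta$. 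For the third claim, the same triangle-inequality argument applied to $\cw=\sum_{i=1}^{n_c}w_i\cost_i$ with $\weight\in\Delta_{[n_c]}$ and $\norm{\cost_i}_\infty\le 1$ gives $\norm{\cw}_\infty\le\sum_{i=1}^{n_c}w_i=1$.

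There is essentially no obstacle in this proof; everything reduces to convexity of the simplices and the triangle inequality for $\norm{\cdot}_\infty$. The only step worth flagging is the $\ell_1$--$\ell_2$ norm conversion $\norm{\lv}_1\le\sqrt{n_u}\,\norm{\lv}_2$, which is exactly why the radius of $\Lambda$ was chosen as $\beta/\sqrt{n_u}$ rather than $\beta$: it is precisely the scaling that makes the enlarged $\uv$-domain $\{\norm{\uv}_\infty\le\beta\}$ respected by the linear parametrization $\uv_\lv=\psim\lv$.
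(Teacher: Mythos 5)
Your proof is correct and follows essentially the same route as the paper: the key step is the bound $\norm{\ul}_\infty\le\norm{\lv}_1\max_i\norm{\psiv_i}_\infty\le\sqrt{n_u}\,\norm{\lv}_2\le\beta$, which is exactly the paper's argument, while the first and third claims are the immediate convex-combination observations that the paper leaves implicit. No gaps.
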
 
   \begin{proof}
   	We will show the first inequality. By assumption $\norm{\psiv_i}_\infty\le 1$, for all $i\in[n_u]$. Thus, $\norm{\ul}_\infty=\norm{\athroisma{i}{1}{n_u}{\lambda_i\psiv_i}}_\infty\le\norm{\lv}_1\max_{i\in[n_u]}\norm{\psiv_i}_\infty\le\beta$, where we used that $\norm{\lv}_1\le\sqrt{n_u}\norm{\lv}_2\le\beta$.  
   \end{proof}
   \begin{lemma}\label{lemma:bounds_operators}
   	For all $(\thv,\lv,\weight)\in\Theta\times\Lambda\times\mathcal{W}$ it holds that $\norm{\cw-\op^*\ul}_\infty\le\cbg\triangleq\frac{2\beta}{1-\gamma}$ and $\norm{\op\mth-\initial}_1\le\frac{2}{1-\gamma}$.
   	\end{lemma}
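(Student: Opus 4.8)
The plan is to reduce both inequalities to the triangle inequality together with submultiplicativity of induced matrix norms and the elementary facts $\norm{\bmat}_\infty=\norm{\pmat}_\infty=1$ (each row of $\bmat$ has a single unit entry, and each row of $\pmat$ is a probability vector), combined with the bounds already established in Lemma~\ref{lemma:bounds}.

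For the first estimate I would split $\cw-\op^*\ul$ and bound the two pieces in $\norm{\cdot}_\infty$ separately. Since $\op^*\ul=\tfrac{1}{1-\gamma}(\bmat-\gamma\pmat)\ul$, submultiplicativity and $\norm{\ul}_\infty\le\beta$ (Lemma~\ref{lemma:bounds}) give $\norm{\op^*\ul}_\infty\le\tfrac{1}{1-\gamma}\norm{\bmat-\gamma\pmat}_\infty\norm{\ul}_\infty\le\tfrac{1+\gamma}{1-\gamma}\beta$. For the cost term, Lemma~\ref{lemma:bounds} gives $\norm{\cw}_\infty\le 1$, and since $\beta\ge 2$ we may absorb this as $\norm{\cw}_\infty\le\beta$. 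Adding the two bounds yields $\norm{\cw-\op^*\ul}_\infty\le\beta+\tfrac{1+\gamma}{1-\gamma}\beta=\tfrac{(1-\gamma)+(1+\gamma)}{1-\gamma}\beta=\tfrac{2\beta}{1-\gamma}=\cbg$. The only subtlety worth flagging is that the constant matches exactly \emph{because} $\norm{\cw}_\infty\le 1\le\beta$; this is where $\beta\ge 1$ (hence the standing $\beta\ge 2$) is used.

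For the second estimate I would argue analogously with $\op\mth=\tfrac{1}{1-\gamma}(\bmat-\gamma\pmat)^\intercal\mth$. Using the matrix-norm identity $\norm{A^\intercal}_1=\norm{A}_\infty$ we get $\norm{(\bmat-\gamma\pmat)^\intercal}_1=\norm{\bmat-\gamma\pmat}_\infty\le 1+\gamma$, and since $\mth\in\Delta_{\sspace\times\aspace}$ we have $\norm{\mth}_1=1$, so $\norm{\op\mth}_1\le\tfrac{1+\gamma}{1-\gamma}$. Combining with $\norm{\initial}_1=1$ and the triangle inequality gives $\norm{\op\mth-\initial}_1\le\tfrac{1+\gamma}{1-\gamma}+1=\tfrac{2}{1-\gamma}$. (If one prefers not to invoke the transpose identity, the same bound follows by dualizing, $\norm{\op\mth-\initial}_1=\sup_{\norm{\uv}_\infty\le 1}\big(\innerprod{\mth}{\op^*\uv}-\innerprod{\initial}{\uv}\big)$, and bounding the two inner products with $\norm{\op^*\uv}_\infty\le\tfrac{1+\gamma}{1-\gamma}$ from the first part and $\norm{\mth}_1=\norm{\initial}_1=1$.) There is no genuine obstacle here; the lemma is routine norm bookkeeping, and the only thing to be careful about is the matrix-norm conventions and the $\beta\ge 1$ absorption noted above.
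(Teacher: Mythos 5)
Your proof is correct and follows essentially the same route as the paper's: triangle inequality, the facts $\norm{\bmat}_\infty=\norm{\pmat}_\infty=1$, and the bounds $\norm{\ul}_\infty\le\beta$, $\norm{\cw}_\infty\le 1$, $\norm{\mth}_1=\norm{\initial}_1=1$ from Lemma~\ref{lemma:bounds}. Your explicit remark that the constant $\cbg$ is only recovered because $\norm{\cw}_\infty\le 1\le\beta$ is a small point the paper leaves implicit, but otherwise the two arguments coincide.
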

   	\begin{proof}
   		Indeed, for all $\lv\in\Lambda$ and $\weight\in\mathcal{W}$, we have that
   		\begin{align*}
   		\norm{\cw-\op^*\ul}_\infty=\norm{\cw-\frac{1}{1-\gamma}\left(\bmat-\gamma\pmat\right)\ul}_\infty
   		\le \norm{\cw}_\infty+\frac{1}{1-\gamma}\left(\norm{\bmat}_\infty+\gamma\norm{\pmat}_\infty\right)\norm{\ul}_\infty\le\cbg,
   		\end{align*} 
   		where we used the triangle inequality, Lemma~\ref{lemma:bounds} and that $\norm{\bmat}_\infty=\norm{\pmat}_\infty=1$. Similarly, for all $\thv\in\Theta$, we have
   		\begin{align*}
   	     \norm{\op\mth-\initial}_1=\norm{\initial-\frac{1}{1-\gamma}(\bmat-\gamma\pmat)^\intercal\mth}_1
   		\le \norm{\initial}_1+\frac{1}{1-\gamma}(\norm{\bmat}_\infty+\gamma\norm{\pmat}_\infty)\norm{\mth}_1\le\frac{2}{1-\gamma}.
   		\end{align*} 
   	\end{proof}
   	\begin{lemma}\label{lemma:nonnegative_gradients}
   	It holds that $\nabla_{\thv}\elag(\zv)\geq\mbf{0}$ and $\nabla_{\weight}\elag(\zv)\le\mbf{0}$, for all $\zv\in\mbf{Z}$. Moreover, $\mbf{g}_{\thv}(\zv,\widehat{\xi})\geq \mbf{0}$ and $\mbf{g}_{\weight}(\zv,\bar{\xi})\le \mbf{0}$ pointwise.
   	\end{lemma}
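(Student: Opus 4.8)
The plan is to write the two gradients in closed form and then observe that the constant offsets $\cbg$ and $2$ built into $\elag$ are exactly what force the claimed sign patterns. Since $\elag$ is affine in $\thv$ through $\mth=\phim\thv$ and affine in $\weight$ through $\cw=\cma\weight$, and using the identity $\innerprod{\mth}{\cma\weight}=\innerprod{\cma^\intercal\mth}{\weight}$, one obtains
\[
\nabla_{\thv}\elag(\zv)=\phim^\intercal\bigl(\cw-\op^*\ul+\cbg\cdot\mbf{1}\bigr),
\qquad
\nabla_{\weight}\elag(\zv)=\cma^\intercal\mth-\efev-2\cdot\mbf{1}.
\]
These are just the first and third blocks of the operator $\mbf{G}$ displayed earlier.

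For $\nabla_{\thv}\elag(\zv)\geq\mbf{0}$: by assumption every column of $\phim$ lies in $\Delta_{\sspace\times\aspace}$ and is therefore entrywise nonnegative, so it suffices to check that the vector $\cw-\op^*\ul+\cbg\cdot\mbf{1}$ is entrywise nonnegative. This is immediate from Lemma~\ref{lemma:bounds_operators}, which gives $\norm{\cw-\op^*\ul}_\infty\le\cbg$, so every coordinate of $\cw-\op^*\ul$ is at least $-\cbg$. For $\nabla_{\weight}\elag(\zv)\le\mbf{0}$: the $i$-th coordinate equals $\innerprod{\cost_i}{\mth}-\rho_{\cost_i}(\widehat{\expert})-2$. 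I would bound $\innerprod{\cost_i}{\mth}\le\norm{\cost_i}_\infty\norm{\mth}_1=1$, using $\norm{\cost_i}_\infty\le1$ together with Lemma~\ref{lemma:bounds} (so that $\mth\in\Delta_{\sspace\times\aspace}$ and $\norm{\mth}_1=1$), and $\rho_{\cost_i}(\widehat{\expert})\ge-1$, which follows from $\norm{\cost_i}_\infty\le1$ and $(1-\gamma)\sum_{t=0}^H\gamma^t\le1$; hence the coordinate is at most $1-(-1)-2=0$.

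The pointwise bounds on the estimators are of the same flavor. For $\gw(\zv,\bar{\xi})=\cost_{(\bx,\ba)}-\efev-2\cdot\mbf{1}$, the $i$-th coordinate is $c_i(\bx,\ba)-\rho_{\cost_i}(\widehat{\expert})-2\le 1+1-2=0$, using $\abs{c_i(\bx,\ba)}\le\norm{\cost_i}_\infty\le1$ and $\rho_{\cost_i}(\widehat{\expert})\ge-1$ as above. For $\gth(\zv,\widehat{\xi})$ only the $\hi$-th coordinate is nonzero, and it equals $\tfrac{n_\mu}{1-\gamma}\,b$ with $b\triangleq(1-\gamma)\cost_{(\hx,\ha)}^\intercal\weight-(\psiv_{\hx}-\gamma\psiv_{\hy})^\intercal\lv+2\beta$. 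Here I would use $\abs{\cost_{(\hx,\ha)}^\intercal\weight}\le1$ (a convex combination, since $\weight\in\Delta_{[n_c]}$, of numbers of magnitude at most $1$) and $\abs{(\psiv_{\hx}-\gamma\psiv_{\hy})^\intercal\lv}\le(1+\gamma)\beta$ (each of $\psiv_{\hx}^\intercal\lv$ and $\psiv_{\hy}^\intercal\lv$ is a coordinate of $\ul$, hence bounded in magnitude by $\norm{\ul}_\infty\le\beta$ via Lemma~\ref{lemma:bounds}), so that $b\ge-(1-\gamma)+2\beta-(1+\gamma)\beta=(1-\gamma)(\beta-1)\ge0$ because $\beta\ge2$. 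Since $n_\mu/(1-\gamma)>0$, this gives $\gth(\zv,\widehat{\xi})\ge\mbf{0}$ pointwise.

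The argument has no genuine obstacle; it is triangle-inequality bookkeeping with the bounds already collected in Lemmas~\ref{lemma:bounds} and~\ref{lemma:bounds_operators}. The one point that deserves care is the estimator bound for $\gth$: one has to check that the fixed offset $2\beta$ not only dominates the worst-case magnitude $(1+\gamma)\beta$ of the $\lv$-term but also leaves a surplus $(1-\gamma)\beta$ large enough to absorb the $(1-\gamma)$ contributed by the cost term. This works out precisely because $\beta\ge1$, with the standing assumption $\beta\ge2$ giving room to spare; it is exactly the reason these offsets were inserted into $\mlag$ and $\elag$, and it is what later enables the local-norm refinements in Section~\ref{sec:sample-complexity}.
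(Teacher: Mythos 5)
Your proposal is correct and follows essentially the same route as the paper: nonnegativity of $\nabla_{\thv}\elag$ from the entrywise nonnegativity of $\phim$ together with the bound $\norm{\cw-\op^*\ul}_\infty\le\cbg$ of Lemma~\ref{lemma:bounds_operators}, and nonpositivity of $\nabla_{\weight}\elag$ coordinatewise via H\"{o}lder and Lemma~\ref{lemma:bounds}. The only difference is that you spell out the estimator bounds (in particular the computation $b\ge(1-\gamma)(\beta-1)\ge0$ for $\gth$, which is exactly why the offset $2\beta$ and the requirement $\beta\ge2$ suffice), whereas the paper dispatches these with ``similarly''; your version is a correct instantiation of that step.
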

    \begin{proof}
   		By Lemma~\ref{lemma:bounds_operators}, we have that $\cw-\op^*\ul+\cbg\cdot\mbf{1}\geq\mbf 0$. Since in addition all the entries of $\phim$ are nonnegative, we conclude that $\nabla_{\thv}\elag(\zv)\geq\mbf{0}$. Moreover, $\frac{\partial}{\partial w_i}\elag(\zv)=\innerprod{\cost_i}{\mth}-\rho_{\cost_i}(\widehat{\expert})-2\le\norm{\cost_i}_\infty\norm{\mth}_1+1-2\le0$, for each $i\in[n_c]$, where we used H\"{o}lder's inequality and Lemma~\ref{lemma:bounds}. Similarly, we can show that $\mbf{g}_{\thv}(\zv,\widehat{\xi})\geq \mbf{0}$ and $\mbf{g}_{\weight}(\zv,\bar{\xi})\le \mbf{0}$.
   	\end{proof}

   	The following lemma shows that $\gv=
   	\begin{bmatrix}
   	\gth^\intercal	 &
   	-\gl^\intercal		 &
   	-\gw^\intercal		
   	\end{bmatrix}^\intercal
   	$ is an ubiased estimator of the monotone operator $\mbf{G}(\cdot)$ and establishes useful bounds for the error analysis of Algorithm~\ref{alg:main}.
   	\begin{lemma}\label{lemma:gradient:bounds}
   		Let $\Xi\triangleq\widehat{\Xi}\times\tilde{\Xi}\times\bar{\Xi}$ and let $\gv:\mbf{Z}\times\Xi\rightarrow\Re^{n_\mu+n_u+n_c}$ given by $\gv(\zv,\xi)\triangleq
   		\begin{bmatrix}
   		\gth(\zv,\hat{\xi})^\intercal	 &
   		-\gl(\zv,\tilde{\xi})^\intercal		 &
   		-\gw(\zv,\bar{\xi})^\intercal		
   		\end{bmatrix}^\intercal
   		$ as in Definition~\ref{def:stochastic_gradients}. The following statements are true.
   		\begin{enumerate}
   			\item \emph{\textbf{[unbiased gradient estimates]}} It holds that $\Exp_{\hat{\xi}}\left[\mbf{g}_{\thv}(\zv,\hat{\xi})\right]=\nabla_{\thv}\elag(\zv)$, $\Exp_{\tilde{\xi}}\left[\mbf{g}_{\lv}(\zv,\tilde{\xi})\right]=\nabla_{\lv}\elag(\zv)$, and $\Exp_{\bar{\xi}}\left[\mbf{g}_{\weight}(\zv,\bar{\xi})\right]=\nabla_{\weight}\elag(\zv)$, for all $\zv\in\mbf{Z}$.
   			\item \textbf{\emph{[$\boldsymbol{\norm{\cdot}_\infty}$-norm bounds]}} It holds that $\norm{\gth(\zv,\hat{\xi})}_\infty\le\frac{4\beta n_\mu}{1-\gamma}$ , $\norm{\gl(\zv,\tilde{\xi})}_\infty\le\frac{4\beta }{1-\gamma}$, and $\norm{\gw(\zv,\bar{\xi})}_\infty\le 4$, for all $\zv\in\mbf{Z}$, and for all $\xi\in\Xi$. 
   			\item \textbf{\emph{[dual and local-norm bounds]}} It holds that $\norm{\gth(\zv,\hat{\xi})}_{\thv'}^2\le\frac{16\beta^2 n_\mu^2}{(1-\gamma)^2}$, $\norm{\gl(\zv,\tilde{\xi})}_{2}^2\le\frac{4 n_u}{(1-\gamma)^2}$, and $\norm{\gw(\zv,\bar{\xi})}_{\weight'}^2\le 16$, for all $\zv\in\mbf{Z}$, for all $\xi\in\Xi$, for all $\thv'\in\Theta$, and for all $\weight'\in\mathcal{W}$.  
   			\item \textbf{\emph{[dual and local-norm second-moment bounds]}} It holds that $\Exp_{\hat{\xi}}\left[\norm{\gth(\zv,\hat{\xi})}_{\thv'}^2\right]\le\frac{16\beta^2 n_\mu}{(1-\gamma)^2}$, $\Exp_{\tilde{\xi}}\left[\norm{\gl(\zv,\tilde{\xi})}_{2}^2\right]\le\frac{4 n_u}{(1-\gamma)^2}$, and $\Exp_{\bar{\xi}}\left[\norm{\gw(\zv,\bar{\xi})}_{\weight'}^2\right]\le 16$, for all $\zv\in\mbf{Z}$, for all $\thv'\in\Theta$, and for all $\weight'\in\mathcal{W}$.
   			
   		\end{enumerate}
   		
   	\end{lemma}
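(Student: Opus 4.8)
The plan is to establish the four items in sequence after first recording closed forms for the three blocks of the exact gradient. Substituting $\mth=\phim\thv$, $\ul=\psim\lv$ and $\cw=\cma\weight$ into $\elag$ and differentiating gives $\nabla_{\thv}\elag(\zv)=\phim^\intercal(\cw-\op^*\ul+\cbg\cdot\mbf{1})$, $\nabla_{\lv}\elag(\zv)=\psim^\intercal(\initial-\op\mth)$ and $\nabla_{\weight}\elag(\zv)=\cma^\intercal\mth-\efev-2\cdot\mbf{1}$, which is exactly $\mbf{G}(\zv)$ after the sign flips built into $\gv$. Next I would rewrite each block as a nested expectation over the sampling distributions of Definition~\ref{def:stochastic_gradients}. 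Using $\op^*=\tfrac{1}{1-\gamma}(\bmat-\gamma\pmat)$ and $\innerprod{\mbf{1}}{\phiv_i}=1$, the $i$-th coordinate of $\nabla_{\thv}\elag(\zv)$ equals $\Exp_{(x,a)\sim\phiv_i}\Exp_{y\sim P(\cdot\mid x,a)}[\cost_{(x,a)}^\intercal\weight-\tfrac{1}{1-\gamma}(\psiv_x-\gamma\psiv_y)^\intercal\lv]+\cbg$; using $\mth=\sum_i\theta_i\phiv_i$ and the adjoint identity again, $\psim^\intercal\op\mth=\sum_i\theta_i\Exp_{(x,a)\sim\phiv_i}\Exp_{y\sim P(\cdot\mid x,a)}[\tfrac{1}{1-\gamma}(\psiv_x-\gamma\psiv_y)]$, $\psim^\intercal\initial=\Exp_{x'\sim\initial}[\psiv_{x'}]$, and $\cma^\intercal\mth=\sum_i\theta_i\Exp_{(x,a)\sim\phiv_i}[\cost_{(x,a)}]$.

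With these identities, item~1 follows by inspection. For the $\thv$-estimator, conditioning on the event $\{\hi=i\}$ (probability $1/n_\mu$) cancels the $n_\mu$ prefactor, the remaining randomness $(\hx,\ha)\sim\phiv_i$, $\hy\sim P(\cdot\mid\hx,\ha)$ reproduces the coordinate expression above, and the coordinates $j\neq\hi$ are zero and contribute $0$; hence $\Exp_{\hat\xi}[\gth(\zv,\hat\xi)]=\nabla_{\thv}\elag(\zv)$. For $\gl$ and $\gw$ the coordinate index is already drawn from $\thv$, so no importance reweighting is needed and the expectations reproduce $\psim^\intercal(\initial-\op\mth)$ and $\cma^\intercal\mth-\efev-2\cdot\mbf{1}$ directly.

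Items~2 and~3 are pointwise bounds. The vector $\gth(\zv,\hat\xi)$ has a single nonzero entry, whose numerator I would bound by the triangle inequality together with Lemma~\ref{lemma:bounds} ($\norm{\cw}_\infty\le1$ and $\abs{\psiv_x^\intercal\lv}=\abs{\ul(x)}\le\beta$): $\abs{(1-\gamma)\cost_{(\hx,\ha)}^\intercal\weight-(\psiv_{\hx}-\gamma\psiv_{\hy})^\intercal\lv+2\beta}\le(1-\gamma)+(1+\gamma)\beta+2\beta\le4\beta$, the last step using $\beta\ge1$. This yields $\norm{\gth(\zv,\hat\xi)}_\infty\le\tfrac{4\beta n_\mu}{1-\gamma}$, and since a local norm keeps only the active coordinate and $\theta'_{\hi}\le1$, also $\norm{\gth(\zv,\hat\xi)}_{\thv'}^2=\theta'_{\hi}(\gv_{\thv,\hi}(\zv,\hat\xi))^2\le\tfrac{16\beta^2 n_\mu^2}{(1-\gamma)^2}$. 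For $\gl(\zv,\tilde\xi)=\psiv_{\txp}-\tfrac{\psiv_{\tx}-\gamma\psiv_{\ty}}{1-\gamma}$, the absolute values of the three $\psi$-row coefficients sum to $1+\tfrac{1+\gamma}{1-\gamma}=\tfrac{2}{1-\gamma}$, and each row obeys $\norm{\psiv_x}_\infty\le1$, $\norm{\psiv_x}_2\le\sqrt{n_u}$, so $\norm{\gl(\zv,\tilde\xi)}_\infty\le\tfrac{2}{1-\gamma}\le\tfrac{4\beta}{1-\gamma}$ and $\norm{\gl(\zv,\tilde\xi)}_2^2\le\tfrac{4n_u}{(1-\gamma)^2}$. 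For $\gw(\zv,\bar\xi)=\cost_{(\bx,\ba)}-\efev-2\cdot\mbf{1}$, each coordinate is at most $\abs{c_i(\bx,\ba)}+\abs{\rho_{\cost_i}(\widehat{\expert})}+2\le1+1+2=4$, the middle term being a truncated geometric sum $\abs{\rho_{\cost_i}(\widehat{\expert})}\le(1-\gamma)\sum_{t=0}^{H}\gamma^t\le1$; hence $\norm{\gw(\zv,\bar\xi)}_\infty\le4$ and $\norm{\gw(\zv,\bar\xi)}_{\weight'}^2\le\norm{\gw(\zv,\bar\xi)}_\infty^2\le16$ since $\weight'\in\Delta_{[n_c]}$.

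Finally, for item~4 the $\gl$ and $\gw$ statements are immediate from the almost-sure bounds of item~3, and the only place where one actually gains is $\gth$: by the uniform coordinate sampling, $\Exp_{\hat\xi}[\norm{\gth(\zv,\hat\xi)}_{\thv'}^2]=\sum_{i=1}^{n_\mu}\tfrac{1}{n_\mu}\theta'_i\,\Exp[(\gv_{\thv,i}(\zv,\hat\xi))^2\mid\hi=i]\le\tfrac{1}{n_\mu}\big(\tfrac{4\beta n_\mu}{1-\gamma}\big)^2\sum_{i}\theta'_i=\tfrac{16\beta^2 n_\mu}{(1-\gamma)^2}$, an $n_\mu$-factor better than the deterministic bound, and it is exactly this improvement that makes the final iteration complexity scale with $n_\mu$ rather than $n_\mu^2$. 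I expect the only genuinely delicate step to be the $\thv$-block of item~1: one must verify that replacing the inner expectation $\Exp_{y\sim P(\cdot\mid\hx,\ha)}[\psiv_y^\intercal\lv]$ by a single sampled transition $\psiv_{\hy}^\intercal\lv$, while simultaneously importance-weighting the coordinate choice by $n_\mu$ and absorbing the constant offset through $\innerprod{\mbf{1}}{\phiv_i}=1$, produces an exactly unbiased estimate of $\nabla_{\thv}\elag$; everything else reduces to triangle-inequality bookkeeping on top of Lemmas~\ref{lemma:bounds}--\ref{lemma:bounds_operators}.
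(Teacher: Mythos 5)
Your proposal is correct and follows essentially the same route as the paper's proof: unbiasedness by writing each gradient block as a nested expectation and noting the $1/n_\mu$ sampling probability cancels the $n_\mu$ importance weight, pointwise bounds by the triangle inequality on top of Lemmas~\ref{lemma:bounds}--\ref{lemma:bounds_operators}, local norms dominated by the $\norm{\cdot}_\infty$-norm, and the $n_\mu$-factor gain in the second moment of the $\thv$-block from the uniform coordinate sampling. All the constants check out, so there is nothing to add.
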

   	\begin{proof}
   		\begin{enumerate}
   			\item For all $i\in[n_\mu]$, we have that
   		\begin{align*}
   		\Exp_{\hat{\xi}}\left[\mbf{g}_{\thv,i}(\zv,\hat{\xi})\right]=&\sum_{\hi=1}^{n_\mu}\sum_{\hx,\ha,\hy}\frac{1}{n_\mu}\phiv_{\hi}(\hx,\ha)P(\hy|\hx,\ha)\mbf{g}_{\thv,i}(\zv,\hat{\xi})\\
   			=& \sum_{\hx,\ha,\hy} \frac{1}{n_\mu}\phiv_{i}(\hx,\ha)P(\hy|\hx,\ha)	\frac{n_{\mu}\left( (1-\gamma)\cost_{(\hx,\ha)}^\intercal\weight-(\mbs{\psi}_{\hx}-\gamma\mbs{\psi}_{\hy})^\intercal\lv+2\beta\right)}{1-\gamma}\\
   				=&\innerprod{\phiv_i}{\cw-\op^*\ul+\cbg\cdot\mbf{1}}=\frac{\partial}{\partial\theta_i}\elag(\zv).
   				\end{align*}
   			Moreover, we have that
   			\begin{align*}
   			\Exp_{\tilde{\xi}}\left[\gl(\zv,\tilde{\xi})\right]=& \sum_{i=1}^{n_\mu}\sum_{x,a,y,x'}\theta_i\phiv_i(x,a)P(y|x,a)\nu_0(x')\left(\psiv_{x'}-\frac{\psiv_x-\gamma\psiv_y}{1-\gamma}\right)\\
   			=&\underbrace{ \sum_{x'} \nu_0(x')\psiv_{x'}}_{=\psim^\intercal\initial}-\frac{1}{1-\gamma}(\underbrace{\sum_{x,a}\mu_{\thv}(x,a)\psiv_x}_{=\psim^\intercal\bmat^\intercal\mth}-\gamma\underbrace{\sum_{x,a}\mu_{\thv}(x,a)\sum_y P(y|x,a)\psiv_y}_{=\psim^\intercal\pmat^\intercal\mth})\\
   			=& \psim^\intercal\left(\initial-\op\mth\right)=\nabla_{\lv}\elag(\zv).
   			\end{align*}
   			Finally, we have that
   				\begin{align*}
   				\Exp_{\bar{\xi}}\left[\gw(\zv,\bar{\xi})\right]=\sum_{i=1}^{n_\mu}\sum_{x,a}\theta_i\phiv_i(x,a)\cost_{(x,a)}-\efev-2\cdot\mbf{1}
   				= \mbf{C}^\intercal\mth-\efev-2\cdot\mbf{1}=\nabla_{\weight}\elag(\zv).
   				\end{align*}
   			
   		\item It holds that 
   		\begin{align*}
   		\norm{\gth(\zv,\hat{\xi})}_\infty=\left|n_{\mu}\left(\cw(\hx,\ha)-\frac{\ul(\hx)-\gamma\ul(\hy)-2\beta}{1-\gamma}\right)\right|\le\frac{4\beta n_\mu}{1-\gamma},
   		\end{align*}
   		where we used the triangle inequality and the bounds in Lemma~\ref{lemma:bounds}. Moreover, it holds that
   		\begin{align*}
   		\norm{\gl(\zv,\tilde{\xi})}_\infty=\norm{\psiv_{\txp}-\frac{\psiv_{\tx}-\gamma\psiv_{\ty}}{1-\gamma}}_\infty\le\frac{2}{1-\gamma},
   		\end{align*}
   		where we used the triangle inequality and that $\norm{\psiv_x}_\infty\le 1$, for all $x\in\sspace$. Finally, it holds that
   			\begin{align*}
   			\norm{\gw(\zv,\bar{\xi})}_\infty=\norm{\cost_{(\bx,\ba)}-\efev-2\cdot\mbf{1}}_{\infty}\le 4,
   			\end{align*}
   			where we used the triangle inequality, that $\norm{\cost_{(\bx,\ba)}}_\infty\le 1$, and that $\norm{\efev}_\infty\le 1$.
   			\item Since $\thv'\in\Theta=\Delta_{[n_\mu]}$, we have that
   			\[
   			\norm{\gth(\zv,\hat{\xi})}_{\thv'}^2\le	\norm{\gth(\zv,\hat{\xi})}_{\infty}^2\le\frac{16\beta^2 n_\mu^2}{(1-\gamma)^2}.
   			\]
   			Moreover,
   			\[
   			\norm{\gl(\zv,\tilde{\xi})}_2^2=\norm{\psiv_{\txp}-\frac{\psiv_{\tx}-\gamma\psiv_{\ty}}{1-\gamma}}_2^2\le\frac{4n_u}{(1-\gamma)^2},
   			\]
   			where we used the triangle inequality and that $\norm{\psiv_x}_2\le \sqrt{n_u}$, for all $x\in\sspace$. Finally, since $\weight'\in\mathcal{W}=\Delta_{[n_c]}$, we have
   			\[
   			\norm{\gw(\zv,\bar{\xi})}_{\weight'}^2\le\norm{\gw(\zv,\bar{\xi})}_{\infty}^2\le 16.
   			\]
   			\item We need only to prove the first inequality. We have,
   			\begin{align*}
   			\Exp_{\hat{\xi}}\left[\norm{\gth(\zv,\hat{\xi})}_{\thv'}^2\right]=&\Exp_{\hat{\xi}}\left[\gv^2_{\thv,\hi}(\zv,\hat{\xi})\theta'_{\hi}\right]\\
   			=&\sum_{\hi=1}^{n_\mu}\sum_{\hx,\ha,\hy}\frac{1}{n_\mu}\phiv_{\hi}(\hx,\ha)P(\hy|\hx,\ha)\mbf{g}^2_{\thv,\hi}(\zv,\hat{\xi})\theta^\prime_{\hi}\\
   			\le&\frac{1}{n_\mu}\norm{\gth(\zv,\hat{\xi})}^2_\infty\sum_{\hi=1}^n\theta_{\hi}^\prime\le\frac{16\beta^2 n_\mu}{(1-\gamma)^2}.
   			\end{align*}

   		\end{enumerate}
   		
   	\end{proof}

    \section{Proof of Theorem~\ref{th:strong}} \label{D}
    We are now going to prove Theorem~\ref{th:strong}. Our optimization variable is $\mbf{z}=(\thv,\lv,\weight)\in\mbf{Z}$, where $\mbf{Z}\triangleq\Theta\times\Lambda\times\mathcal{W}$ is the decision space. The update rule of stochastic primal-dual mirror descent in Algorithm~\ref{alg:main} is
    	\begin{equation}\label{eq:update-rule}
    	\mbf{z}_{n+1}=\arg\min_{\mbf{z}\in\mbf{Z}} \big(\innerprod{\gv(\zv_n,\xi_n)}{\mbf{z}}+\tfrac{1}{\eta}B_{R}(\mbf{z}||\mbf{z}_n)\big),
    	\end{equation}
    	where $\gv(\zv,\xi)\triangleq
    	\begin{bmatrix}
    	\gth(\zv,\hat{\xi})^\intercal	 &
    	-\gl(\zv,\tilde{\xi})^\intercal		 &
    	-\gw(\zv,\bar{\xi})^\intercal		
    	\end{bmatrix}^\intercal
    	$ is given by Definition~\ref{def:stochastic_gradients}, and $B_R$ is the following \emph{Bregman divergence}
    	\begin{equation}\label{eq:Bregman-divergence}
    	B_R(\zv||\zv')\triangleq \textup{KL}(\thv||\thv')+\frac{n_u}{2\beta^2}\norm{\lv-\lv'}_2^2+ \textup{KL}(\weight||\weight').
    	\end{equation}
   
    \textbf{Theorem} $\mbf{1.}$ \emph{Let $\widehat{\pi}_N$ be the output of running Algorithm~\ref{alg:main} for $N=\max\left\{\mathcal{O}\left(\frac{\beta^2 n_\mu \log\left(\frac{1}{\delta}\right)}{(1-\gamma)^2\varepsilon^2}\right),\mathcal{O}\left(\frac{\beta\sqrt{n_\mu^3 \log\left(\frac{1}{\delta}\right)}}{(1-\gamma)\varepsilon}\right)\right\}$ iterations, with $m=\frac{8\log(\frac{4n_c}{\delta})}{\varepsilon^2}$ expert trajectories of length $H=\frac{1}{1-\gamma}\log(\frac{2}{\varepsilon})$, and learning rate $\eta=\frac{1-\gamma}{\beta\sqrt{N n_\mu}}$ . Then, with probability $1-\delta$ it holds that
    	$
    	\rho_{\true}(\widehat{\pi}_N)-\rho_{\true}(\expert)\le\varepsilon+\varepsilon_{\textup{approx},\widehat{\thv}_N}+\alpha^\star.
    	$
    If in addition Assumption~\ref{ass:coherence} is satisfied,  then with probability $1-\delta$, it holds that
    $
    \rho_{\true}(\widehat{\pi}_N)-\rho_{\true}(\expert)\le\varepsilon+\varepsilon_{\textup{approx},w}+\alpha^\star.
    $}
    \bigskip
   \newcommand{\vpn}{\val_{\weight_N^\star}^{\pN}}
   
   	Note that because the definition of the total expected cost is scaled by a factor $(1-\gamma)$, we finally have sample complexity $\max\left\{\mathcal{O}\left(\frac{\beta^2 n_\mu \log\left(\frac{1}{\delta}\right)}{(1-\gamma)^4\varepsilon^2}\right),\mathcal{O}\left(\frac{\beta\sqrt{n_\mu^3 \log\left(\frac{1}{\delta}\right)}}{(1-\gamma)^2\varepsilon}\right)\right\}$ for the conventional discounted MDP setting. Similarly, the expert sample complexity is $m=\frac{8\log(\frac{4n_c}{\delta})}{(1-\gamma)^2\varepsilon^2}$. The sample complexity scales linearly with the number of features  and does not depend on the number of states and actions.
    
    \begin{proofof}{Proof of Theorem~\ref{th:strong}} By Proposition~\ref{prop:strong} applied to $\widehat{\zv}_N=(\widehat{\thv}_N,\widehat{\lv}_N,\widehat{\weight}_N)$ and $\widehat{\pi}_N\triangleq\pi_{\mv_{\widehat{\thv}_N}}$ we get  
    	$$
    	\rho_{\true}(\pN)-\rho_{\true}(\expert)\le\sadav+\varepsilon_{\textup{approx},\thN}+\alpha^\star.
    	$$
    		where the approximation error $\varepsilon_{\textup{approx},\thN}$ is no larger than
    	$
    	\frac{2}{1-\gamma}\big(\beta\min_{\thv\in\Theta}\norm{\mth-\mv_{\apprentice}}_1+\min_{\lv\in\Lambda}\norm{\ul-\val_{\weight_N^\star}^{\pN}}_\infty\big),
    	$
    		and 
    		$
    		\weight^\star_N\triangleq \arg\max_{\weight\in\mathcal{W}}\left[\rho_{\weight}(\pN)-\rho_{\weight}(\expert)\right].
    		$
    		
    		Moreover, under Assumption~\ref{ass:coherence}, it holds that
    		\[
    		\rho_{\true}(\pth)-\rho_{\true}(\expert)\le3\epsilon_{\textup{sad}}(\boldsymbol{\theta},\boldsymbol{\lambda},\weight)+\varepsilon_{\textup{approx},w}+\alpha^\star.
    		\]
    	
    		So, in both cases we need to determine how many samples $N$ and expert trajectories $m$ are needed so that $\sadav\le\varepsilon$ with probability at least $1-\delta$. We have,
    		\begin{align}
    			\sadav=&\max_{(\lv,\weight)\in\Lambda\times\mathcal{W}}\rlag(\thN,\lv,\weight)-\min_{\thv\in\Theta}\rlag(\thv,\lN,\wN) \\
    		=&\max_{(\lv,\weight)\in\Lambda\times\mathcal{W}}\mlag(\thN,\lv,\weight)-\min_{\thv\in\Theta}\mlag(\thv,\lN,\wN)  \\
    		=&\max_{(\thv,\lv,\weight)\in\mbf{Z}}\left[\elag(\thN,\lv,\weight)-\elag(\thv,\lN,\wN)+\innerprod{\efev-\fev}{\weight-\wN}\right] \\
    		\le&\max_{(\thv,\lv,\weight)\in\mbf{Z}}\left[\elag(\thN,\lv,\weight)-\elag(\thv,\lN,\wN)+\norm{\efev-\fev}_\infty\norm{\weight-\wN}_1\right] \label{eq:almost-last}\\
    		\le&\max_{(\thv,\lv,\weight)\in\mbf{Z}}\left[\elag(\thN,\lv,\weight)-\elag(\thv,\lN,\wN)\right]+2\norm{\efev-\fev}_\infty, \label{eq:last}
    		\end{align}
    		where we used  H\"{o}lder's inequality in~(\ref{eq:almost-last}) and the triangle inequality in~(\ref{eq:last}). 
    		
    		By virtue of Lemma~1 in~\cite{Syed:2007} we have that for $
    		\norm{\efev-\fev}_\infty\le\frac{\varepsilon}{4}
    		$
    		 to hold with probability at least $1-\frac{\delta}{2}$, it suffices that $m\geq\frac{8\log(\frac{4n_c}{\delta})}{\varepsilon^2}$ and $H\geq\frac{1}{1-\gamma}\log(\frac{2}{\varepsilon})$.
    		Note that  Lemma~1 in~\cite{Syed:2007} is a variant of the Hoeffding's inequality (Theorem 2.5 in~\cite{McDiarmid:1998}), since we have to take into account that $\efev$ is a biased estimate of $\fev$ because the trajectories are truncated. By combining this last bound with~(\ref{eq:last}), we get that if $m\geq\frac{8\log(\frac{4n_c}{\delta})}{\varepsilon^2}$ and $H\geq\frac{1}{1-\gamma}\log(\frac{2}{\varepsilon})$, then with probability at least $1-\frac{\delta}{2}$ it holds that
    			\begin{equation}\label{eq:bound:wrt:elag}
    				\sadav\le\widehat{\epsilon}_{\textup{sad}}(\zN)+\frac{\varepsilon}{2},
    			\end{equation}
    	where $$\widehat{\epsilon}_{\textup{sad}}(\zN)\triangleq\max\limits_{(\lv,\weight)\in\Lambda\times\mathcal{W}}\elag(\thN,\lv,\weight)-\min\limits_{\thv\in\Theta}\elag(\thv,\lN,\wN)$$ is the empirical saddle-point residual for $\zN$. All in all, we need $\widehat{\epsilon}_{\textup{sad}}(\zN)\le\frac{\varepsilon}{2}$ with confidence $1-\frac{\delta}{2}$. 
    	
    	By linearity of $\elag(\cdot,\lv,\weight)$ we have that for all $n\in\mathbb{N}$, and for all $\thv\in\Theta$,
    	\begin{equation}\label{eq:convexity}
    	\elag(\thv_n,\lv_n,\weight_n)-\elag(\thv,\lv_n,\weight_n)=\innerprod{\nabla_{\thv}\elag(\thv_n,\lv_n,\weight_n)}{\thv_n-\thv}.
    	\end{equation}
    	Similarly, by linearity of $\elag(\thv,\cdot,\cdot)$ we have that for all $n\in\mathbb{N}$, and for all $(\lv,\weight)\in\Lambda\times\mathcal{W}$,
    	\begin{equation}\label{eq:bound:concavity}
    	\elag(\thv_n,\lv,\weight)-\elag(\thv_n,\lv_n,\weight_n)=\innerprod{\nabla_{\lv}\elag(\thv_n,\lv_n.\weight_n)}{\lv-\lv_n}+\innerprod{\nabla_{\weight}\elag(\thv_n,\lv_n,\weight_n)}{\weight-\weight_n}.
    		\end{equation}
   Therefore,
    \begin{align*}
    \widehat{\epsilon}_{\textup{sad}}(\zN)=&\frac{1}{N}\left(\max_{(\lv,\weight)\in\Lambda\times\mathcal{W}}\sum_{n=1}^N\elag(\thv_n,\lv,\weight)-\min_{\thv\in\Theta}\sum_{n=1}^n\elag(\thv,\lv_n,\weight_n)\right)              
    =\frac{1}{N}\max_{\zv\in\mbf{Z}}\sum_{n=1}^N\innerprod{\mbf{G}(\zv_n)}{\zv_n-\zv}, 
    \end{align*}
    where the first equality follows once more by linearity of $\elag(\cdot,\lv,\weight)$ and  $\elag(\thv,\cdot,\cdot)$,  and the second equality by (\ref{eq:convexity}) and~(\ref{eq:bound:concavity}).
    
    \newcommand{\bthv}{\bar{\thv}}
    \newcommand{\blv}{\bar{\lv}}
    \newcommand{\bweight}{\bar{\weight}}
    \newcommand{\bzv}{\bar{\zv}}
    \newcommand{\cthv}{\mathring{\thv}}
    \newcommand{\clv}{\mathring{\lv}}
    \newcommand{\cweight}{\mathring{\weight}}
    \newcommand{\czv}{\mathring{\zv}}
    \newcommand{\zetav}{\boldsymbol{\zeta}}

    Consider now the sequence of \emph{ghost iterates} $(\bzv_n)_{n=1}^N$ in $\mbf{Z}$ generated by taking gradient step with $\mbf{G}(\zv_n)-\gv(\zv_n,\xi_n)$
    coupled with each iteration, i.e.,
    \[
        \bzv_1=\zv_1,\quad	\bzv_{n+1}=\arg\min_{\bzv\in\mbf{Z}} \big(\innerprod{\mbf{G}(\zv_n)-\gv(\zv_n,\xi_n)}{\bzv}+\tfrac{1}{\eta}B_{R}(\bzv||\bzv_n)\big),
    \]
    where $B_R$ is the same Bregman divergence as in (\ref{eq:Bregman-divergence}).
    
    We then have
    \begin{equation}\label{eq:decomposition}
    \max_{\zv\in\mbf{Z}}\sum_{n=1}^N\innerprod{\mbf{G}(\zv_n)}{\zv_n-\zv}\le\underbrace{\athroisma{n}{1}{N}{\innerprod{\Delta_n}{\zv_n-\bzv_n}}}_{\triangleq T_1}+\underbrace{\max_{\zv\in\mbf{Z}}\athroisma{n}{1}{N}{\innerprod{\gv_n}{\zv_n-\zv}}}_{\triangleq T_2}+\underbrace{\max_{\bzv\in\mbf{Z}}\athroisma{n}{1}{N}{\innerprod{\Delta_n}{\bzv_n-\bzv}}}_{\triangleq T_3},
    \end{equation}
    where 
    \[
    \gv_n\triangleq \gv(\zv_n,\xi_n)\quad \text{and}\quad \Delta_n\triangleq \mbf{G}(\zv_n)-\gv(\zv_n,\xi_n).
    \]
     Moreover, we set for brevity $\gv_{n,\thv}\triangleq\gv_{\thv}(\zv_n,\hat{\xi}_n)$, $\gv_{n,\lv}\triangleq\gv_{\lv}(\zv_n,\tilde{\xi}_n)$, $\gv_{n,\weight}\triangleq\gv_{\weight}(\zv_n,\bar{\xi}_n)$, $\Delta_{n,\thv}\triangleq\nabla_{\thv}\elag(\zv_n)-\gv_{\thv}(\zv_n,\hat{\xi}_n)$, $\Delta_{n,\lv}\triangleq\nabla_{\lv}\elag(\zv_n)-\gv_{\lv}(\zv_n,\tilde{\xi}_n)$, and $\Delta_{n,\weight}\triangleq\nabla_{\weight}\elag(\zv_n)-\gv_{\weight}(\zv_n,\bar{\xi}_n)$.                          
    
    We are now going to bound each term in~(\ref{eq:decomposition}) separately.
    
    \textbf{Bounding the term $\boldsymbol{T_1}$.} Let $\xi_n=(\hat{\xi}_n,\tilde{\xi}_n,\bar{\xi}_n)$ be the generated sample at round $n$ and let $\mathcal{F}_n=\sigma(\xi_1,\ldots,\xi_n)$ be the $\sigma$-algebra generated by the history process. The iterates $\zv_n=\zv_n(\xi_1,\ldots,\xi_{n-1})$ and $\bzv_n=\bzv_n(\xi_1,\ldots,\xi_{n-1})$ are functions of $\xi_1,\ldots,\xi_{n-1}$ (and thus they are random). Since in addition, $\zv_n$ and $\bzv_n$ are independent of the randomness of $\xi_n$ (we sample $\xi_n$ after the definition of $\zv_n$ and $\bzv_n$), we get by Lemma~\ref{lemma:gradient:bounds} that the sequence of random variables $Y_n=\innerprod{\Delta_n}{\zv_n-\bzv_n}$ forms a \emph{martingale difference}, i.e., $Y_n$ is $\mathcal{F}_n$-measurable and $\Exp\left[Y_n\mid\fn\right]=0$. In order to bound the term $T_1=\athroisma{n}{1}{N}{Y_n}$, we will use a Bernstein martingale concentration inequality (Theorem 3.15 in~\cite{McDiarmid:1998}). In particular, for any $\varepsilon\geq 0$,
    \begin{equation}\label{eq:Bernstein}
    \Prob\left[\athroisma{n}{1}{N}{Y_n}\geq\varepsilon\right]\le\exp\left(\frac{-\varepsilon^2}{2N\sigma^2(1+\frac{b\varepsilon}{3N\sigma^2})}\right),
    \end{equation}
    where $|Y_n|\le b$, for all $n\in\mathbb{N}$ and $\Var\left[ Y_n\mid\fn\right]\le\sigma^2$, for all $n\in\mathbb{N}$. We think of the term $\frac{b\varepsilon}{3N\sigma^2}$ as a negligible error term. In order to use~(\ref{eq:Bernstein}) we need to compute the constants $b$ and $\sigma^2$.
    We have
    \[
    Y_n=\innerprod{\Delta_n}{\zv_n-\bzv_n}=\innerprod{\Delta_{n,\thv}}{\thv_n-\bthv_n}-\innerprod{\Delta_{n,\lv}}{\lv_n-\blv_n}-\innerprod{\Delta_{n,\weight}}{\weight_n-\bweight_n}.
    \]
    For the term $\innerprod{\Delta_{n,\thv}}{\thv_n-\bthv_n}$ we have
    \begin{align}
    \left|\innerprod{\Delta_{n,\thv}}{\thv_n-\bthv_n}\right|\le&\left|\innerprod{\cost_{\weight_n}-\op^*\uv_{\lv_n}+C_{\beta,\gamma}\cdot\mbf{1}}{\phim\thv_n-\phim\bthv_n}\right|+\left|\innerprod{\gv_{n,\thv}}{\thv_n-\bthv_n}\right| \tag{By triangle inequality}\\
    \le& \norm{\cost_{\weight_n}-\op^*\uv_{\lv_n}+C_{\beta,\gamma}\cdot\mbf{1}}_\infty\norm{\mv_{\thv_n}-\mv_{\bthv_n}}_1 +\norm{\gv_{n,\thv}}_\infty\norm{\thv_n-\bthv_n}_1 \tag{By H\"{o}lder's inequality}\\
    \le& \frac{8\beta}{1-\gamma}+\frac{8\beta n_\mu}{1-\gamma}=\frac{8\beta(n_\mu+1)}{1-\gamma}. \tag{By Lemmas~\ref{lemma:bounds}, \ref{lemma:bounds_operators} and~\ref{lemma:gradient:bounds}}
    \end{align}
    Moreover, for the term $\innerprod{-\Delta_{n,\lv}}{\lv_n-\blv_n}$, we have
    \begin{align}
    \left|\innerprod{-\Delta_{n,\lv}}{\lv_n-\blv_n}\right|\le& \left|\innerprod{\initial-\op\mv_{\thv_n}}{\psim\lv_n-\psim\blv_n}\right|+\left|\innerprod{\gv_{n,\lv}}{\lv_n-\blv_n}\right| \tag{By triangle inequality}\\
    \le& \norm{\initial-\op\mv_{\thv_n}}_1\norm{\uv_{\lv_n}-\uv_{\blv_n}}_\infty+\norm{\gv_{n,\lv}}_\infty\norm{\lv_n-\blv_n}_1 \tag{By H\"{o}lder's inequality}\\
    \le& \frac{8\beta}{1-\gamma}. \tag{By Lemmas~\ref{lemma:bounds}, \ref{lemma:bounds_operators} and~\ref{lemma:gradient:bounds}}
    \end{align}
     Finally, for the term $\innerprod{-\Delta_{n,\weight}}{\weight_n-\bweight_n}$, we have
     \begin{align}
     \left|\innerprod{-\Delta_{n,\weight}}{\weight_n-\bweight_n}\right|=& \left|\innerprod{\cost_{(\bar{x},\bar{a})}}{\weight_n-\bweight_n}+\innerprod{\mv_{\thv_n}}{\cma\weight_n-\cma\bweight_n}\right|  \tag{By Definition~\ref{def:stochastic_gradients}}  \\
     \le& \norm{\cost_{(\bar{x},\bar{a})}}_\infty\norm{\weight_n-\bweight_n}_1+\norm{\mv_{\thv_n}}_1\norm{\cost_{\weight_n}-\cost_{\bweight_n}}_\infty \tag{By H\"{o}lder's inequality}\\
     \le& 4. \tag{By Lemma~\ref{lemma:bounds}}
     \end{align}
     All in all, $|Y_n|\le b\triangleq\frac{8\beta(n_\mu+2)}{1-\gamma}+4$.
     
     We are now going to bound the conditional variance of $Y_n$. We have
      \begin{align}
      \Var\left[\innerprod{\Delta_{n,\thv}}{\thv_n-\bthv_n}\mid\fn\right]=&\Exp\left[\innerprod{\Delta_{n,\thv}}{\thv_n-\bthv_n}^2\mid\fn\right] \label{first} \\
       =& \Exp\left[\innerprod{\gv_{n,\thv}}{\thv_n-\bthv_n}^2\mid\fn \right] - \innerprod{\nabla_{\thv}\elag(\zv_n)}{\thv_n-\bthv_n}^2 \label{second} \\
      \le& \Exp\left[\innerprod{\gv_{n,\thv}}{\thv_n-\bthv_n}^2\mid\fn \right]\\
      =& \Exp\left[\gv_{n,\thv,\hi_n}^2(\theta_{n,\hi_n}-\bar{\theta}_{n,\hi_n})^2\mid\fn\right] \label{fifth}\\
      =& \sum_{i=1}^{n_\mu}\sum_{x,a,y}\frac{1}{n_\mu}\phi_i(x,a) P(y|x,a) \gv_{n,\thv,i}^2(\theta_{n,i}-\bar{\theta}_{n,i})^2 \label{sixth}\\
      \le&  \frac{1}{n_\mu} \norm{\gv_{n,\thv}}_\infty^2 \norm{\thv_n-\bthv_n}_2^2 \label{seventh}\\
       \le&  \frac{1}{n_\mu} \norm{\gv_{n,\thv}}_\infty^2 \norm{\thv_n-\bthv_n}_1^2 \label{eighth}\\
      \le&\frac{64\beta^2n_\mu}{(1-\gamma)^2} \label{last},
      \end{align}
      where in~(\ref{first}) we used that $\Exp[\innerprod{\Delta_{n,\thv}}{\thv_n-\bthv_n}\mid\fn]=0$, in~(\ref{second}) we used that the random variables $\zv_n-\bzv_n$ and $\thv_n-\bthv_n$ are $\fn$-measurable and that $\Exp\left[\gv_{n,\thv}\mid\fn\right]=\nabla_{\thv}\elag(\zv_n)$, in~(\ref{sixth}) and~(\ref{seventh}) we used Definition~\ref{def:stochastic_gradients}, in~(\ref{seventh}) we used that $\phiv_i\in\Delta_{\sspace\times\aspace}$ and $P(\cdot|x,a)\in\Delta_{\sspace}$, in~(\ref{eighth}) we used that $\norm{\thv}_2\le\norm{\thv}_1$, and in~(\ref{last}) we used the triangle inequality and Lemma~\ref{lemma:gradient:bounds}.
     
     By similar arguments, we have
      \begin{align}
      \Var[\innerprod{\Delta_{n,\lv}}{\lv_n-\blv_n}\mid\fn]\le \Exp\left[\innerprod{\gv_{n,\lv}}{\lv_n-\blv_n}^2\mid\fn\right]\le\norm{\gv_{n,\lv}}^2_\infty\norm{\lv_n-\blv_n}_1^2\le\frac{16\beta^2}{(1-\gamma^2)}.
      \end{align}
      Finally,
     \begin{align}
      \Var[\innerprod{\Delta_{n,\weight}}{\weight_n-\bweight_n}\mid\fn]\le \Exp\left[\innerprod{\gv_{n,\weight}}{\weight_n-\bweight}^2\mid\fn\right]\le\norm{\gv_{n,\weight}}^2_\infty\norm{\weight_n-\bweight_n}_1^2\le64.
    \end{align}
   Now, since $\hat{\xi}_n$, $\tilde{\xi}_n$, $\bar{\xi}_n$ are independent and both $\zv_n$ and $\bzv_n$ are $\fn$-measurable, we get that $\innerprod{\Delta_{n,\thv}}{\thv_n-\bthv_n}$, $\innerprod{\Delta_{n,\lv}}{\lv_n-\blv_n}$, and $\innerprod{\Delta_{n,\weight}}{\weight_n-\bweight_n}$ are independent given $\fn$. Therefore,
     \begin{align*}
     \Var[Y_n\mid\fn]=&\Var[\innerprod{\Delta_{n,\thv}}{\thv_n-\bthv_n}\mid\fn]+\Var[\innerprod{\Delta_{n,\lv}}{\lv_n-\blv_n}\mid\fn]+\Var[\innerprod{\Delta_{n,\weight}}{\weight_n-\bweight_n}\mid\fn] \\
     \le& \,\,\sigma^2\triangleq \frac{16\beta^2(4n_\mu+1)}{(1-\gamma)^2}+64. 
       \end{align*}
     Therefore by the Bernstein martingale concentration inequality~(\ref{eq:Bernstein}) we have with probability at least $1-\delta$
    \begin{equation}
    \athroisma{n}{1}{N}{Y_n}\le\sqrt{2N\sigma^2(1+o(1))\log\left(\frac{1}{\delta}\right)}=\mathcal{O}\left(\frac{\beta\sqrt{N n_\mu\log(\frac{1}{\delta})}}{1-\gamma}\right).
    \end{equation}
  
   \textbf{Bounding the term $\boldsymbol{T_2}$.} By Lemma~\ref{lemma:nonnegative_gradients}, we know that $\gv_{n,\thv}\geq\mbf{0}$ and $-\gv_{n,\weight}\le\mbf{0}$, for all $n\in\mathbb{N}$. Therefore, by Lemma~10 and Lemma~11 in~\cite{Jin:2020}\footnote{Note that Lemma~11 in~\cite{Jin:2020} still holds for $\boldsymbol{\gamma}_t\geq -\boldsymbol{1}$ since the inequality $e^{-a}\le 1-a+a^2$ holds for all $a>-1$} for a fixed $\zv\in\mbf{Z}$, we have that
   \begin{align}
   \athroisma{n}{1}{N}{\innerprod{\gv_n}{\zv_n-\zv}}\le&\frac{1}{\eta}B_R(\zv||\zv_1)+\athroisma{n}{1}{N}{\left[ \innerprod{\gv_n}{\zv_n-\zv_{n+1}}-\frac{1}{\eta}B_R(\zv_{n+1}||\zv_n) \right]}\\
   \le& \frac{1}{\eta}B_R(\zv||\zv_1)+\frac{\eta}{2}\athroisma{n}{1}{N}{\left[\norm{\gv_{n,\thv}}_{\thv_n}^2+\frac{\beta^2}{n_u}\norm{\gv_{n,\lv}}_2^2+\norm{\gv_{n,\weight}}_{\weight_n}^2 \right]}.
   \end{align}
   Now, we define
   \[
   Z_n= \left[\norm{\gv_{n,\thv}}_{\thv_n}^2+\frac{\beta^2}{n_u}\norm{\gv_{n,\lv}}_2^2+\norm{\gv_{n,\weight}}_{\weight_n}^2 \right] -\Exp\left[\norm{\gv_{n,\thv}}_{\thv_n}^2+\frac{\beta^2}{n_u}\norm{\gv_{n,\lv}}_2^2+\norm{\gv_{n,\weight}}_{\weight_n}^2\mid\fn \right].
   \]
   Clearly $Z_n$ is an $\mathcal{F}_n$-martingale difference. Therefore, by the Azuma-Hoeffding inequality~(Theorem 3.14 in~\cite{McDiarmid:1998}), with probability at least $1-\delta$, it holds that
   \begin{equation}
   	\athroisma{n}{1}{N}{Z_n}\le\sqrt{\frac{N\log\left(\frac{1}{\delta}\right)}{2}}\cdot M_1,
   \end{equation}
   where $|Z_n|\le M_1$, for all $n\in\mathbb{N}$.
   
   Putting them all together we  get that with probability at least $1-\delta$
   \begin{equation}\label{eq:putting-together}
   \max_{\zv\in\mbf{Z}}\athroisma{n}{1}{N}{\innerprod{\gv_n}{\zv_n-\zv}}\le\frac{1}{\eta}\underbrace{\max_{\zv\in\mbf{Z}}B_R(\zv||\zv_1)}_{\le M_2}+\frac{\eta}{2}\underbrace{\athroisma{n}{1}{N}{\Exp\left[\norm{\gv_{n,\thv}}_{\thv_n}^2+\frac{\beta^2}{n_u}\norm{\gv_{n,\lv}}_2^2+\norm{\gv_{n,\weight}}_{\weight_n}^2\mid\fn \right]}}_{\le M_3}+\frac{\eta M_1}{2}\sqrt{\frac{N\log\left(\frac{1}{\delta}\right)}{2}}.
   \end{equation}
   By using the gradient bounds in Lemma~\ref{lemma:gradient:bounds}, we get 
   \[
   M_3\triangleq N\left(\frac{16\beta^2n_\mu}{(1-\gamma)^2}+\frac{\beta^2}{n_u}\frac{4n_u}{(1-\gamma)^2}+16\right)=\mathcal{O}\left(\frac{\beta^2 n_\mu N}{(1-\gamma)^2}\right)
   \] 
   and
   \[
   M_1\triangleq 2 \left(\frac{16\beta^2n_\mu^2}{(1-\gamma)^2}+\frac{\beta^2}{n_u}\frac{4n_u}{(1-\gamma)^2}+16\right)=\mathcal{O}\left(\frac{\beta^2 n_\mu^2}{(1-\gamma)^2}\right).
   \]
   Moreover, by considering the Bregman divergence given in~(\ref{eq:Bregman-divergence}) and that $\zv_1=(\tfrac{1}{n_\mu}\mbf{1},\mbf{0},\tfrac{1}{n_c}\mbf{1})$, we get that
   \[
   M_2\triangleq \left(\log n_\mu+\log n_c+\tfrac{1}{2}\right).
   \]
   Therefore, by setting $\eta\triangleq\frac{1-\gamma}{\beta\sqrt{N n_\mu}}$ we get by~(\ref{eq:putting-together}) that with probability at least $1-\delta$
   \begin{equation}
   \max_{\zv\in\mbf{Z}}\athroisma{n}{1}{N}{\innerprod{\gv_n}{\zv_n-\zv}}\le\mathcal{O}\left(\frac{\beta\sqrt{N n_\mu}}{1-\gamma}\right)+\mathcal{O}\left(\frac{\beta\sqrt{N n_\mu}}{1-\gamma}\right)+\mathcal{O}\left(\frac{\beta\sqrt{n_\mu^3\log\left(\frac{1}{\delta}\right)}}{1-\gamma}\right).
   \end{equation}
  \textbf{Bounding the term $\boldsymbol{T_3}$.} By the choice of $\eta\triangleq\frac{1-\gamma}{\beta\sqrt{N n_\mu}}$ and Lemma~\ref{lemma:gradient:bounds} we get that for all $n$, it holds that $\norm{\eta\Delta_{n,\thv}}_\infty\le 1$ and $\norm{\eta\Delta_{n,\weight}}_\infty\le 1$. Indeed, note for example that by Jensen's inequality for conditional expectations and Lemma~\ref{lemma:gradient:bounds}, we have $\norm{\eta\Delta_{n,\thv}}_\infty\le 2\eta\norm{\mbf{g}_{n,\thv}}_\infty\le 8\sqrt{\frac{n_\mu}{N}}\le 1$, for sufficiently large $N$. Therefore, once more we can combine regret bounds of stochastic mirror descent with local norms (Lemma~10 and Lemma~11 in~\cite{Jin:2020}) and the Azuma-Hoeffding inequality (Theorem~3.14 in~\cite{McDiarmid:1998} ) to end up that with probability at least $1-\delta$
    \begin{equation}\label{eq:putting-together-2}
    \max_{\bzv\in\mbf{Z}}\athroisma{n}{1}{N}{\innerprod{\Delta_n}{\bzv_n-\bzv}}\le\frac{1}{\eta}\underbrace{\max_{\bzv\in\mbf{Z}}B_R(\bzv||\bzv_1)}_{\le m_2}+\frac{\eta}{2}\underbrace{\athroisma{n}{1}{N}{\Exp\left[\norm{\Delta_{n,\thv}}_{\bthv_n}^2+\frac{\beta^2}{n_u}\norm{\Delta_{n,\lv}}_2^2+\norm{\Delta_{n,\weight}}_{\bweight_n}^2\mid\fn \right]}}_{\le m_3}+\frac{\eta m_1}{2}\sqrt{\frac{N\log\left(\frac{1}{\delta}\right)}{2}},
    \end{equation}
    where
     \[
     \left[\norm{\Delta_{n,\thv}}_{\thv_n}^2+\frac{\beta^2}{n_u}\norm{\Delta_{n,\lv}}_2^2+\norm{\Delta_{n,\weight}}_{\weight_n}^2 \right] -\Exp\left[\norm{\Delta_{n,\thv}}_{\thv_n}^2+\frac{\beta^2}{n_u}\norm{\Delta_{n,\lv}}_2^2+\norm{\Delta_{n,\weight}}_{\weight_n}^2\mid\fn \right]\le m_1.
     \]
  By using similar arguments as the ones regarding the term $T_2$, we get that with probability at least $1-\delta$
  \begin{equation}
  \max_{\bzv\in\mbf{Z}}\athroisma{n}{1}{N}{\innerprod{\Delta_n}{\bzv_n-\bzv}}\le\mathcal{O}\left(\frac{\beta\sqrt{N n_\mu}}{1-\gamma}\right)+\mathcal{O}\left(\frac{\beta\sqrt{N n_\mu}}{1-\gamma}\right)+\mathcal{O}\left(\frac{\beta\sqrt{n_\mu^3\log\left(\frac{1}{\delta}\right)}}{1-\gamma}\right).
  \end{equation}
  
   \textbf{Putting them all together.} Combining the bounds for the three terms $T_1$, $T_2$ and $T_3$ above and using a union bound, we get that with probability at least $1-\delta$
   \begin{equation}
    \widehat{\epsilon}_{\textup{sad}}(\zN)\le\mathcal{O}\left(\frac{\beta\sqrt{n_\mu \log\left(\frac{1}{\delta}\right)}}{(1-\gamma)\sqrt{N}}\right)+\mathcal{O}\left(\frac{\beta\sqrt{n_\mu^3 \log\left(\frac{1}{\delta}\right)}}{(1-\gamma)N}\right).
   \end{equation}
   Thus the sample complexity of mirror descent is $N=\max\left\{\mathcal{O}\left(\frac{\beta^2 n_\mu \log\left(\frac{1}{\delta}\right)}{(1-\gamma)^2\varepsilon^2}\right),\mathcal{O}\left(\frac{\beta\sqrt{n_\mu^3 \log\left(\frac{1}{\delta}\right)}}{(1-\gamma)\varepsilon}\right)\right\}$.

     \end{proofof} 
     
     \section{Numerical Experiments}\label{app:numerics}\label{E}
     
     In this section we analyze performance of the policy extracted from a saddle point of the empirical Lagrangian
     \[
     \widehat{\mcf{L}}(\mv,\uv,\weight)\triangleq\innerprod{\mv}{\cost_\weight-\op^*\uv} - \innerprod{\weight}{\efev} + \innerprod{\initial}{\uv}
     \]
     as a function of the number of trajectories $m$ used to compute the empirical feature expectation vector $\efev$.
     The length of each trajectory is set to $H=200$.
     These preliminary results do not consider approximation of $\mv$ and $\uv$ by linear combinations of features.
     
     Our environment is a $16$-by-$16$ gridworld with state space $\sspace=\{(i,j) \mid i\in[16], \, j\in[16]\}$ and action space $\aspace=\{\texttt{`up'},\texttt{`down'},\texttt{`left'},\texttt{`right'}\}$.
     The initial state distribution is uniform over $\sspace$.
     When an action is played, the executed direction will be the same as the intended with probability $0.7$ and any of the remaining directions with probability $0.1$.
     If the executed direction does not take the agent outside the grid, then the agent will move in the intended direction; otherwise, it will remain at the same cell.
     
     We generate $5$ cost vectors as $\cost_i = \bar{\cost}_i \otimes \ones$ with $\ones\in\Re^{|\aspace|}$, where $\otimes$ denotes the Kronecker product and elements of $\bar{\cost}_i\in\Re^{|\sspace|}$ are drawn from the set $\{0,-1,1\}$ with probabilities $\{0.8, 0.1, 0.1\}$, respectively.
     The expert policy is an optimal one for cost vector $\true=\cost_1$.
     
     Figure~\ref{fig:numerics}
     shows statistics for the performance of the learned policies depending on the number of sample trajectories $m$ used to compute $\efev$.
     For each $m$ we performed $20$ runs.
     The figure shows that, even for a modest number of sample trajectories, the performance of the extracted policy is close to that of the expert.
     
     \begin{figure}
     	\centering
     	\begin{tikzpicture}
     	\begin{semilogxaxis} [
     	width  = .5\textwidth,
     	height = .4\textwidth,
     	minor tick style = {draw=none},
     	ylabel style = {yshift=.5em},
     	ymin = -.38,
     	ymax = -.15,
     	xlabel = {Number of sample trajectories $m$},
     	ylabel = {$\mbs{\rho}_{\true}(\hat{\pi})$},
     	]
     	\addplot [mark=none, red, dashed] coordinates {(1e0,-0.3536) (1e3,-0.3536)};
     	
     	\addplot plot [blue, mark options={solid, blue,scale=.8}, error bars/.cd, y dir=both, y explicit] table [x=n_traj, y=rho_mean, y error plus=rho_std, y error minus=rho_std, col sep=comma] {data/sample_traj.csv};
     	
     	\addplot [name path=upper,draw=none] table[x=n_traj,y=rho_max,col sep=comma] {data/sample_traj.csv};
     	\addplot [name path=lower,draw=none] table[x=n_traj,y=rho_min,col sep=comma] {data/sample_traj.csv};
     	\addplot [fill=blue!10] fill between[of=upper and lower];
     	\end{semilogxaxis}
     	\end{tikzpicture}
     	\label{fig:numerics}
     	\caption{%
     		Performance of learned policies for a $16$-by-$16$ gridworld environment.
     		The dashed line shows $\mbs{\rho}_{\true}(\expert)$.
     		The error bars show the mean and standard deviation over $20$ runs and the shaded area provides bounds for the computed policies.
     	}
     \end{figure}
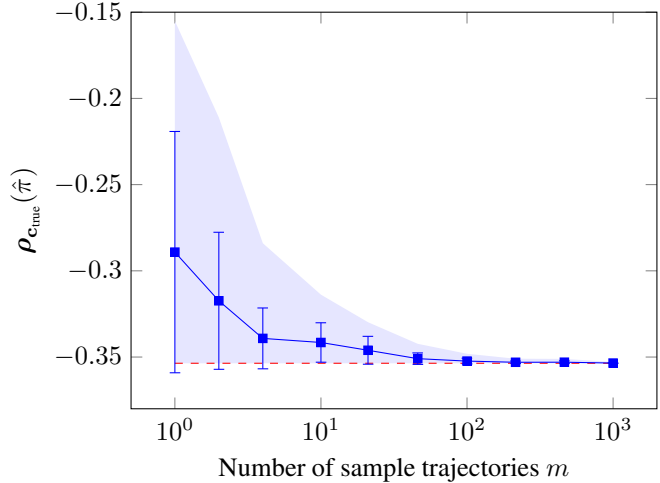

     	\section{A No-Regret Online Learning View}\label{F}
     	An online learning problem~\cite{Shalev:2012,Hazan:2016} describes the iterative interactions between a learner and an opponent. In round n, the learner chooses a \emph{decision} $\zv_n$ in the decision set $\mbf{Z}$, the opponent chooses a \emph{per-round loss function} $\ell_n:\mbf{Z}\rightarrow\ar$, and information about $\nabla \ell_n(\zv_n)$ is revealed to the learner.
     	
     	After $N$ rounds, the goal is to minimize the \emph{regret}
     	\[
     	\rm{Regret}_N(\mbf{Z})\triangleq \sum_{n=1}^N\ell_n(\zv_n)-\min_{z\in\mbf{Z}}\sum_{n=1}^N\ell_n(\zv).
     	\]
     	We argue that solving the online learning problem with $\mbf{Z}=\Theta\times\Lambda\times\mathcal{W}$ and per-round loss function
     	\begin{align*}
     	\ell_n(\zv)\triangleq&\rlag(\thv,\lv_n,\weight_n)-\rlag(\thv_n,\lv,\weight)\\
     	=& \innerprod{\mth-\mexp}{\cost_{\weight_n}-\op^*\uv_{\lv_n}}-\innerprod{\mv_{\thv_n}-\mexp}{\cw-\op^*\ul},
     	\end{align*}
     	is equivalent to solving the linearly-relaxed saddle-point problem~(\ref{LRALP}). 
     	
     	Note that the per-lound loss $\ell_n$ is linear in $\zv=(\thv,\lv,\weight)$. Moreover, the corresponding online learning problem is \emph{continuous} and \emph{predictable}. 
     	In other words, the per-round loss $\ell_n$ changes continuously with respect to the played decisions of the learner, and the environment is not completely adversarial since the transition matrix $\pmat$ and the expert policy $\expert$ remain the same across different rounds of online learning. A similar reduction of the forward RL problem to no-regret online learning has been investigated in~\cite{Cheng:2020}.
     	
     	To see why the two problems are equivalent, first of all note that $\nabla_{\zv}\ell_n(\zv_n)=
		\begin{bmatrix}
		\nabla_{\thv}\rlag(\zv)^\intercal		 &
		-\nabla_{\lv}\rlag(\zv)^\intercal		 &
		-\nabla_{\weight}\rlag(\zv)^\intercal	
		\end{bmatrix}^\intercal$. Moreover, by the bilinearity of the modified Lagrangian $\rlag$ we get that 
     		\begin{align*}
     		&{\epsilon}_{\text{sad}}(\thN,\lN,\wN)\\
     		=&\max_{\zv\in\mbf{Z}}\left[\rlag(\thN,\lv,\weight)-\rlag(\thv,\lN,\wN)\right]\\
     		=&\frac{1}{N} \max_{\zv\in\mbf{Z}}\sum_{i=1}^N\left[\rlag(\thv_n,\lv,\weight)-\rlag(\thv,\lv_n,\weight_n)\right]\\
     		=& \frac{1}{N}\left( \sum_{n=1}^N\ell_n(\zv_n)-\min_{z\in\mbf{Z}}\sum_{n=1}^N\ell_n(\zv)\right)=\frac{1}{N}\rm{Regret}_N(\mbf{Z}).
     		\end{align*}
     	
     	 Therefore, any stochastic first-order primal-dual algorithm for the bilinear Lagrangian $\rlag$ is equivalent to the corresponding online learning algorithm with first-order noisy feedback for the linear losses $\ell_n$. Moreover, the average $\zN$ is an $(\varepsilon,\delta)$-optimal saddle point to~(\ref{LRALP}) if and only if the average regret $\frac{1}{N}\rm{Regret}_N(\mbf{Z})<\varepsilon$ with probability $1-\delta$.  In particular, Algorithm~\ref{alg:main} can be seen as online mirror descent for the  empirical per-round loss function $\widehat{\ell}_n$.
     	
     	It is worth noting that this online learning approach differs from the one in~\cite{ross2011reduction} where interaction with the expert is required. It is also different from the game-theoretic approach in~\cite{Syed:2007} where the forward RL problem has to be solved repeatedly.
     
     \section{Future Directions}\label{G}
     There is an emerging body of literature~\cite{DeFarias:2003,Abbasi-Yadkori:2014,Chen:2018,Sutter:2017,Laksh:2018,MohajerinEsfahani:2018,Wang:2019,Lee:2019a,Banjac:2019,Beuchaut:2020,Martinelli:2020,Cheng:2020,Jin:2020,Shariff:2020,Bas-Serrano:2020a,Bas-Serrano:2020b} that studies ALP for the forward RL. In this paper we applied ALP techniques for the design of a provably efficient IRL algorithm. We hope that our techniques will be useful for future algorithm designers and will lay foundations for more exhaustive research in this direction. We discuss a few interesting directions.

     \begin{itemize}
    \item \emph{Extension to continuous MDPs.} In this case the primal-dual LPs are infinite-dimensional. Indeed, the occupancy measures $\mv$ live in a measure space, the $\uv$ variables live in a function space, and the constraints are uncountably infinite. Also the vector norms are replaced by the total variation norm of measures and the sup-norm of functions. We have to impose appropriate regularity assumptions on the MDP model, so that the linear operators $\op$ and $\op^*$ are well-defined and continuous. Moreover, we have to show that strong duality holds, since this is not anymore straightforward. We expect, that these technical issues can be tackled by similar tools as for forward MDPs (Ch.6 in~\cite{Hernandez-Lerma:1996}). A more crucial point is that while one can adopt our approach, there is no analytical expression for the extracted policy $\pi_{\thN}$ as for finite MDPs. A remedy for this difficulty could be to add a sparse solution promoting regularizer for the $\mv$-variable as proposed e.g., in~\cite{Boyd:2017}. 
    
    \item \emph{Extension to more general and other IRL settings.} Our methodology is naturally extensible to the more general case of RL with convex cosnstraints~\cite{Miryoosefi:2019}. A less straightforward extension is IRL with risk-sensitive objectives~\cite{Brown:2020a}. We consider this as an exciting future direction.
     	\item \emph{Imitation learning as $f$-divergence minimization.} In this work we utilize Lagrangian duality and propose a primal-dual algorithm for the resulting bilinear minimax formulation of the LfD problem. On the other hand, \cite{Nachum:2019a,Nachum:2020} regularize the objective of the LP formulation for the policy evaluation problem with an $f$-divergence, so that the  Fenchel-Rockafellar dual is an unconstrained convex program over $Q$-functions. In this way, one avoids possible complications due to the nested min-max optimization in the Lagrangian (e.g., in our case the need of strong value function features), but is not able to use standard stochastic optimization techniques because the gradient estimates become biased. Moreover, one can derive an actor-critic-type method~\cite{Nachum:2019b}, in which a $Q$-value is learned to minimize the \emph{squared Bellman error} and a policy $\pi$ is learned to maximize the $Q$-values. The same reasoning can be applied for imitation learning with $f$-divergences~\cite{Ghasemipour:2020,Kostrikov:2020}, which is a generalization of the maximum causal entropy IRL framework in~\cite{Ho:2016b}. While these algorithms are theoretically justified up to a point, they come without convergence guarantees. Indeed, the squared Bellman error is not convex and can lead to instabilities. A possible remedy is to use the convex and smooth \emph{logistic Bellman error} introduced in~\cite{Bas-Serrano:2020a} which arises from a different regularized $Q$-LP formulation. This approach is quite intriguing, requires careful design and analysis, and will be the subject of future work. 
     \end{itemize}

	\end{document}